\definecolor{orgin}{RGB}{254, 138, 113}
\definecolor{second}{RGB}{99, 172, 229}
\newenvironment{qbox}
{\begin{tcolorbox}[enhanced jigsaw, drop shadow=black!50!white,colback=white, width=0.95\linewidth, center, left=2pt,right=2pt,top=1pt,bottom=1pt]}
{\end{tcolorbox}}
\definecolor{mod}{HTML}{b66afc}
\definecolor{ballblue}{HTML}{338EA7}
\definecolor{darkblue}{HTML}{183D5E}
\definecolor{lightseagreen}{HTML}{759D39}
\definecolor{lightred}{HTML}{DD7769}
\definecolor{org}{HTML}{F8A145}
\definecolor{blu}{HTML}{63ACE5}
\definecolor{c1}{HTML}{41B3A3}
\definecolor{c2}{HTML}{3500D3}
\definecolor{mark}{RGB}{153, 204, 255}
\newtheorem{theorem}{Theorem}
\newtheorem{lemma}{Lemma}
\newtheorem{definition}{Definition}
\newtheorem{corollary}{Corollary}
\newtheorem{proposition}{Proposition}
\newtheorem{assumption}{Assumption}
\newtheorem{remark}{Remark}
\newcommand{\ie}{\textit{i}.\textit{e}.}
\newcommand{\stt}{\textit{s}.\textit{t}.}
\newcommand{\wrt}{\textit{w}.\textit{r}.\textit{t}. }
\newcommand{\tabincell}[2]{\begin{tabular}{@{}#1@{}}#2\end{tabular}}
\newtcolorbox{modbox}{
        colframe=cyan!5!white,
        colback =cyan!5!white,
        top=0mm, bottom=0mm, left=1mm, right=1mm,
        arc=1mm,
        fontupper=\color{blue!70!black},
        fonttitle=\bfseries\color{blue!70!black},
        notitle,
        }
\title{Asymptotically Unbiased Instance-wise Regularized Partial AUC Optimization: Theory and Algorithm}
\def \np {n_+}
\def \nn {n_-}
\def \npa {n_+^\alpha}
\def \nnb {n_-^\beta}
\def \tp {\mathrm{TPAUC}}
\def \op {\mathrm{OPAUC}}
\def \ehat {\hat{\mathbb{E}}}
\def \cmin {f, (a,b) \in [0,1]^2}
\def \cmins {(a,b) \in [0,1]^2}
\def \spaces {\qquad \qquad \qquad \qquad \qquad \qquad}
\def \cminl {f, (a,b) \in [0,1]^2, s' \in \Omega_{s'}}
\def \cmax {\gamma \in \Omega_\gamma}
\def \efb {\eta_\beta(f)}
\def \efa {\eta_\alpha(f)}
\def \hefb {\hat{\eta}_\beta(f)}
\def \hefa {\hat{\eta}_\alpha(f)}
\def \exdp {\underset{\bm{x}\sim \mathcal{D}_{\mathcal{P}}}{\mathbb{E}}}
\def \exdn {\underset{\bm{x}'\sim \mathcal{D}_{\mathcal{N}}}{\mathbb{E}}}
\newcommand{\colorg}[1]{{\color{org}#1}}
\newcommand{\colsec}[1]{{\color{second}#1}}
\newcommand{\colblue}[1]{{\color{blue}#1}}
\newcommand{\colbit}[1]{{\color{Bittersweet}#1}}
\author{\parbox{13cm}
  {\centering
    {\large \quad\quad Huiyang Shao$^{1,2}$ \ \ \ \ \ \ \ \ \  Qianqian Xu$^{1}$\thanks{Corresponding authors.} \ \ \ \ \ \ \ \ \ 
    Zhiyong Yang$^{2}$ \ \ \ \ \ \ \ \ \ \ \  \\ Shilong Bao$^{3,4}$ \ \ \ \ \ \ \ \ \ \ \ \ \ Qingming Huang$^{1,2,5,6*}$ }\\
    {\normalsize \normalfont
    $^1$ Key Lab of Intell. Info. Process., Inst. of Comput. Tech., CAS\\
    $^2$ School of Computer Science and Tech., University of Chinese Academy of Sciences\\
    $^3$ State Key Lab of Info. Security, Inst. of Info. Engineering, CAS\\
    $^4$ School of Cyber Security, University of Chinese Academy of Sciences\\
    $^5$ BDKM, University of Chinese Academy of Sciences\\
    $^6$ Peng Cheng Laboratory\\
    }
    {\tt\small shaohuiyang21@mails.ucas.ac.cn xuqianqian@ict.ac.cn \quad\quad \\ yangzhiyong21@ucas.ac.cn baoshilong@iie.ac.cn qmhuang@ucas.ac.cn
    }
  }
}
\begin{document}

\maketitle
\begin{abstract}
    The Partial Area Under the ROC Curve (PAUC), typically including One-way Partial AUC (OPAUC) and Two-way Partial AUC (TPAUC), measures the average performance of a binary classifier within a specific false positive rate and/or true positive rate interval, which is a widely adopted measure when decision constraints must be considered. Consequently, PAUC optimization has naturally attracted increasing attention in the machine learning community within the last few years. Nonetheless, most of the existing methods could only optimize PAUC approximately, leading to inevitable biases that are not controllable. Fortunately, a recent work presents an unbiased formulation of the PAUC optimization problem via distributional robust optimization. However, it is based on the pair-wise formulation of AUC, which suffers from the limited scalability w.r.t. sample size and a slow convergence rate, especially for TPAUC. To address this issue, we present a simpler reformulation of the problem in an asymptotically unbiased and instance-wise manner. For both OPAUC and TPAUC, we come to a nonconvex strongly concave minimax regularized problem of instance-wise functions. On top of this, we employ an efficient solver enjoys a linear per-iteration computational complexity w.r.t. the sample size  and a time-complexity of $O(\epsilon^{-1/3})$ to reach a $\epsilon$ stationary point. Furthermore, we find that the minimax reformulation also facilitates the theoretical analysis of generalization error as a byproduct. Compared with the existing results, we present new error bounds that are much easier to prove and could deal with hypotheses with real-valued outputs. Finally, extensive experiments on several benchmark datasets demonstrate the effectiveness of our method.
\end{abstract}

\section{Introduction}
AUC refers to the Area Under the Receiver Operating Characteristic (ROC) curve \cite{J1982The}, where the ROC curve is obtained by plotting the True Positive Rate (TPR) against the False Positive Rate (FPR) of a given classifier for all possible thresholds. Since it is insensitive to the class distribution, AUC has become one of the standard metrics for long-tail, and imbalanced datasets \cite{J1982The, mcclish1989analyzing, yang2022auc}. Consequently, AUC optimization has attracted increasing attention in the machine learning community ever since the early 2000s \cite{graepel2000large, cortes2003auc, yan2003optimizing, joachims2005support}. Over the last two decades, research on AUC optimization has evolved from the simplest linear models and decision trees \cite{pepe2000combining, freund2003efficient, rakotomamonjy2004support, ying2016stochastic} to state-of-the-art deep learning architectures \cite{liu2019stochastic, guo2020communication, yang2021deep, yuan2021large, yuan2021compositional, wang2022momentum}. With such remarkable success, one can now easily apply AUC optimization to deal with various real-world problems ranging from financial fraud detection\cite{huang2022auc, choi2017machine, mubalaike2018deep}, spam email detection \cite{narasimhan2013structural}, to medical diagnosis \cite{narasimhan2013structural, yang2022auc, yang2021deep, yuan2021large}, etc.

\begin{figure}[h]
    \centering
    \includegraphics[scale=0.8]{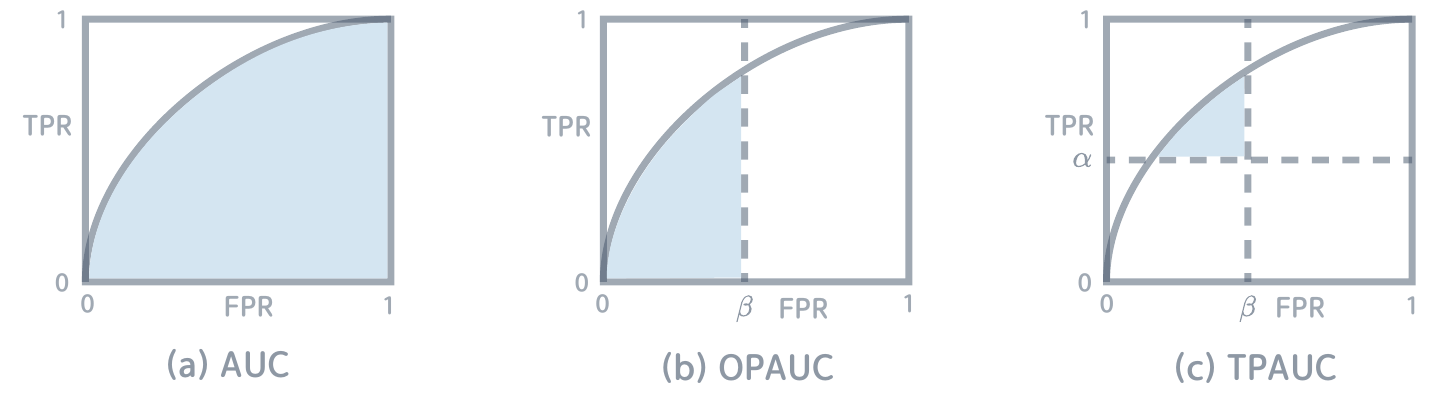}
    \caption{The comparison among different AUC variants. (a) The area under the ROC curve (AUC). (b) The one-way partial AUC (OPAUC). (c) The two-way partial AUC (TPAUC).}
    \label{fig1}
\end{figure}

However, in such long-tailed applications, we are often interested in a specific region in the ROC curve where its area is called Partial AUC (PAUC). As illustrated in Fig.\ref{fig1}, there are two types of PAUC. Here, One-way Partial AUC (OPAUC) measures the area within an FPR interval ( $0 \le \mathrm{FPR} \le \beta$); while Two-way Partial AUC (TPAUC) measures the area with $ \mathrm{FPR} \le \beta$, $\mathrm{TPR} \ge \alpha$. Unlike the full AUC, optimizing PAUC requires selecting top-ranked or/and bottom-ranked instances, leading to a hard combinatorial optimization problem. Many efforts have been made to solve the problem \cite{2003Partial, narasimhan2013structural, narasimhan2017support, kumar2021implicit, yang2021all}. However, the majority of them rely on full-batch optimization and the approximation of the top (bottom)-$k$ ranking process, which suffers from immeasurable biases and undesirable efficiency. Most recently, researchers have started to explore mini-batch PAUC optimization for deep learning models. \cite{yang2021all} proposed a novel end-to-end optimization framework for PAUC. This formulation has a fast convergence rate with the help of a stochastic optimization algorithm, but the estimation of PAUC is still biased. Later, \cite{zhu2022auc} proposed a Distributional Robust Optimization (DRO) framework for PAUC optimization, where the bias can be eliminated by a clever reformulation and the compositional SGD algorithms \cite{qi2021stochastic}. However, they adopt the pair-wise loss function which has limited scalability w.r.t. sample size and $O(\epsilon^{-4})$/$O(\epsilon^{-6})$ time complexity to reach the $\epsilon$-stationary point for $\mathrm{OPAUC}$/$\mathrm{TPAUC}$. Considering the efficiency bottleneck comes from the pair-wise formulation, we will explore the following question in this paper:

\begin{qbox}
  \emph{\textbf{Can we design a simpler, nearly asymptotically unbiased and instance-wise formulation to optimize OPAUC and TPAUC in an efficient way?}}
\end{qbox}

To answer this question, we propose an efficient and nearly unbiased optimization algorithm (the bias vanishes asymptotically when $\kappa \rightarrow 0$) for regularized PAUC maximization with a faster convergence guarantee. The comparison with previous results are listed in Tab.\ref{tab1}. We consider both $\mathrm{OPAUC}$ and $\mathrm{TPAUC}$ maximization, where for $\mathrm{OPAUC}$, we focus on maximizing PAUC in the region: $\mathrm{FPR} \le \beta$ and for $\mathrm{TPAUC}$ we focus on the region: $\mathrm{FPR} \le \beta$ and $\mathrm{TPR} \ge \alpha$. We summarize our contributions below.

\begin{table}[tbp]
    \small
    \centering  
    \caption{
    Comparison with existing partial AUC algorithms. The convergence rate represents the number of iterations after which an algorithm can find an $\epsilon$-stationary point, where $\epsilon$-sp is $\epsilon$-stationary point.  $\triangle$ implies a natural result of non-convex SGD. $n_+^B$ ($n_-^B$ resp.) is the number of positive (negative resp.) instances for each mini-batch $B$. 
    }
    \begin{tabular}{c|ccccc}
        \toprule
        ~ & SOPA \cite{zhu2022auc}&SOPA-S \cite{zhu2022auc}&TPAUC \cite{yang2021all}&Ours \\  
        \midrule
        Convergence Rate ($\mathrm{OPAUC}$)&$O(\epsilon^{-4})$&$O(\epsilon^{-4})$&$O(\epsilon^{-4})^{\triangle}$&$O(\epsilon^{-3})$ \\
        Convergence Rate ($\mathrm{TPAUC}$)&$O(\epsilon^{-6})$&$O(\epsilon^{-4})$&$O(\epsilon^{-4})^{\triangle}$&$O(\epsilon^{-3})$ \\
        Convergence Measure & $\epsilon$-sp (non-smooth) & $\epsilon$-sp & $\epsilon$-sp & $\epsilon$-sp\\
        Smoothness & non-smooth & smooth & smooth & smooth\\
        Unbiasedness&$\surd$&$\times$&$\times$& \makecell[c]{with bias $O(1/\kappa)$ \\ when $\omega = 0 $}  \\
        Per-Iteration Time Complexity&$O(n_+^B n_-^B)$&$O(n_+^B n_-^B)$&$O(n_+^B n_-^B)$&$O(n_+^B + n_-^B)$ \\
        \bottomrule
    \end{tabular}
    \label{tab1}  
\end{table}

\begin{itemize}
    \item With a proper regularization, we propose a nonconvex strongly concave minimax instance-wise formulation for $\mathrm{OPAUC}$ and $\mathrm{TPAUC}$ maximization. On top of our proposed formulation,  we employ an efficient stochastic minimax algorithm that finds a $\epsilon$-stationary point within $O(\epsilon^{-3})$ iterations.
    \item We conduct a generalization analysis of our proposed methods. Our instance-wise reformulation can overcome the interdependent issue of the original pair-wise generalization analysis. The proof is much easier than the existing results. Moreover, compared with \cite{narasimhan2017support, yang2021all}, it can be applied to any real-valued hypothesis functions other than the hard-threshold functions.
    \item We conduct extensive experiments on multiple imbalanced image classification tasks. The experimental results speak to the effectiveness of our proposed methods.
\end{itemize}

\section{Preliminaries}
\textbf{Notations.} In this section, we give some definitions and preliminaries about $\mathrm{OPAUC}$ and
$\mathrm{TPAUC}$. Let $\mathcal{X} \subseteq \mathbb{R}^d$ be the input space, 
$\mathcal{Y}=\{0, 1\}$ be the label space. We denote $\mathcal{D}_\mathcal{P}$ and $\mathcal{D}_\mathcal{N}$ as positive and negative instance distribution, respectively. Let $S=\{\bm{z}=(\bm{x}_i,y_i)\}_{i=1}^n$ be a set of training data drawn from distribution $\mathcal{D}_\mathcal{Z}$, where $n$ is the number of samples. Let $\mathcal{P}$ ($\mathcal{N}$ resp.) be a set of positive (negative resp.)
instances in the dataset. \ul{\textbf{In this paper we only focus on the scoring functions $f:\mathcal{X}\mapsto [0,1]$}}. The output range can be simply implemented by any deep neural network with sigmoid outputs. 

\textbf{Standard AUC.} The standard AUC calculates the entire area under the ROC curve. For mathematical convenience, our arguments start with the pair-wise reformulation of AUC. Specifically, as shown in \cite{J1982The}, $\mathrm{AUC}$ measures the probability of a positive instance having a higher score than a negative instance:
\begin{equation}
    \mathrm{AUC}(f) = \underset{\bm{x}\sim \mathcal{D}_{\mathcal{P}},\bm{x}'\sim \mathcal{D}_{\mathcal{N}}}{\Pr}\left[f(\bm{x})> f(\bm{x}')\right].\\
\end{equation}
\textbf{$\mathrm{\mathbf{OPAUC}}$. } As mentioned in the introduction, instead of considering the entire region of ROC, we focus on two forms of PAUC, namely TPAUC and OPAUC. According to \cite{2003Partial}, $\mathrm{OPAUC}$ is equivalent to the probability of a positive instance $\bm{x}$ being scored higher than a negative instance $\bm{x}'$ within the specific range $f(\bm{x}')\in[\eta_\beta(f), 1]$ \stt $\underset{\bm{x}'\sim\mathcal{\mathcal{D}_\mathcal{N}}}{\Pr}[f(\bm{x}')\geq\eta_{\beta}]=\beta$:
\begin{equation}
\begin{aligned}
    \mathrm{OPAUC}(f) &= \underset{\bm{x}\sim \mathcal{D}_{\mathcal{P}},\bm{x}'\sim \mathcal{D}_{\mathcal{N}}}{\Pr}\left[f(\bm{x})> f(\bm{x}'),  f(\bm{x}')\geq  \eta_\beta(f) \right].
\end{aligned}
\label{OPAUC}
\end{equation}
Practically, we do not know the exact data distributions $\mathcal{D}_{\mathcal{P}}$, $\mathcal{D}_{\mathcal{N}}$ to calculate Eq.\eqref{OPAUC}. Therefore, we turn to the empirical estimation of Eq.\eqref{OPAUC}. Given a finite dataset $S$ with $n$ instances, let $n_+$, $n_-$ be the numbers of positive/negative instances, respectively. For the $\mathrm{OPAUC}$, its empirical estimation could be expressed as \cite{narasimhan2013structural}:
\begin{equation}
    \begin{aligned}
        \hat{\mathrm{AUC}}_{\beta}(f, S) = 1-\sum_{i=1}^{n_+} \sum_{j=1}^{n_-^\beta}\frac{\ell_{0,1}{\left(f(\bm{x}_i)- f(\bm{x}_{[j]}')\right)}}{n_+ n_-^\beta},
        \label{OPAUCM}
    \end{aligned}
\end{equation}
where $n_-^\beta=\lfloor n_-\cdot \beta \rfloor$; $\bm{x}'_{[j]}$ denotes the $j$-th largest score among negative samples; $\ell_{0,1}(t)$ is the $0-1$ loss, which returns $1$ if $t <0$ and $0$ otherwise.

\textbf{$\mathrm{\mathbf{TPAUC}}$.}  More recently, \cite{yang2019two} argued that an efficient classifier should have low $\mathrm{FPR}$ and high $\mathrm{TPR}$ simultaneously. Therefore, we also study a more general variant called Two-way Partial AUC ($\mathrm{TPAUC}$), where the restricted regions satisfy $\mathrm{TPR}\geq \alpha$ and $\mathrm{FPR}\leq \beta$. Similar to $\mathrm{OPAUC}$, $\mathrm{TPAUC}$ measures the probability that a positive instance $\bm{x}$ ranks higher than a negative instance $\bm{x}'$ where $f(\bm{x})\in[0, \eta_{\alpha}(f)]$, \stt $\underset{\bm{x}\sim\mathcal{D_\mathcal{P}}}{\Pr}[f(\bm{x})\leq\eta_{\alpha}]=\alpha, f(\bm{x}')\in[\eta_\beta(f), 1]$ \stt $\underset{\bm{x}'\sim\mathcal{D_\mathcal{N}}}{\Pr}[f(\bm{x}')\geq\eta_{\beta}]=\beta$.
\begin{equation}
\begin{aligned}
    \mathrm{TPAUC}(f) &= \underset{\bm{x}\sim \mathcal{D}_{\mathcal{P}},\bm{x}'\sim \mathcal{D}_{\mathcal{N}}}{\Pr}\left[f(\bm{x})> f(\bm{x}'), f(\bm{x})\leq \eta_{\alpha}(f), f(\bm{x}')\geq  \eta_\beta(f) \right].
    \label{TPAUC}
\end{aligned}
\end{equation}
Similarly to $\op$, for the $\tp$, we adopt its empirical estimation \cite{yang2019two, yang2021all}:
\begin{equation}
    \begin{aligned}
        \hat{\mathrm{AUC}}_{\alpha, \beta}(f, S) = 1- \sum_{i=1}^{n_+^\alpha} \sum_{j=1}^{n_-^\beta}\frac{\ell_{0,1}{\left(f(\bm{x}_{[i]})- f(\bm{x}'_{[j]})\right)}}{n_+^\alpha n_-^\beta},
        \label{TPAUCM}
    \end{aligned}
\end{equation}
where $n_+^\alpha=\lfloor n_+\cdot \alpha \rfloor$ and $\bm{x}_{[i]}$ is $i$-th smallest score among all positive instances.  

\section{Problem Formulation}\label{sec:method}
In this section, we introduce how to optimize $\mathrm{OPAUC}$ and $\mathrm{TPAUC}$ in an asymptotically unbiased instance-wise manner.  
Note that Eq.\eqref{OPAUCM} and Eq.\eqref{TPAUCM} are hard to optimize since it is complicated to determine the positive quantile function $\eta_{\alpha}(f)$ and the negative quantile function $\eta_\beta(f)$. So we can not obtain the bottom-ranked positive instances and top-ranked negative instances directly. In this section, we will elaborate on how to tackle these challenges. 

\subsection{Optimizing the OPAUC}\label{sec:opauc}
According to Eq.\eqref{OPAUCM}, given a surrogate loss $\ell$ and the finite dataset $S$, maximizing $\mathrm{OPAUC}$ and $\hat{\mathrm{AUC}}_{\beta}(f, S)$ is equivalent to solving the following problems, respectively:
\begin{equation}
        \underset{f}{\min}~  \mathcal{R}_{\beta}(f)= \mathbb{E}_{\bm{x} \sim \mathcal{D}_\mathcal{P}, \bm{x}'\sim \mathcal{D}_\mathcal{N}} \left[\mathbb{I}_{f(\bm{x}') \ge \eta_\beta(f)} \cdot \ell(f(\bm{x})- f(\bm{x}'))\right],
        \label{OPAUCO}
    \end{equation}
\begin{equation}
\begin{aligned}
    \underset{f}{\min} \ \hat{\mathcal{R}}_{\beta}(f, S)= \sum_{i=1}^{n_+} \sum_{j=1}^{n_-^\beta}\frac{\ell{\left(f(\bm{x}_i)- f(\bm{x}'_{[j]})\right)}}{n_+ n_-^\beta}.
    \label{OPAUCO}
\end{aligned}
\end{equation}
\underline{\textbf{{\color{blue}{Step 1: Instance-wise Reformulation.}}}} Here, to simplify the reformulation, we will use the most popular \textbf{surrogate squared loss} $\ell(x) =(1-x)^2$. Under this setting,  the following theorem shows an instance-wise reformulation of the $\op$ optimization problem (please see  Appendix.\ref{section:proofs_section4} for the proof):
\begin{theorem}
\label{theorem:1}
Assuming that $f(\bm{x})\in[0,1]$, $\forall \bm{x}\in\mathcal{X}$, $F_{op}(f,a,b,\gamma, t, \bm{z})$ is defined as:
\begin{equation}
    \begin{aligned}
F_{op}(f,a,b,\gamma,t, \bm{z})=&\Large[(f(\bm{x})-a)^2- 
2(1+\gamma)f(\bm{x})\Large]y/p-\gamma^2\\
&\Large[(f(\bm{x})-b)^2+2(1+\gamma)f(\bm{x})\Large]\cdot[(1-y)\mathbb{I}_{f(\bm{x})\geq t}]/[(1-p)\beta],
\end{aligned}
\label{eq:fop}
\end{equation}
where $y=1$ for positive instances, $y=0$ for negative instances and we have the following conclusions:
\begin{enumerate}[leftmargin=20pt]
    \item[(a)] (\textbf{Population Version}.) We have:
    \begin{equation}
        \underset{f}{\min} \ {\mathcal{R}}_{\beta}(f) \Leftrightarrow \underset{\cmin}{\min}\ \underset{\gamma\in[-1,1]}{\max} \ 
    \colblue{\underset{\bm{z}\sim \mathcal{D}_\mathcal{Z}}{\mathbb{E}}}
    \left[F_{op}(f,a,b,\gamma, \colblue{\eta_\beta(f)},\bm{z})\right],
    \label{eq:minmaxopauc1}
    \end{equation}
    where $\colblue{\eta_\beta(f)}=\arg\min_{\colblue{\eta_{\beta}} \in\mathbb{R}}\left[\mathbb{E}_{\bm{x}'\sim \mathcal{D}_{\mathcal{N}}}[\mathbb{I}_{f(\bm{x}')\geq\ \colblue{\eta_{\beta}}}]=\beta\right]$.
    \item[(b)] (\textbf{Empirical Version}.) Moreover, given a training dataset $S$ with sample size $n$, denote:
    \begin{equation*}
    \colbit{\underset{\bm{z}\sim S}{\ehat}}[F_{op}(f,a,b,\gamma,\colbit{\hat{\eta}_\beta(f)}, \bm{z})] = \frac{1}{n}\sum_{i=1}^n F_{op}(f,a,b,\gamma,{\colbit{\hat{\eta}_\beta(f)}}, \bm{z}_i),
    \end{equation*}
    where $\colbit{\hat{\eta}_\beta(f)}$ is the empirical quantile of the negative instances in $S$. We have:
    \begin{equation}
        \underset{f}{\min} \ \hat{\mathcal{R}}_{\beta}(f, S) \Leftrightarrow \underset{\cmin}{\min}\ \underset{\gamma\in[-1,1]}{\max} \ 
    \colbit{\underset{\bm{z}\sim S}{\ehat}}
    \left[F_{op}(f,a,b,\gamma, \colbit{\hat{\eta}_\beta(f)}, \bm{z} ) \right],
    \label{eq:minmaxopauc2}
    \end{equation} 
\end{enumerate}
\end{theorem}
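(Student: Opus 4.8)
The plan is to fix a scoring function $f$ with range in $[0,1]$ and establish a pointwise identity of the form
\[
\min_{(a,b)\in[0,1]^2}\ \max_{\gamma\in[-1,1]}\ \mathbb{E}_{\bm{z}\sim\mathcal{D}_\mathcal{Z}}\big[F_{op}(f,a,b,\gamma,\eta_\beta(f),\bm{z})\big]\;=\;\beta^{-1}\mathcal{R}_\beta(f)+c,
\]
where $c$ does not depend on $f$ (its precise value is immaterial, since only the minimizer in $f$ matters for the claimed equivalence ``$\Leftrightarrow$''). Minimizing both sides over $f$ then yields Eq.~\eqref{eq:minmaxopauc1}, and rerunning the argument with empirical expectations and the empirical quantile gives Eq.~\eqref{eq:minmaxopauc2}.

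\emph{Step 1 (expand the pair-wise risk).} I would substitute $\ell(x)=(1-x)^2$ into $\mathcal{R}_\beta(f)$, expand $(1-f(\bm{x})+f(\bm{x}'))^2$, and use independence of $\bm{x}\sim\mathcal{D}_\mathcal{P}$ and $\bm{x}'\sim\mathcal{D}_\mathcal{N}$ together with the defining property $\Pr_{\bm{x}'\sim\mathcal{D}_\mathcal{N}}[f(\bm{x}')\ge\eta_\beta(f)]=\beta$. This rewrites $\mathcal{R}_\beta(f)$ in terms of the positive moments $q_P=\mathbb{E}_{\mathcal{D}_\mathcal{P}}[f(\bm{x})]$, $\mathbb{E}_{\mathcal{D}_\mathcal{P}}[f(\bm{x})^2]$, and the truncated negative moments $\mathbb{E}_{\mathcal{D}_\mathcal{N}}[f(\bm{x}')^k\mathbb{I}_{f(\bm{x}')\ge\eta_\beta(f)}]$ for $k=1,2$; dividing by $\beta$ converts the latter into conditional moments $\tilde q_N,\tilde m_N$ over the top-$\beta$ fraction of negatives. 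The only non-elementary ingredient here is the definition of $\eta_\beta(f)$, which is given.

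\emph{Step 2 (variational identities; non-binding constraints).} I would then invoke three elementary facts: $\min_a\mathbb{E}_{\mathcal{D}_\mathcal{P}}[(f(\bm{x})-a)^2]=\mathbb{E}_{\mathcal{D}_\mathcal{P}}[f^2]-q_P^2$ attained at $a^\star=q_P$; the analogue for the truncated negative quadratic, attained at $b^\star=\tilde q_N$; and $\max_\gamma\{2\gamma c'-\gamma^2\}=(c')^2$ attained at $\gamma^\star=c'$, applied with $c'=\tilde q_N-q_P$. Since $f$ is $[0,1]$-valued we get $a^\star,b^\star\in[0,1]$ and $\gamma^\star=\tilde q_N-q_P\in[-1,1]$, so restricting $(a,b)$ to $[0,1]^2$ and $\gamma$ to $[-1,1]$ does not change the optima --- this is exactly where the hypothesis $f(\bm{x})\in[0,1]$ is used. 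Moreover, for fixed $f$ the map $\mathbb{E}_{\bm{z}}[F_{op}]$ is separately convex in $a$, convex in $b$, and strongly concave in $\gamma$, with no bilinear coupling among $a,b,\gamma$, so the inner $\min$ and $\max$ commute and split into three closed-form subproblems.

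\emph{Step 3 (assemble, then the empirical version).} I would rewrite the sample weights inside $F_{op}$ via the reweighting identities $\mathbb{E}_{\bm{z}}[\phi(\bm{x})\,y/p]=\mathbb{E}_{\mathcal{D}_\mathcal{P}}[\phi(\bm{x})]$ and $\mathbb{E}_{\bm{z}}[\phi(\bm{x})(1-y)\mathbb{I}_{f(\bm{x})\ge t}/((1-p)\beta)]=\beta^{-1}\mathbb{E}_{\mathcal{D}_\mathcal{N}}[\phi(\bm{x}')\mathbb{I}_{f(\bm{x}')\ge t}]$ with $t=\eta_\beta(f)$, substitute the optimal $a^\star,b^\star,\gamma^\star$ from Step 2, and simplify; comparing with Step 1 shows the resulting value equals $\beta^{-1}\mathcal{R}_\beta(f)$ up to an $f$-independent constant, which proves part (a) after taking $\min_f$. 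Part (b) is the same computation verbatim with $\mathbb{E}_{\mathcal{D}_\mathcal{P}}$ replaced by the average over the $n_+$ positives, $\mathbb{E}_{\mathcal{D}_\mathcal{N}}[\cdot\,\mathbb{I}_{\,\cdot\,\ge\hat\eta_\beta(f)}]$ by the sum over the top-$n_-^\beta$ negatives, and $p$ by $n_+/n$. I expect the only delicate point --- the main obstacle --- to be checking that with $p=n_+/n$ the empirical weight $\tfrac1n\sum_i(1-y_i)\mathbb{I}_{f(\bm{x}_i)\ge\hat\eta_\beta(f)}/((1-p)\beta)$ reduces to an average over exactly $n_-^\beta=\lfloor n_-\beta\rfloor$ negatives with normalization $1/n_-^\beta$, so that the reformulated objective matches $\hat{\mathcal{R}}_\beta(f,S)$ exactly; this amounts to reading $\beta$ as the realized fraction $n_-^\beta/n_-$ in the empirical statement (or assuming $n_-\beta\in\mathbb{Z}$). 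Everything else is a routine quadratic calculation.
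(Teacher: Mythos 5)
Your proposal is correct and follows essentially the same route as the paper's proof: expand the squared surrogate under the conditional (top-$\beta$ negative) distribution, encode the two variances via $\min_a,\min_b$ and the squared mean gap via $\max_\gamma\{2\gamma c'-\gamma^2\}$, use $f(\bm{x})\in[0,1]$ to see the optimizers $a^*,b^*,\gamma^*$ lie in the stated boxes, and reweight by $y/p$ and $(1-y)\mathbb{I}_{f\ge t}/((1-p)\beta)$ to obtain the instance-wise form. Your explicit remark about the empirical normalization ($\beta$ read as $n_-^\beta/n_-$, i.e.\ the flooring issue) is a point the paper glosses over with "the same result holds for the empirical version," so it is a welcome refinement rather than a deviation.
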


\underline{\textbf{\color{blue}{Step 2: Differentiable Sample Selection.}}} Thm.\ref{theorem:1} provides a support to convert the pair-wise loss into instance-wise loss for $\mathrm{OPAUC}$. However, the minimax problem Eq.\eqref{eq:minmaxopauc2} is still difficult to solve due to the operation $\mathbb{I}_{f(\bm{x}')\geq\eta_{\beta}(f)}$, which requires selecting top-ranked negative instances. To make the sample selection process differentiable, we adopt the following lemma.
\begin{lemma}
\label{lemma:1}
$\sum_{i=1}^k x_{[i]}$ is a convex function of $(x_1,\cdots,x_n)$ where $x_{[i]}$ is the top-i element of a set $\{x_1,x_2,\cdots,x_n\}$. Furthermore, for $x_i,i=1,\cdots,n$, we have $\frac{1}{k}\sum_{i=1}^kx_{[i]}=\min_{s}\{s+\frac{1}{k}\sum_{i=1}^n[x_i-s]_+\}$, where $[a]_+=\max\{0,a\}$. The population version is $\mathbb{E}_{x}[x\cdot\mathbb{I}_{x\geq \eta(\alpha)}]=\min_s \frac{1}{\alpha} \mathbb{E}_{x}[\alpha s+[x-s]_+]$, where $\eta(\alpha)=\arg\min_{\eta\in\mathbb{R}}[\mathbb{E}_{x}[\mathbb{I}_{x\geq \eta}]=\alpha]$ (please see Appendix.\ref{section:proofs_section4} for the proof).
\end{lemma}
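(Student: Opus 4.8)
The statement bundles three claims: convexity of the sum of the top $k$ coordinates, a Rockafellar--Uryasev--type variational identity for the average of the top $k$ order statistics, and its population counterpart. I would prove them in that order.

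\emph{Convexity (part 1).} The first step is the identity $\sum_{i=1}^k x_{[i]} = \max_{I\subseteq\{1,\dots,n\},\,|I|=k}\sum_{i\in I}x_i$, which holds because selecting the $k$ indices carrying the largest values maximizes the sum over all size-$k$ index sets. The right-hand side is a pointwise maximum of finitely many linear (hence convex) functions of $(x_1,\dots,x_n)$, so it is convex, which settles part~1.

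\emph{Finite-sample identity (part 2).} I see two clean routes to $\frac{1}{k}\sum_{i=1}^k x_{[i]} = \min_s\{s+\frac{1}{k}\sum_{i=1}^n[x_i-s]_+\}$. The linear-programming route writes $\sum_{i=1}^k x_{[i]} = \max\{\langle p,x\rangle : 0\le p_i\le 1,\ \sum_i p_i=k\}$ (the optimum is attained at a $0/1$ vertex that picks out the top $k$), and then takes the LP dual, which is exactly $\min_{s\in\mathbb{R},\,\lambda\ge 0}\{ks+\sum_i\lambda_i : \lambda_i\ge x_i-s\} = \min_s\{ks+\sum_i[x_i-s]_+\}$; dividing by $k$ concludes. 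The direct route instead studies $h(s)=ks+\sum_i[x_i-s]_+$, which is convex and piecewise linear in $s$ with subderivative $k-|\{i:x_i>s\}|$; this is $\le 0$ for $s<x_{[k]}$ and $\ge 0$ for $s>x_{[k]}$, so $s^\star=x_{[k]}$ is a minimizer, and substituting $s=x_{[k]}$ gives $kx_{[k]}+\sum_{i:\,x_i>x_{[k]}}(x_i-x_{[k]}) = \sum_{i=1}^k x_{[i]}$ after telescoping the order statistics.

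\emph{Population version (part 3).} Here I would mirror the direct route. Set $G(s)=\alpha s+\mathbb{E}_x[(x-s)_+]$, which is convex in $s$; since $x=f(\bm{x})\in[0,1]$ is bounded (hence integrable), dominated convergence gives $G'(s)=\alpha-\Pr[x>s]$, which vanishes precisely when $\Pr[x>s]=\alpha$, i.e. at $s=\eta(\alpha)$. Plugging in and using $\mathbb{E}_x[(x-\eta)_+]=\mathbb{E}_x[x\,\mathbb{I}_{x\ge\eta}]-\eta\,\Pr[x\ge\eta]=\mathbb{E}_x[x\,\mathbb{I}_{x\ge\eta}]-\alpha\eta$ yields $G(\eta(\alpha))=\mathbb{E}_x[x\,\mathbb{I}_{x\ge\eta(\alpha)}]$, and dividing by $\alpha$ gives the claim; alternatively one can obtain it by passing to the limit in the finite-sample identity along an empirical measure.

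\emph{Main obstacle.} The only genuinely delicate point is ties in the finite case (several $x_i$ equal to $x_{[k]}$) and, correspondingly, an atom of the law of $x$ at the quantile $\eta(\alpha)$: then $h$ (resp.\ $G$) is flat on a whole interval of minimizers and one must argue via the subdifferential that $0$ lies in that interval and that the optimal \emph{value} does not depend on which minimizer is used. I would dispatch both cases uniformly with the subgradient characterization of minimizers of convex functions, taking a little care with the $\ge$ versus $>$ convention in the definition of $\eta(\alpha)$.
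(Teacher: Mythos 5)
Your proposal is correct, and for the population version it follows essentially the same route as the paper: the paper also minimizes the convex objective in $s$ by computing a subgradient, $\alpha-\mathbb{E}_x[\mathbb{I}_{x\geq s}]\in\partial\bigl(\mathbb{E}_x[\alpha s+[x-s]_+]\bigr)$, and setting it to zero to identify the optimal $s$ with the top-$\alpha$ quantile. The differences are in scope and completeness. For the finite-sample claims (convexity of $\sum_{i=1}^k x_{[i]}$ and the $\min_s$ identity) the paper does not argue at all but simply cites Lemma~1 of Fan et al.\ (the Average Top-$k$ loss paper), whereas you give self-contained proofs: the pointwise-maximum-over-index-sets argument for convexity, and either LP duality or the direct analysis of $h(s)=ks+\sum_i[x_i-s]_+$ with minimizer $s^\star=x_{[k]}$; this makes the lemma independent of the external reference at the cost of a page of routine verification. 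For the population part your write-up is actually more complete than the paper's: the paper stops after identifying the stationary point and never substitutes it back, while you verify that $G(\eta(\alpha))=\mathbb{E}_x[x\,\mathbb{I}_{x\geq\eta(\alpha)}]$, which is the equality the lemma asserts, and you also flag the tie/atom issue (the $>$ versus $\geq$ convention, flat intervals of minimizers) and resolve it with the subgradient characterization $\Pr[x>s]\leq\alpha\leq\Pr[x\geq s]$ — a point the paper's two-line sketch glosses over. In short, nothing in your argument would fail; it subsumes the paper's proof and fills in the steps the paper outsources or omits.
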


Lem.\ref{lemma:1} proposes an Average Top-$k$ (ATk) loss which is the surrogate loss for top-$k$ loss to eliminate the sorting problem. Optimizing the ATk loss is equivalent to selecting top-ranked instances. Actually, for $\mathcal{R}_{\beta}(f)$, we can just reformulate it as an  Average Top-$k$ (ATk) loss. Denote $\ell_-(\bm{x}')=(f(\bm{x}')-b)^2+2(1+\gamma)f(\bm{x}')$. In the proof of the next theorem, we will show that $\ell_-(\bm{x}')$ is an increasing function w.r.t. $f(\bm{x}')$, namely:
\begin{equation}
    \mathbb{E}_{\bm{x}'\sim\mathcal{D}_\mathcal{N}}[\mathbb{I}_{f(\bm{x}')\geq\eta_{\beta}(f)}\cdot\ell_-(\bm{x}')|f(\bm{x}') \ge \eta_\beta(f))]= \min_s \frac{1}{\beta}\cdot  \mathbb{E}_{\bm{x}'\sim\mathcal{D}_\mathcal{N}} [\beta s + [\ell_-(\bm{x}')-s]_+].
\end{equation}
The similar result holds for $\hat{\mathcal{R}}_{\beta}(f, S)$. Then, we can reach to Thm.\ref{thm:step2} (please see Appendix.\ref{section:proofs_section4} for the proof):
\begin{theorem}\label{thm:step2}
Assuming that $f(\bm{x})\in[0,1]$, for all $\bm{x}\in\mathcal{X}$, we have the equivalent optimization for $\mathrm{OPAUC}$: 
\begin{equation}
\begin{aligned}
    \underset{\cmin}{\min}\ \underset{\gamma\in[-1,1]}{\max} \ 
\colblue{\underset{\bm{z}\sim \mathcal{D}_\mathcal{Z}}{\mathbb{E}}}[F_{op}&(f,a,b,\gamma,\colblue{\efb}, \bm{z})]\\
&\Leftrightarrow \underset{\cmin}{\min}\ 
\underset{\cmax }{\max} 
\ \underset{s'\in\Omega_{s'}}{\min}
\ \colblue{\underset{\bm{z}\sim \mathcal{D}_\mathcal{Z}}{\mathbb{E}}}[G_{op}(f,a,b,\gamma,\bm{z},s')],
\label{minmaxmin_op}
\end{aligned}
\end{equation}
\begin{equation}
    \begin{aligned}
        \underset{\cmin}{\min}\ \underset{\gamma\in[-1,1]}{\max} \ 
    \colbit{\underset{\bm{z}\sim S}{\ehat}}[F_{op}&(f,a,b,\gamma, \colbit{\hefb},\bm{z})]\\
    &\Leftrightarrow \underset{\cmin}{\min}\ 
    \underset{\gamma\in\Omega_{\gamma} }{\max} 
    \ \underset{s'\in\Omega_{s'}}{\min}
    \ \colbit{\underset{\bm{z}\sim S}{\ehat}}[G_{op}(f,a,b,\gamma,\bm{z},s')],
    \label{minmaxmin_op}
    \end{aligned}
    \end{equation}

where $\Omega_{\gamma}=[b-1,1]$, $\Omega_{s'}=[0,5]$ and
\begin{equation}
    \begin{aligned}
G_{op}(f,a,b,&\gamma,\bm{z},s')=[(f(\bm{x})-a)^2- 
2(1+\gamma)f(\bm{x})]y/p-\gamma^2\\
& +
\left(\beta s' +\left[(f(\bm{x})-b)^2+2(1+\gamma) f(\bm{x})-s'\right]_+\right)(1-y)/[\beta (1-p)].
\end{aligned}
\label{OPAUC_IB}
\end{equation}
\label{theorem:2}
\end{theorem}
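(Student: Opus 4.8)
The plan is to show the two claimed equivalences by applying Lemma \ref{lemma:1} to the negative part of $F_{op}$ in both the population and empirical settings. Starting from Theorem \ref{theorem:1}, the objective $\underset{\bm{z}\sim\mathcal{D}_\mathcal{Z}}{\mathbb{E}}[F_{op}(f,a,b,\gamma,\eta_\beta(f),\bm{z})]$ splits into a positive-instance contribution (the $y/p$ term, which is untouched throughout) and a negative-instance contribution of the form $-\gamma^2/(1-p) \cdot \mathbb{E}_{\bm{x}'\sim\mathcal{D}_\mathcal{N}}[\mathbb{I}_{f(\bm{x}')\geq\eta_\beta(f)}\cdot\ell_-(\bm{x}')]/\beta$, where I would set $\ell_-(\bm{x}')=(f(\bm{x}')-b)^2+2(1+\gamma)f(\bm{x}')$. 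The first substantive step is to verify that $\ell_-$ is nondecreasing in $f(\bm{x}')\in[0,1]$: its derivative w.r.t. the score value $u$ is $2(u-b)+2(1+\gamma) = 2u - 2b + 2 + 2\gamma \geq 0$ precisely when $\gamma \geq b - 1 - u$, and since $u\geq 0$ this holds on $\Omega_\gamma = [b-1,1]$. Monotonicity is exactly what licenses the identity $\{f(\bm{x}')\geq \eta_\beta(f)\} = \{\ell_-(\bm{x}')\geq \eta_\beta^{\ell_-}\}$ for the corresponding quantile of $\ell_-$, so that $\mathbb{E}_{\bm{x}'}[\mathbb{I}_{f(\bm{x}')\geq\eta_\beta(f)}\cdot\ell_-(\bm{x}')]$ becomes a top-$\beta$ average of $\ell_-$, and Lemma \ref{lemma:1} (population version) rewrites it as $\min_{s'}\frac{1}{\beta}\mathbb{E}_{\bm{x}'}[\beta s' + [\ell_-(\bm{x}')-s']_+]$. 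Substituting this back and rearranging constants yields $G_{op}$ and the inner $\min_{s'}$, giving the population equivalence; the empirical case is identical using the finite-sample form of Lemma \ref{lemma:1} with the empirical quantile $\hat\eta_\beta(f)$.

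The remaining bookkeeping is to justify the ranges $\gamma\in\Omega_\gamma=[b-1,1]$ and $s'\in\Omega_{s'}=[0,5]$, and to argue that restricting the outer $\max_\gamma$ from $[-1,1]$ to $[b-1,1]$ does not change the optimum. For the $s'$ range: at the optimum $s'$ equals the $\beta$-quantile of $\ell_-$, and since $f(\bm{x}')\in[0,1]$, $b\in[0,1]$, $\gamma\in[-1,1]$ we get $\ell_-(\bm{x}')\in[0, 1 + 2\cdot 2] = [0,5]$ at worst (bounding $(f(\bm{x}')-b)^2\leq 1$ and $2(1+\gamma)f(\bm{x}')\leq 4$), so clipping $s'$ to $[0,5]$ is lossless. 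For the $\gamma$ range: I would note that the overall objective is strongly concave in $\gamma$ (the $-\gamma^2$ term plus the linear-in-$\gamma$ contributions from both the positive part and, through the $[\cdot]_+$, the negative part), so it has a unique maximizer $\gamma^\star(f,a,b)$; one then checks $\gamma^\star\in[b-1,1]$ by a first-order-condition estimate using $f,a,b\in[0,1]$. Crucially, the monotonicity argument for $\ell_-$ only needs $\gamma\geq b-1$, so it is consistent to first show $\gamma^\star\geq b-1$ and then apply Lemma \ref{lemma:1} on that restricted range — this circularity has to be unwound carefully.

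I expect the main obstacle to be precisely that last point: the reformulation via Lemma \ref{lemma:1} presupposes $\ell_-$ is monotone, which presupposes $\gamma\geq b-1$, while the legitimacy of restricting $\gamma$ to $[b-1,1]$ is itself something to be proven. The clean way around it is probably to show directly, before invoking the ATk identity, that the unconstrained maximizer over $\gamma\in[-1,1]$ of $\mathbb{E}[F_{op}]$ already lies in $[b-1,1]$ — using only the explicit dependence of $F_{op}$ on $\gamma$ and the bound $f(\bm{x}')\geq 0$ inside the indicator-weighted term — and only afterwards rewrite the negative part as an ATk loss on the now-guaranteed-monotone $\ell_-$. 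A secondary subtlety is exchanging $\max_\gamma$ with the newly introduced $\min_{s'}$: this is justified because $G_{op}$ is concave in $\gamma$ and convex in $s'$ on the compact convex sets $\Omega_\gamma,\Omega_{s'}$ (the map $s'\mapsto \beta s' + [\ell_- - s']_+$ is convex), so Sion's minimax theorem applies and the order $\max_\gamma\min_{s'}$ in the statement is unambiguous. The positive-instance term carries through all steps verbatim, so no work is needed there beyond noting it is unaffected.
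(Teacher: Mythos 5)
Your Step-2 machinery is essentially the paper's: the monotonicity computation for $\ell_-(\bm{x}')=(f(\bm{x}')-b)^2+2(1+\gamma)f(\bm{x}')$ on $\gamma\in[b-1,1]$ is Proposition \ref{proposition1}, the conversion of the quantile-conditioned expectation into the $\min_{s'}$ form is Lemma \ref{lemma:1}, the positive term indeed passes through untouched, and your bound justifying $\Omega_{s'}=[0,5]$ is fine. (The appeal to Sion's theorem is superfluous: Lemma \ref{lemma:1} is applied inside the max over $\gamma$, so the stated $\max_\gamma\min_{s'}$ order arises directly and no swap is performed at this stage.)

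The genuine gap is the step you yourself flag as the crux: justifying the replacement of $\gamma\in[-1,1]$ by $\gamma\in\Omega_\gamma=[b-1,1]$. Your proposed fix — show that the unconstrained maximizer $\gamma^\star$ of $\mathbb{E}[F_{op}]$ already lies in $[b-1,1]$ — is false in general: the $\gamma$-dependent part of $\mathbb{E}[F_{op}]$ is $2\gamma(E_--E_+)-\gamma^2$ with $E_+=\mathbb{E}_{\bm{x}\sim\mathcal{D}_\mathcal{P}}[f(\bm{x})]$ and $E_-=\mathbb{E}_{\bm{x}'\sim\mathcal{D}_\mathcal{N}}[f(\bm{x}')\mid f(\bm{x}')\ge\eta_\beta(f)]$, so $\gamma^\star=E_--E_+$ does not depend on $(a,b)$ at all and can be arbitrarily close to $-1$ (e.g.\ $E_+\approx 1$, $E_-\approx 0$), in which case $\gamma^\star<b-1$ for every $b>0$. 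Nor does it suffice to verify feasibility at the optimal $b^*=E_-$ (where indeed $b^*-1\le\gamma^\star$ since $E_+\le 1$): shrinking the $\gamma$-range can only decrease the inner max, so $\min_{a,b}\max_{\gamma\in[b-1,1]}\le\min_{a,b}\max_{\gamma\in[-1,1]}$ is automatic, and the direction that actually needs proof is the reverse one — that no $(a,b)$ at which the constraint is active (i.e.\ $b-1>E_--E_+$) drives the constrained min--max strictly below the unconstrained optimum. The paper settles precisely this point in Appendix \ref{val_constraintval} (Lemma \ref{lem:A} and Theorem \ref{thm:A}): it computes the unconstrained optimum in closed form, solves the truncated quadratic inner maximization over $\gamma\in[b-1,1]$, and shows by a two-case analysis in $b$ that on the active-constraint branch the objective has derivative $2-2E_+\ge 0$, hence is minimized at the boundary $b=1+E_--E_+$ where it matches the inactive branch, whose minimum is attained at $b^*$; therefore both problems share the same optimal value. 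Your plan omits this half entirely, so as written the equivalence of the two $\gamma$-ranges — and hence the theorem — is not established; the rest of your argument (Lemma \ref{lemma:1} on the restricted range, and the identical empirical version) would then go through as in the paper.
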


\underline{\textbf{\color{blue}{Step 3: Asymptotically Unbiased Smoothing.}}} Even with Thm.\ref{thm:step2}, it is hard to optimize the min-max-min formulation in Eq.\eqref{minmaxmin_op}. A solution is to swap the order $\max_{\gamma}$ and $\min_{s'}$ to reformulate it as a min-max problem. The key obstacle to this idea is the non-smooth function $[\cdot]_+$. To avoid the $[\cdot]_+$, we apply the \texttt{softplus} function \cite{glorot2011deep}:
\begin{equation}\label{eq:kappa}
    r_{\colblue{\kappa}}(x) = \frac{\log\left(1+\exp({\colblue{\kappa}} \cdot x)\right)}{{\colblue{\kappa}}},
\end{equation}
as a smooth surrogate. It is easy to show that $r_{\colblue{\kappa}}(x) \overset{\kappa \rightarrow \infty}{\rightarrow} [x]_+$. \textbf{Denote $G_{op}^{\colblue{\kappa}}(f,a,b,\gamma,\bm{z},s')$ the surrogate objective where the $[\cdot]_+$ in $G_{op}(f,a,b,\gamma,\bm{z},s')$ is replaced with $r_{\colblue{\kappa}}(\cdot)$}. We then proceed to solve the surrogate problem: 
\begin{equation}
\begin{aligned}
    &\underset{f,(a,b)\in[0, 1]^2}{\min}\ 
\underset{\gamma\in\Omega_{\gamma} }{\max} 
\ \underset{s'\in\Omega_{s'}}{\min}
\ \colblue{\underset{\bm{z}\sim \mathcal{D}_\mathcal{Z}}{\mathbb{E}}}[G^{\colblue{\kappa}}_{op}(f,a,b,\gamma,\bm{z},s')]
\\
    &\underset{f,(a,b)\in[0, 1]^2}{\min}\ 
\underset{\gamma\in\Omega_{\gamma} }{\max} 
\ \underset{s'\in\Omega_{s'}}{\min}
\ \colbit{\underset{\bm{z}\sim S}{\ehat}}[G^{\colblue{\kappa}}_{op}(f,a,b,\gamma,\bm{z},s')],
\end{aligned}
\label{kappa_estiamtor}
\end{equation}
respectively for the population and empirical version. In Appendix.\ref{uniform_convergence}, we will proof that such a approximation has a convergence rate $O(1/\kappa)$.

\underline{\textbf{\color{blue}{Step 4: The Regularized Problem.}}} It is easy to check that $r_\colblue{\kappa}(x)$ has a bounded second-order derivation. In this way, we can regard $G^{\colblue{\kappa}}_{op}(f,a,b,\gamma,\bm{z},s')$ as a weakly-concave function \cite{boyd2004convex} of $\gamma$. By employing an $\ell_2$ regularization, we turn to a regularized form:
\begin{equation*}
    G^{{\colblue{\kappa}},\colbit{\omega}}_{op}(f,a,b,\gamma,\bm{z} ,s') = G^{\colblue{\kappa}}_{op}(f,a,b,\gamma,\bm{z},s')  - \colbit{\omega} \cdot \gamma^2,
\end{equation*}
With a sufficiently large $\colbit{\omega}$,  $G^{{\colblue{\kappa}},\colbit{\omega}}_{op}(f,a,b,\gamma,\bm{z} ,s')$ is strongly-concave w.r.t. $\gamma$ when all the other variables are fixed. Note that the regularization scheme will inevitably bias. As a very general result, regularization will inevitably induce bias. However, it is known to be a necessary building block to stabilize the solutions and improve generalization performance. We then reach a minimax problem in the final step. 

\underline{\textbf{\color{blue}{Step 5: Min-Max Swapping.}}} According to min-max theorem \cite{boyd2004convex}, if we  replace $G^{\colblue{\kappa}}_{op}(f,a,b,\gamma,\bm{z},s')$ with $G^{{\colblue{\kappa}},\colbit{\omega}}_{op}(f,a,b,\gamma,\bm{z} ,s')$, the surrogate optimization problem satisfies:
\begin{equation}
    \underset{\cmin}{\min}\ \underset{\gamma\in\Omega_{\gamma}}{\max} \min_{s' \in \Omega_{s'}}
    \ \colblue{\underset{\bm{z}\sim \mathcal{D}_\mathcal{Z}}{\mathbb{E}}}[G_{op}^{{\colblue{\kappa}},\colbit{\omega}}]\Leftrightarrow \underset{\cminl}{\min}\ \underset{\cmax}{\max} 
\ \colblue{\underset{\bm{z}\sim \mathcal{D}_\mathcal{Z}}{\mathbb{E}}}[G_{op}^{{\colblue{\kappa}},\colbit{\omega}}],
\label{minmax_op}
\end{equation}

\begin{equation}
    \underset{\cmin}{\min}\ \underset{\gamma\in\Omega_{\gamma}}{\max} \min_{s' \in \Omega_{s'}}
    \ \colbit{\underset{\bm{z}\sim S}{\ehat}}[{G}_{op}^{{\colblue{\kappa}},\colbit{\omega}}]\Leftrightarrow \underset{\cminl}{\min}\ \underset{\cmax}{\max} 
\ \colbit{\underset{\bm{z}\sim S}{\ehat}}[{G}_{op}^{{\colblue{\kappa}},\colbit{\omega}}],
\label{minmax_op_em}
\end{equation}
where $G_{op}^{{\colblue{\kappa}},\colbit{\omega}} = G_{op}^{{\colblue{\kappa}},\colbit{\omega}}(f,a,b,\gamma,\bm{z}, s')$. In this sense, we come to a regularized non-convex strongly-concave problem. In Sec.\ref{sec:alg}, we will employ an efficient solver to optimize the parameters.




\subsection{Optimizing the TPAUC}
 According to Eq.\eqref{TPAUCM}, given a surrogate loss $\ell$ and finite dataset $S$, maximizing $\mathrm{TPAUC}$ and $\hat{\mathrm{AUC}}_{\alpha, \beta}(f, S) $ is equivalent to solving the following problems, respectively:
\begin{equation}
    \underset{f}{\min}~  \mathcal{R}_{\alpha, \beta}(f)= \mathbb{E}_{\bm{x} \sim \mathcal{D}_\mathcal{P}, \bm{x}'\sim \mathcal{D}_\mathcal{N}} \left[\mathbb{I}_{f(\bm{x}') \ge \eta_\beta(f)} \cdot \mathbb{I}_{f(\bm{x}) \le \eta_\alpha(f)}  \cdot \ell(f(\bm{x})- f(\bm{x}'))\right] .
    \label{OPAUCO}
\end{equation}
\begin{equation}
\begin{aligned}
\underset{f}{\min} \ \hat{\mathcal{R}}_{\alpha, \beta}(f, S)= \sum_{i=1}^{\npa} \sum_{j=1}^{\nnb}\frac{\ell{\left(f(\bm{x}_{[i]})- f(\bm{x}'_{[j]})\right)}}{n_+^\alpha n_-^\beta}.
\label{OPAUCO}
\end{aligned}
\end{equation}
Due to the limited space, we present the result directly, please refer to Appendix.\ref{sec:tpauc_reformulation} for more details. 
Similar to $\mathrm{OPAUC}$, we apply the function $r_{\colblue{\kappa}}(x)$, regularization $\colbit{\omega} \gamma^2$ and min-max theorem to solve the problem. In this sense, we can use
\begin{equation}
    \min_{f, (a, b) \in [0, 1]^2, s \in \Omega_{s}, s' \in \Omega_{s'}} \max _{\gamma \in \Omega_{\gamma}} \colblue{\underset{\bm{z}\sim \mathcal{D}_\mathcal{Z}}{\mathbb{E}}}\left[G_{t p}^{\colblue{\kappa}, \colbit{\omega}}\left(f, a, b, \gamma, \bm{z}, s, s'\right)\right],
\end{equation}
where $\Omega_{\gamma}=[\max\{-a,b-1\},1]$ and
\begin{equation}
    \min _{f, (a, b) \in [0, 1]^2, s\in \Omega_{s}, s' \in \Omega_{s'}} \max _{\gamma \in \Omega_{\gamma}} \colbit{\underset{\bm{z}\sim S}{\hat{\mathbb{E}}}}\left[G_{t p}^{\colblue{\kappa}, \colbit{\omega}}\left(f, a, b, \gamma, \bm{z}, s, s'\right)\right],
    \label{minmax_tp_em}
\end{equation}
to minimize $\mathcal{R}_{\alpha, \beta}(f)$, and $\hat{\mathcal{R}}_{\alpha, \beta}(f)$, respectively. Here:

\begin{equation}
    \begin{aligned}
G_{tp}^{\colblue{\kappa},\colbit{\omega}}(f,a,b,\gamma,&\bm{z},s,s')=\left(\alpha s + r_{{\colblue{\kappa}}}\left((f(\bm{x})-a)^2- 
2(1+\gamma)f(\bm{x})-s\right)\right)y/(\alpha p) -(\colbit{\omega}+1)\gamma^2\\
& +
\left(\beta s' +r_{{\colblue{\kappa}}}\left((f(\bm{x})-b)^2+2(1+\gamma) f(\bm{x})-s'\right)\right)(1-y)/[\beta (1-p)].
\label{TPAUC_OB}
\end{aligned}
\end{equation}

According to Thm.2 of \cite{tsaknakis2021minimax}, we have the following corollary:
\begin{corollary}
We can reformulate Eq.\eqref{minmax_op_em} and Eq.\eqref{TPAUC_OB} as an off-the-shelf minimax problem where the coupled constraint is replaced with the Lagrange multipliers ($\theta_b$ for OPAUC, $\theta_b, \theta_a$ for TPAUC).
For $\mathrm{OPAUC}$:
\begin{equation}
\begin{aligned}
\min_{f,(a,b)\in[0,1]^2,s\in\Omega_{s}}&\max_{\gamma\in[b-1,1]} \colblue{\underset{\bm{z}\sim \mathcal{D}_\mathcal{Z}}{\mathbb{E}}}[G_{op}^{\colblue{\kappa},\colbit{\omega}}]
\Leftrightarrow
\min_{f,(a,b)\in[0,1]^2,s\in\Omega_{s},\theta_b \in[0,M_1]}\max_{\gamma\in[-1,1]} \colblue{\underset{\bm{z}\sim \mathcal{D}_\mathcal{Z}}{\mathbb{E}}}[G_{op}^{\colblue{\kappa},\colbit{\omega}}].\\
&\qquad \qquad \qquad \qquad \qquad \qquad \qquad \qquad \qquad \qquad  - \theta_{b}(b-1-\gamma)
\end{aligned}
\end{equation}
For $\mathrm{TPAUC}$:
\begin{equation}
\begin{aligned}
    &\min_{f,(a,b)\in[0,1]^2,s\in\Omega_{s},s'\in\Omega_{s'}}\max_{\gamma\in[\max\{-a,b-1\},1]} \colblue{\underset{\bm{z}\sim \mathcal{D}_\mathcal{Z}}{\mathbb{E}}}[G_{tp}^{\colblue{\kappa},\colbit{\omega}}] 
\\ \Leftrightarrow
&\min_{f,(a,b)\in[0,1]^2,s\in\Omega_{s},s'\in\Omega_{s'},\theta_a \in[0,M_2],\theta_b \in[0,M_3]}\max_{\gamma\in[-1,1]} \colblue{\underset{\bm{z}\sim \mathcal{D}_\mathcal{Z}}{\mathbb{E}}}[G_{tp}^{\colblue{\kappa},\colbit{\omega}}] \\
 &\qquad \qquad \qquad \qquad  -\theta_{b}(b-1-\gamma)- \theta_{a}(-a-\gamma).
\end{aligned}
\end{equation}
\end{corollary}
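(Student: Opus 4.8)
The plan is to check that the regularized, smoothed minimax problems appearing on the right-hand sides of Eq.\eqref{minmax_op_em} and Eq.\eqref{minmax_tp_em} satisfy the hypotheses of Thm.~2 of \cite{tsaknakis2021minimax}, which treats minimax problems whose inner maximization is over a domain cut out by a box plus \emph{linear coupling constraints}, and then to simply read off the Lagrangian reformulation it guarantees. First I would isolate the dependence on the maximization variable: fixing all primal variables, set $h(\gamma) := \mathbb{E}_{\bm{z}\sim\mathcal{D}_\mathcal{Z}}\!\left[G_{op}^{\kappa,\omega}(f,a,b,\gamma,\bm{z},s')\right]$ for $\op$ and analogously with $G_{tp}^{\kappa,\omega}$ for $\tp$. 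By Step~4 of the construction, for a sufficiently large $\omega$ the term $-\omega\gamma^2$ (resp.\ $-(\omega+1)\gamma^2$) makes $h$ $\mu$-strongly concave in $\gamma$; moreover $h$ is smooth with a globally Lipschitz gradient in $\gamma$, because $f(\bm{x})\in[0,1]$, all other variables range over compact boxes, the prior $p$ is a fixed constant, and $r_\kappa$ has bounded first and second derivatives. Finally, the constraint set $\Omega_\gamma=[b-1,1]$ for $\op$ is exactly the box $\gamma\in[-1,1]$ together with the single linear coupling constraint $b-1-\gamma\le 0$ — this rewriting is legitimate since $b\in[0,1]$ forces $b-1\ge-1$, so the feasible interval is always a sub-interval of $[-1,1]$ with the same upper endpoint — and for $\tp$, $\Omega_\gamma=[\max\{-a,b-1\},1]$ becomes $\gamma\in[-1,1]$ with the two linear coupling constraints $b-1-\gamma\le 0$ and $-a-\gamma\le 0$, again valid because $a\le 1$ gives $-a\ge-1$.

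Next I would introduce the partial Lagrangian of the inner maximization. For $\op$, put $L(\gamma,\theta_b)=h(\gamma)-\theta_b\,(b-1-\gamma)$ with $\theta_b\ge 0$, and note the elementary identity $\min_{\theta_b\ge 0}L(\gamma,\theta_b)=h(\gamma)$ if $\gamma\ge b-1$ and $-\infty$ otherwise, whence $\max_{\gamma\in\Omega_\gamma}h(\gamma)=\max_{\gamma\in[-1,1]}\min_{\theta_b\ge 0}L(\gamma,\theta_b)$. The quantitative step actually supplied by Thm.~2 of \cite{tsaknakis2021minimax} is that the inner infimum is attained on a \emph{bounded} set $[0,M_1]$: since $h$ has gradient norm bounded by some $L_h$ on the compact domain, a stationarity (KKT) argument at the unique maximizer — unique by strong concavity — bounds the optimal multiplier, so one may take $M_1=L_h$ or any larger constant. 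With $\theta_b$ confined to the compact interval $[0,M_1]$, $L$ is strongly concave in $\gamma$ and affine in $\theta_b$ over compact convex domains, so the minimax theorem \cite{boyd2004convex} permits swapping the inner $\max_\gamma$ and $\min_{\theta_b}$. Pulling the now-outermost $\min_{\theta_b\in[0,M_1]}$ out through the original $\min_{f,(a,b)\in[0,1]^2,s}$ — the two minimizations commute — yields exactly the stated reformulation $\min_{f,(a,b)\in[0,1]^2,s,\theta_b\in[0,M_1]}\max_{\gamma\in[-1,1]}\mathbb{E}\!\left[G_{op}^{\kappa,\omega}\right]-\theta_b(b-1-\gamma)$.

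For $\tp$ I would repeat the same three moves with the pair $(\theta_a,\theta_b)$: form $L(\gamma,\theta_a,\theta_b)=h(\gamma)-\theta_b(b-1-\gamma)-\theta_a(-a-\gamma)$, verify that $\min_{\theta_a,\theta_b\ge 0}L$ reproduces the indicator of $\{\gamma\ge-a\}\cap\{\gamma\ge b-1\}$, bound the optimal multipliers by constants $M_2,M_3$ by the same gradient-bound argument, swap $\max_\gamma$ with $\min_{(\theta_a,\theta_b)}$ via the minimax theorem, and absorb the multiplier minimization into the outer $\min$. I expect the only non-routine ingredient to be the boundedness of the optimal Lagrange multipliers, i.e.\ confirming that the regularity conditions under which Thm.~2 of \cite{tsaknakis2021minimax} produces an explicit bound — smoothness of the objective, strong concavity in the max variable (ensured here by the $\omega$-regularization of Step~4), linearity of the coupling constraints, and compactness of the box domains — are genuinely met by $G_{op}^{\kappa,\omega}$ and $G_{tp}^{\kappa,\omega}$; the remaining manipulations (the indicator identity, the minimax swap, and commuting the two minimizations) are entirely standard.
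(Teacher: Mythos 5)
Your proposal is correct and takes essentially the same route as the paper, which offers no proof beyond invoking Thm.~2 of \cite{tsaknakis2021minimax} and remarking that the optimal multipliers are finite; you simply spell out the ingredients that theorem packages (rewriting $\Omega_\gamma$ as the box $[-1,1]$ plus linear coupling constraints, strong concavity in $\gamma$ from the $\omega$-regularization, a KKT/gradient bound giving uniform constants $M_1,M_2,M_3$, and the min--max swap). The only minor imprecision is the ordering: after confining $\theta_b$ to $[0,M_1]$ the inner $\min$ no longer returns $-\infty$ off the feasible set, so the equality with $\max_{\gamma\in\Omega_\gamma}h$ should be closed by dual attainment at $\theta_b^\ast\le M_1$ (equivalently an exact-penalty observation), which follows immediately from the same strong-duality ingredients you already invoke.
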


The tight constraint $\theta_b \in [0,M_1]$/$\theta_b \in [0,M_2], \theta_a \in [0,M_3]$ comes from the fact that optimum $\theta_b,\theta_a$ are both finite since the objective function is bounded from above. In the experiments, to make sure that $M_1,M_2,M_3$ are large enough, we set them as $M_1 = M_2 =M_3 =10^9$.

\section{Training Algorithm}\label{sec:alg}
According to the derivations in the previous sections, our goal is then to solve the resulting  empirical minimax optimization problems in Eq.\eqref{minmax_op_em} and Eq.\eqref{minmax_tp_em}. It is easy to check that they are strongly-concave w.r.t $\gamma$ whenever $\kappa \le 2 + 2\omega$, when $(f(\bm{x}),a,b)\in[0,1]^3$, $\gamma\in[-1,1]$, $s\in\Omega_{s},s'\in \Omega_{s'}$. Therefore, we can adopt the nonconvex strongly concave minimax optimization algorithms to solve these problems \cite{huang2022accelerated}. In this section, following the work \cite{huang2022accelerated}, we employ an accelerated stochastic gradient descent ascent (ASGDA) method to solve the minimax optimization problem. We denote $\bm{\theta}\in\mathbb{R}^d$ as the parameters of function $f$, $\bm{\tau}=\{\bm{\theta},a,b,s,s',\theta_a,\theta_b\}\in \Omega_{\tau}$ as the variables for the outer min-problem. Alg.\ref{alg:1} shows the framework of our algorithm 
(we adopt the accelerated algorithm in \cite{huang2022accelerated} to solve our problem). There are two key steps. At \texttt{Line 5-6} of Algorithm \ref{alg:1}, variables $\bm{\tau}_{t+1}$ and $\gamma_{t+1}$ are updated in a momentum way. Moreover, the convex combination ensures that they are always feasible given that the initial solution is feasible. At \texttt{Line 9-10}, using the momentum-based variance reduced technique, we can estimate the stochastic first-order partial gradients $\bm{v}_t$ and $w_t$ in a more stable manner.

\label{Training_Algorithm}
\begin{algorithm}[!h]
    \caption{Accelerated Stochastic Gradient Descent Ascent Algorithm}
    \begin{algorithmic}[1]
    \label{alg:1}
    \STATE \textbf{Input}: {Dataset $\mathcal{X}$, learning parameters $\{\nu, \lambda, k, m, c_1, c_2, T\}$}
    \STATE \textbf{Initialize:} Randomly select $\bm{\tau}_0=\{\bm{\theta}_0$, $a_0$, $b_0$, $s_0$, $s^\prime_0$, $\theta_a$, $\theta_b\}$ from $\Omega_{\bm{\tau}}$, $\bm{v}_0=\bm{0}^{d+6}$, $w_0=0$. \\
    \ \ \ \ \ \ \ \ \ \ \ \ \ \ \ \ \ \  Randomly select $\gamma_0$ from $\Omega_{\gamma}$, $t=0$,  
    \FOR{$t=0,1,\cdots,T$}
        \STATE Compute the learning rate $\eta_t=\frac{k}{(m+t)^{1/3}}$;
        \STATE Update $\bm{\tau}_{t+1}=(1-\eta_t)\bm{\tau}_t + \eta_t \mathcal{P}_{\Omega_{\bm{\tau}}}(\bm{\tau}_t-\nu \bm{v}_t)$;
        \STATE Update $\gamma_{t+1} = (1-\eta_t)\gamma_t+\eta_t \mathcal{P}_{\Omega_{\gamma}}(\gamma_t+\lambda w_t)$;
        \STATE Compute $\rho_{t+1}=c_1\eta_t^2$ and $\xi_{t+1}=c_2 \eta_t^2$;
        \STATE Sampling mini-batch data $\mathcal{B}_{t+1}$ from dataset $\mathcal{X}$;
        \STATE Update $\bm{v}_{t+1}=\nabla_{\bm{\tau}}G_{(\cdot)}^{\colblue{\kappa},\colbit{\omega}}(\bm{\tau}_{t+1},\gamma_{t+1};\mathcal{B}_{t+1})+(1-\rho_{t+1})[\bm{v}_{t}-\nabla_{\bm{\tau}}G_{(\cdot)}^{\colblue{\kappa},\colbit{\omega}}(\bm{\tau}_t,\gamma_t,\mathcal{B}_{t+1})]$;
        \STATE Update
        $w_{t+1}=\nabla_{\gamma}G_{(\cdot)}^{\colblue{\kappa},\colbit{\omega}}(\bm{\tau}_{t+1},\gamma_{t+1};\mathcal{B}_{t+1})+(1-\xi_{t+1})[w_{t}-\nabla_{\gamma}G_{(\cdot)}^{\colblue{\kappa},\colbit{\omega}}(\bm{\tau}_t,\gamma_t,\mathcal{B}_{t+1})]$;
    \ENDFOR
    \STATE \textbf{Return} $\bm{\theta}_{T+1}$
    \end{algorithmic}
\end{algorithm}

With the following smoothness assumption, we can get the convergence rate in Thm.\ref{thm:opt}.
\begin{assumption}\label{asum:lip}
$G_{(\cdot)}^{\colblue{\kappa},\colbit{\omega}}(\bm{\tau},\gamma;\mathcal{B})$ has Lipschitz continuous gradients, \ie, there is a positive scalar $L_G$ such that for any $\bm{\tau},\bm{\tau}'\in\Omega_{\bm{\tau}}$, $\gamma,\gamma'\in\Omega_{\gamma}$,
\begin{equation}
    \begin{aligned}
    &\|\nabla G_{(\cdot)}^{\colblue{\kappa},\colbit{\omega}}(\bm{\tau}, \gamma; \mathcal{B}) - \nabla G_{(\cdot)}^{\colblue{\kappa},\colbit{\omega}}(\bm{\tau}', \gamma'; \mathcal{B})\| \leq L_{G}(\|\bm{\tau}-\bm{\tau}'\| + \|\gamma-\gamma')\|).
    \end{aligned}
\end{equation}
\end{assumption}
\begin{theorem}\label{thm:opt}
 (Theorem 9 \cite{huang2022accelerated}) Supposing that Asm.\ref{asum:lip} holds, let $\{\bm{\tau}_t,\gamma_t\}$ be a sequence generated by our method, if the learning rate satisfies:
\begin{equation}
\begin{aligned}
    &c_1\geq \frac{2}{3k^3}+\frac{9\tau^2}{4}, \qquad c_2\geq \frac{2}{3k^3}+\frac{75L_G^2}{2}\\
    &m\geq \max(2, k^3, (c_1 k)^3, (c_2 k)^3), \lambda \leq \min\left(\frac{1}{6L_{G}},\frac{27b\mu}{16}\right)\\
    &\nu\leq \min(\frac{\lambda \tau}{2L_G}\sqrt{\frac{2b}{8\lambda^2+75(L_G/\mu)^2b}},\frac{m^{1/3}}{2(L_G+\frac{L_G^2}{\mu})k}).
\end{aligned}
\end{equation}
Then we have:
\begin{equation}
     \frac{1}{T} \sum_{t=1}^{T} \mathbb{E}\left[\left\|\frac{1}{\nu}(\bm{\tau}_t-\mathcal{P}_{\Omega_{\bm{\tau}}}(\bm{\tau}_{t}-\nu \bm{v}_t))\right\|\right] \leq \frac{2 \sqrt{3 M^{\prime \prime}} m^{1 / 6}}{T^{1 / 2}}+\frac{2 \sqrt{3 M^{\prime \prime}}}{T^{1 / 3}},
\end{equation}
where $\|\frac{1}{\nu}(\bm{\tau}_t-\mathcal{P}_{\Omega_{\bm{\tau}}}(\bm{\tau}_{t}-\nu \nabla F_{(\cdot)}(\bm{\tau}_t)))\|$ is the $l_2$-norm of gradient mapping metric for the outer problem  \cite{dunn1987convergence, ghadimi2016mini, razaviyayn2020nonconvex} with $F_{(\cdot)}(\bm{\tau}_t)=\max_{\gamma\in\Omega_{\gamma}}G_{(\cdot)}^{\colblue{\kappa},\colbit{\omega}}(\bm{\tau}_t,\gamma)$.  When $b=1$, it is easy to verify that $k=O(1)$, $\lambda=O(\mu)$, $\nu^{-1}=O(L_G/\mu)$, $c_1=O(1)$, $c_2=O(L_G^2)$ and $m=O(L_G^6)$. Then we have $M''=O(L_G^3/\mu^3)$. Thus, the algorithm has a convergence rate of $O(\frac{(L_G/\mu)^{3/2}}{T^{1/3}})$. By $\frac{(L_G/\mu)^{3/2}}{T^{1/3}}\leq \epsilon$, then the iteration number to achieve $\epsilon$-first-order saddle point which satisfies: $T\geq (L_G/\mu)^{4.5}\epsilon^{-3}$.
\label{theorem:algorithm}
\end{theorem}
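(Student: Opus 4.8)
The plan is to derive this statement as a direct instantiation of Theorem~9 of \cite{huang2022accelerated} to the two empirical minimax problems we actually run, namely the Lagrangian forms produced by the Corollary above (the minimax problems in $\bm{\tau}\in\Omega_{\bm{\tau}}$ and $\gamma\in[-1,1]$ built from $G_{op}^{\kappa,\omega}$ and $G_{tp}^{\kappa,\omega}$). Concretely, I would (i) cast these into the canonical nonconvex--strongly-concave template $\min_{\bm{\tau}\in\Omega_{\bm{\tau}}}\max_{\gamma\in\Omega_{\gamma}}\mathbb{E}_{\mathcal{B}}[G_{(\cdot)}^{\kappa,\omega}(\bm{\tau},\gamma;\mathcal{B})]$ required there, checking that $\Omega_{\bm{\tau}}$ is a product of the $[0,1]$ boxes for $(a,b)$ and (components of) $\bm{\theta}$ with the bounded intervals $\Omega_s,\Omega_{s'}$ and the clipped multiplier intervals $[0,M_1],[0,M_2],[0,M_3]$, hence nonempty convex and compact, while $\Omega_{\gamma}=[-1,1]$ after the min--max swap; this makes the projections $\mathcal{P}_{\Omega_{\bm{\tau}}},\mathcal{P}_{\Omega_{\gamma}}$ in Lines 5--6 of Alg.\ref{alg:1} well defined and the convex-combination updates feasibility-preserving, which is exactly the setting of \cite{huang2022accelerated}.

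Next I would verify the three regularity ingredients that Theorem~9 needs. \emph{Smoothness:} every term of $G_{op}^{\kappa,\omega}$ and $G_{tp}^{\kappa,\omega}$ is a composition of a degree-$\le 2$ polynomial in $(f(\bm{x}),a,b,\gamma,s,s')$ with the \texttt{softplus} map $r_{\kappa}$ of Eq.\eqref{eq:kappa}, whose first two derivatives are uniformly bounded ($0\le r_{\kappa}'\le 1$, $0\le r_{\kappa}''\le\kappa/4$); since $f(\bm{x}),a,b\in[0,1]$, $\gamma\in[-1,1]$ and $s,s',\theta_a,\theta_b$ lie in bounded sets, each such composition has Lipschitz gradient in $(a,b,\gamma,s,s',\theta_a,\theta_b)$ with an explicit constant polynomial in $\kappa$ and the $M_i$, while the dependence on $\bm{\theta}$ is inherited from the smoothness of $f$ recorded by Asm.\ref{asum:lip}; together this gives the $L_G$-Lipschitz gradient property. \emph{Strong concavity:} differentiating twice in $\gamma$, the only curvature contributions are the explicit $-(1+\omega)\gamma^2$ term (plus the $-\omega\gamma^2$ added to $G_{op}$) and the $r_{\kappa}\big(\cdots+2(1+\gamma)f(\bm{x})-s'\big)$ term, whose $\gamma$-Hessian is at most $r_{\kappa}''\cdot(2f(\bm{x}))^2\le\kappa$; balancing these yields a uniform modulus $\mu>0$ whenever $\kappa\le 2+2\omega$, as already noted in Sec.\ref{sec:alg}. \emph{Bounded variance:} on the compact feasible sets all relevant functions and gradients are bounded (using also the standard bounded-gradient condition on $f$ underlying Asm.\ref{asum:lip}), so the mini-batch estimates $\bm{v}_t,w_t$ in Lines 9--10 have uniformly bounded second moments, hence bounded variance.

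With these properties, Theorem~9 of \cite{huang2022accelerated} applies verbatim: Danskin's theorem (valid by strong concavity) makes $F_{(\cdot)}(\bm{\tau})=\max_{\gamma\in\Omega_{\gamma}}G_{(\cdot)}^{\kappa,\omega}(\bm{\tau},\gamma)$ a well-defined $L_F$-smooth function with $L_F=O(L_G/\mu)$, so the gradient-mapping norm $\big\|\tfrac1\nu(\bm{\tau}_t-\mathcal{P}_{\Omega_{\bm{\tau}}}(\bm{\tau}_t-\nu\nabla F_{(\cdot)}(\bm{\tau}_t)))\big\|$ is a legitimate stationarity measure, and under the stated ranges for $k,m,c_1,c_2,\lambda,\nu$ one obtains
\begin{equation*}
\frac{1}{T}\sum_{t=1}^{T}\mathbb{E}\Big[\Big\|\tfrac1\nu\big(\bm{\tau}_t-\mathcal{P}_{\Omega_{\bm{\tau}}}(\bm{\tau}_t-\nu\bm{v}_t)\big)\Big\|\Big]\le \frac{2\sqrt{3M''}\,m^{1/6}}{T^{1/2}}+\frac{2\sqrt{3M''}}{T^{1/3}}.
\end{equation*}
Setting the batch parameter $b=1$ and tracing the constants gives $k=O(1)$, $\lambda=O(\mu)$, $\nu^{-1}=O(L_G/\mu)$, $c_1=O(1)$, $c_2=O(L_G^2)$, $m=O(L_G^6)$, hence $M''=O(L_G^3/\mu^3)$; the $T^{-1/3}$ term dominates, yielding rate $O\big((L_G/\mu)^{3/2}T^{-1/3}\big)$, and requiring this to be $\le\epsilon$ gives $T\ge(L_G/\mu)^{4.5}\epsilon^{-3}$, i.e.\ an $\epsilon$-stationary point in $O(\epsilon^{-3})$ iterations.

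The main obstacle is the regularity verification: pinning down explicit, $\kappa$- and $\omega$-dependent bounds for $L_G$ and, above all, a strong-concavity modulus $\mu$ bounded away from $0$ \emph{uniformly} over the large feasible set --- the delicate point being that the positive curvature injected by $r_{\kappa}$ must be dominated by $2(1+\omega)$ for every admissible $(f(\bm{x}),a,b,s,s')$, which is precisely what $\kappa\le 2+2\omega$ must be shown to guarantee. One also has to check that the Lagrangian reformulation of the Corollary keeps $\Omega_{\bm{\tau}}$ compact and leaves $\Omega_{\gamma}=[-1,1]$ convex, and that for TPAUC the extra variable $s$ and the shifted interval $[\max\{-a,b-1\},1]$ do not change the argument; once these are in place, the remaining convergence bookkeeping is entirely that of \cite{huang2022accelerated}.
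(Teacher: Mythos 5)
Your proposal matches the paper's treatment: the paper does not re-derive this result but invokes Theorem~9 of \cite{huang2022accelerated} directly, after noting (as you do) that Asm.\ref{asum:lip} supplies the $L_G$-smoothness and that the objective is strongly concave in $\gamma$ whenever $\kappa \le 2+2\omega$, and then tracing the constants with $b=1$ to get the $O((L_G/\mu)^{3/2}T^{-1/3})$ rate and $T \ge (L_G/\mu)^{4.5}\epsilon^{-3}$. Your additional checks (compact feasible sets, bounded variance, Danskin for $F_{(\cdot)}$) are consistent with and slightly more explicit than the paper's presentation, but the route is essentially identical.
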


\section{Generalization Analysis}
In this section, we theoretically analyze the generalization performance of our proposed estimators for $\mathrm{OPAUC}$ (please see the Appendix.\ref{section:proofs_generalization} for the $\mathrm{TPAUC}$). According to Thm.\ref{thm:step2} in Sec.\ref{sec:method}, we know that the generalization error of OPAUC with a surrogate loss $\ell$ can be measured as:
\begin{equation}
    \mathcal{R}_{\beta}(f) \propto \underset{\cmins}{\min}\ 
    \underset{\cmax }{\max}  \underset{s'\in\Omega_{s'}}{\min}
    \ \colblue{\underset{\bm{z}\sim \mathcal{D}_\mathcal{Z}}{\mathbb{E}}}[G_{op}(f,a,b,\gamma,\bm{z},s')],
\end{equation}
and 
\begin{equation}
    \hat{\mathcal{R}}_{\beta}(f) \propto  \underset{\cmins}{\min}\ 
    \underset{\cmax }{\max}  \underset{s'\in\Omega_{s'}}{\min}
    \ \colbit{\underset{\bm{z}\sim S}{\ehat}}[G_{op}(f,a,b,\gamma,\bm{z},s')],
\end{equation}
Following the ERM paradigm, to prove the uniform convergence result over a hypothesis class $\mathcal{F}$ of the scoring function $f$, we need to show that:
\begin{equation*}
    \begin{split}
        \sup_{f \in \mathcal{F}}\left[ \mathcal{R}_{\beta}(f)  - \hat{\mathcal{R}}_{\beta}(f) \right] \le \epsilon. 
    \end{split}
\end{equation*}
According to the aforementioned discussion, we only need to prove that:
\begin{equation*}
    \begin{split}
        \sup_{f \in \mathcal{F}} & \left[ \underset{\cmins}{\min}\ 
        \underset{\cmax }{\max}  \underset{s'\in\Omega_{s'}}{\min}
        \ \colblue{\underset{\bm{z}\sim \mathcal{D}_\mathcal{Z}}{\mathbb{E}}}[G_{op}(f,a,b,\gamma,\bm{z},s')] \right.\\ 
        & - \left. \underset{\cmins}{\min}\ 
        \underset{\cmax }{\max}  \underset{s'\in\Omega_{s'}}{\min}
        \ \colbit{\underset{\bm{z}\sim S}{\ehat}}[G_{op}(f,a,b,\gamma,\bm{z},s')] \right] \le \epsilon.
    \end{split}
\end{equation*}

To prove this, we need to define the measure of the complexity of the class $\mathcal{F}$.  Here we adopt the Radermacher complexity $\Re$  as in \cite{bartlett2002rademacher}. Specifically, we come to the following definition:
\begin{definition}
The empirical Rademacher complexity of positive and negative instances with respect to $S$ is defined as:
\begin{equation}
    \hat{\Re}_{+}(\mathcal{F})=\underset{\bm{\sigma}}{\mathbb{E}}\left[\underset{f\in\mathcal{F}}{\sup} \
\frac{1}{n_+}\sum_{i=1}^{n_+}\sigma_i f(\bm{x}_i)\right],
\end{equation}

\begin{equation}
    \hat{\Re}_{-}(\mathcal{F})=\underset{\bm{\sigma}}{\mathbb{E}}\left[\underset{f\in\mathcal{F}}{\sup} \
    \frac{1}{n_-}\sum_{j=1}^{n_-}\sigma_j f(\bm{x}'_j)\right]
\end{equation}
where $(\sigma_1, \cdots, \sigma_{n_+})$ and $(\sigma_1, \cdots, \sigma_{n_-})$ are independent uniform random variables taking values in $\{-1, +1\}$. 
\end{definition}

Finally, we come to the generalization bound as follows:

\begin{theorem}
For any $\delta>0$, with probability at least $1-\delta$ over the draw of an i.i.d. sample set $S$ of size $n$, for all $f\in\mathcal{F}$ we have:
\begin{equation*}
    \begin{aligned}
\underset{\cmins}{\min}\ 
\underset{\cmax }{\max}  \underset{s'\in\Omega_{s'}}{\min}
\ \colblue{\underset{\bm{z}\sim \mathcal{D}_\mathcal{Z}}{\mathbb{E}}}[G_{op}(f,a,b,\gamma,\bm{z},s')] &\le \underset{\cmins}{\min}\  \underset{\cmax }{\max}  \underset{s'\in\Omega_{s'}}{\min}
\ \colbit{\underset{\bm{z}\sim S}{\ehat}}[G_{op}(f,a,b,\gamma,\bm{z},s')] \\ 
&+ O(\hat{\Re}_{+}(\mathcal{F}) + \hat{\Re}_{-}(\mathcal{F})) + O( \np^{-1/2} + \beta^{-1}\nn^{-1/2})
\end{aligned}
\end{equation*}
\label{theorem:4}
\end{theorem}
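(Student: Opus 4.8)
The plan is to convert the nested ``min-max-min'' comparison into a single, ordinary uniform deviation between the population and empirical averages of the \emph{fixed} function $G_{op}(f,a,b,\gamma,\bm{z},s')$, and then to control that deviation by a textbook Rademacher/McDiarmid argument. The decisive point is that this last step is legitimate precisely because $G_{op}$ is instance-wise: $\hat{\mathbb{E}}_{\bm{z}\sim S}[G_{op}]$ is an average of summands that are independent given the class labels, so McDiarmid's bounded-differences inequality applies verbatim. This is exactly the independence that the pair-wise (U-statistic) objective lacks, and the reason this proof is far shorter than those in \cite{narasimhan2017support,yang2021all}.

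First I would peel the three extrema off one at a time, using the elementary facts $\inf_x A(x)-\inf_x B(x)\le\sup_x\big(A(x)-B(x)\big)$ and $\sup_x A(x)-\sup_x B(x)\le\sup_x\big(A(x)-B(x)\big)$, applied in turn to $\min_{(a,b)}$, then $\max_{\gamma}$, then $\min_{s'}$. This yields
\begin{equation*}
\min_{(a,b)\in[0,1]^2}\max_{\gamma}\min_{s'\in\Omega_{s'}}\mathbb{E}_{\bm{z}}[G_{op}]-\min_{(a,b)\in[0,1]^2}\max_{\gamma}\min_{s'\in\Omega_{s'}}\hat{\mathbb{E}}_{\bm{z}}[G_{op}]\le\sup_{f\in\mathcal{F},\,(a,b)\in[0,1]^2,\,\gamma\in[-1,1],\,s'\in[0,5]}\big(\mathbb{E}_{\bm{z}}[G_{op}]-\hat{\mathbb{E}}_{\bm{z}}[G_{op}]\big),
\end{equation*}
where I relax the $b$-dependent range $\Omega_{\gamma}=[b-1,1]$ of $\gamma$ to the fixed box $[-1,1]$, which only enlarges the right-hand side; since this holds for every $f$ simultaneously, it suffices for the ``for all $f\in\mathcal{F}$'' claim.

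Next I would split $G_{op}=-\gamma^2+\Phi_+\,y/p+\big(\beta s'+[\Psi_-]_+\big)(1-y)/[\beta(1-p)]$ with $\Phi_+=(f(\bm{x})-a)^2-2(1+\gamma)f(\bm{x})$ and $\Psi_-=(f(\bm{x})-b)^2+2(1+\gamma)f(\bm{x})-s'$. The $-\gamma^2$ and the $\beta s'$ pieces are deterministic in $\bm{z}$ and hence contribute nothing to the gap (up to an $O(n^{-1/2})$ term if the class priors $p,1-p$ are estimated by $n_+/n,n_-/n$, which is dominated by the claimed rates). Rewriting the empirical average over $S$ through per-class empirical averages — $\mathbb{E}_{\bm{z}}[\Phi_+y/p]=\mathbb{E}_{\bm{x}\sim\mathcal{D}_{\mathcal{P}}}[\Phi_+]$, $\hat{\mathbb{E}}_{\bm{z}}[\Phi_+y/p]=\tfrac{1}{n_+}\sum_{i\in\mathcal{P}}\Phi_+$, and the analogues with an extra factor $1/\beta$ for the negative part — the supremum above splits into a positive-class deviation plus $1/\beta$ times a negative-class one. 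For the positive part, $\Phi_+=f(\bm{x})^2-2(a+1+\gamma)f(\bm{x})+a^2$: the constant drops, $u\mapsto u^2$ is $2$-Lipschitz on $[0,1]$, and $|a+1+\gamma|\le 3$, so by linearity of the Rademacher complexity and Talagrand's contraction inequality the induced class has empirical Rademacher complexity $O(\hat{\Re}_+(\mathcal{F}))$; since $\Phi_+$ has a range of length $O(1)$, symmetrization and McDiarmid give $O(\hat{\Re}_+(\mathcal{F}))+O(\sqrt{\log(1/\delta)/n_+})$ \cite{bartlett2002rademacher}. For the negative part, $\Psi_-\in[-5,5]$ and its class has Rademacher complexity $O(\hat{\Re}_-(\mathcal{F}))+O(n_-^{-1/2})$; composing with the $1$-Lipschitz hinge $[\cdot]_+$ (contraction once more, uniformly in the offset $s'$) leaves a nonnegative quantity bounded by $5$, hence a per-sample summand of range $O(1/\beta)$ after the prefactor, and McDiarmid yields $O(\beta^{-1}\hat{\Re}_-(\mathcal{F}))+O(\beta^{-1}\sqrt{\log(1/\delta)/n_-})$ — the statement suppresses the $\beta^{-1}$ on the Rademacher term, treating $\beta$ as a constant there. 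A union bound over the two events ($\delta\to\delta/2$) and adding the contributions completes the argument.

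The two contraction-plus-symmetrization estimates are routine. The step that needs the most care is keeping the composition $\big[\,\text{(quadratic in }f(\bm{x}))-s'\,\big]_+$ under control \emph{uniformly} in the auxiliary variables $(a,b,\gamma,s')$: the trick is to absorb these into bounded coefficients and a bounded offset \emph{first} — so they cost nothing, or at most $O(n_\pm^{-1/2})$, in Rademacher complexity — and only then apply the contraction inequality, all the while tracking where the $1/\beta$, $1/p$ and $1/(1-p)$ factors land (they are what produce the $\beta^{-1}n_-^{-1/2}$ term). The nested-extrema reduction of the first step is the conceptual crux: it turns a seemingly awkward min-max-min generalization question into ordinary instance-wise concentration.
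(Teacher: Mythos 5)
Your proposal is correct and follows essentially the same route as the paper's own proof: peel the nested $\min$--$\max$--$\min$ via the elementary ``difference of extrema $\le$ supremum of differences'' inequalities (the paper's Lem.~\ref{lemma:4}), split the resulting uniform deviation into per-class instance-wise deviations, and bound each by symmetrization plus Talagrand contraction (for the square and the hinge), absorbing the bounded auxiliary variables $(a,b,\gamma,s')$ into coefficients that cost only $O(n_{\pm}^{-1/2})$, exactly as in the paper's Lemmas~\ref{lemma:5}--\ref{lemma:7}. Your explicit handling of the empirical versus true class prior and your remark that the $\beta^{-1}$ factor strictly also attaches to $\hat{\Re}_{-}(\mathcal{F})$ are, if anything, slightly more careful than the paper's treatment, but they do not change the argument or the stated rate.
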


\begin{remark}
    Although the results we obtain are similar to some previous studies. \cite{yang2021all, narasimhan2017support}, our generalization analysis is simpler and does not require complex error decomposition. Moreover, our results hold for all real-valued hypothesis class with outputs in $[0,1]$, while the previous results \cite{yang2021all, narasimhan2017support} only hold for hard-threshold functions.
\end{remark}

\section{Experiment}
In this section, we conduct a series of experiments on different datasets for both $\mathrm{OPAUC}$ and $\mathrm{TPAUC}$ optimization. Due to space limitations, please refer to the Appendix.\ref{section:experiment_details} for the details of implementation and competitors. The source
code is available in \url{https://github.com/Shaocr/PAUCI}.
\subsection{Setups}
We adopt three imbalanced binary classification datasets:  CIFAR-10-LT \cite{elson2007asirra}, CIFAR-100-LT \cite{krizhevsky2009learning} and Tiny-ImgaeNet-200-LT following the instructions in \cite{yang2021all}, where the binary datasets are constructed by selecting one super category as positive class and the other categories as negative class. Please see Appendix.\ref{section:experiment_details} for more details. The evaluation metrics in experiments are $\hat{\mathrm{AUC}}_{\beta}$ and $\hat{\mathrm{AUC}}_{\alpha, \beta}$.

\subsection{Overall Results}

In Tab.\ref{tab:3}, Tab.\ref{tab:4}, we record the performance on test sets of all the methods on three subsets of CIFAR-10-LT, CIFAR-100-LT, and Tiny-Imagent-200-LT. Each method is tuned independently for $\mathrm{OPAUC}$ and $\mathrm{TPAUC}$ metrics. From the results, we make the following remarks: (1) Our proposed methods outperform all the competitors  in most cases. Even for failure cases, our methods attain fairly competitive results compared with the competitors for $\mathrm{OPAUC}$ and $\mathrm{TPAUC}$. (2) In addition, we can see that the normal AUC optimization method AUC-M has less reasonable performance under PAUC metric. This demonstrates the necessity of developing the PAUC optimization algorithm. (3) Approximation methods SOPA-S, AUC-poly, and AUC-exp have lower performance than the unbiased algorithm SOPA and our instance-wise algorithm PAUCI in most cases. Above all, the experimental results show the effectiveness of our proposed method.
\begin{table}[h]
    \centering
    \small
    \caption{
        OPAUC ($\mathrm{FPR}\leq 0.3$) on testing data of different imbalanced datasets. The highest and the second best results are highlighted in \textcolor{orgin}{orange} and \textcolor{second}{blue}, respectively.
    }
    \setlength\tabcolsep{2.5pt}
    \label{tab:3}
        \begin{tabular}{c|ccc|ccc|ccc}
    \toprule
        ~ & \multicolumn{3}{c|}{CIFAR-10-LT} & \multicolumn{3}{c|}{CIFAR-100-LT} & \multicolumn{3}{c}{Tiny-Imagenet-LT} \\ \midrule Methods & Subset 1 & Subset 2& Subset 3&Subset 1&Subset 2&Subset 3&Subset 1&Subset 2&Subset 3 \\ \midrule
        SOPA \cite{zhu2022auc} & \textcolor{second}{0.7659} & \textcolor{second}{0.9688} & \textcolor{second}{0.7651} & \textcolor{second}{0.9108} & \textcolor{second}{0.9875} & 0.8483& 0.8157 & 0.9037 & 0.9066 \\ 
        SOPA-S \cite{zhu2022auc} & 0.7548 & 0.9674 & 0.7542 & 0.9033 & 0.9860 & 0.8449& 0.8180 & \textcolor{second}{0.9087} & 0.9095 \\ 
        AGD-SBCD \cite{zhu2022auc} & 0.7526 & 0.9615 & 0.7497 & 0.9105 & 0.9814 & 0.8406& 0.8135 & 0.9081 & 0.9057 \\ 
        AUC-poly \cite{yang2021all} & 0.7542 & 0.9672 & 0.7538 & 0.9027 & 0.9859 & 0.8441& 0.8185 & 0.9084 & \textcolor{second}{0.9100} \\ 
        AUC-exp \cite{yang2021all} & 0.7347 & 0.9620 & 0.7457 & 0.8987 & 0.9850 & 0.8407& 0.8127 & 0.9026 & 0.9049 \\ 
        CE & 0.7417& 0.9431 & 0.7428 & 0.8903 & 0.9695 & 0.8321& 0.8023 & 0.8917 & 0.8878 \\
        MB \cite{kar2014online} & 0.7492 & 0.9648 & 0.7500 & 0.9003 & 0.9804 & \textcolor{orgin}{\textbf{0.8575}}& \textcolor{second}{0.8193} & 0.9072 & 0.9091 \\ 
        AUC-M \cite{ying2016stochastic}& 0.7334 & 0.9609 & 0.7442 & 0.8996 & 0.9845 & 0.8403 & 0.8102 & 0.9011 & 0.9043 \\ \midrule
        PAUCI & \textcolor{orgin}{\textbf{0.7721}} & \textcolor{orgin}{\textbf{0.9716}} & \textcolor{orgin}{\textbf{0.7746}} & \textcolor{orgin}{\textbf{0.9155}} & \textcolor{orgin}{\textbf{0.9889}} & \textcolor{second}{0.8492} & \textcolor{orgin}{\textbf{0.8267}}&
        \textcolor{orgin}{\textbf{0.9214}} & \textcolor{orgin}{\textbf{0.9217}}\\
        \bottomrule
    \end{tabular}
\end{table}

\begin{table}[h]
    \centering
    \small
    \caption{
    TPAUC ($\mathrm{TPR}\geq 0.5$, $\mathrm{FPR}\leq 0.5$) on testing data of different imbalanced datasets.
    }
    \setlength\tabcolsep{2.5pt}
    \label{tab:4}
    \begin{tabular}{c|ccc|ccc|ccc}
    \toprule
        ~ & \multicolumn{3}{c|}{CIFAR-10-LT} & \multicolumn{3}{c|}{CIFAR-100-LT} & \multicolumn{3}{c}{Tiny-Imagenet-LT} \\ \midrule Methods & Subset 1 & Subset 2& Subset 3&Subset 1&Subset 2&Subset 3&Subset 1&Subset 2&Subset 3 \\ \midrule
        SOPA \cite{zhu2022auc} & \textcolor{second}{0.7096} & \textcolor{second}{0.9593} & \textcolor{second}{0.7220} & \textcolor{second}{0.8714} & \textcolor{second}{0.9855} & 0.7485& \textcolor{second}{0.7417} & \textcolor{second}{0.8681} & \textcolor{second}{0.8650} \\ 
        SOPA-S \cite{zhu2022auc} & 0.6603& 0.9456 & 0.6917 & 0.8617 & 0.9812 & 0.7419& 0.7354 & 0.8666 & 0.8628 \\ 
        AUC-poly \cite{yang2021all} & 0.6804 & 0.9543 & 0.6974 & 0.8618 & 0.9835 & 0.7431& 0.7349 & 0.8676 & 0.8627 \\ 
        AUC-exp \cite{yang2021all}& 0.6669 & 0.9493 & 0.6930 & 0.8613 & 0.9827 & 0.7447& 0.7328 & 0.8672 & 0.8626 \\ 
        CE & 0.6420 & 0.9353 & 0.6798 & 0.8467 & 0.9603 & 0.7311& 0.7223 & 0.8517 & 0.8478 \\
        MB \cite{kar2014online}& 0.6437 & 0.9492 & 0.6913 & 0.8665 & 0.9677 & \textcolor{orgin}{\textbf{0.7583}}& 0.7348 & 0.8651 & 0.8624 \\ 
        AUC-M \cite{ying2016stochastic} & 0.6520 & 0.9381 & 0.6821 & 0.8505 & 0.9822 & 0.7324 & 0.7361 & 0.8517 & 0.8598 \\\midrule
        PAUCI & \textcolor{orgin}{\textbf{0.7192}} & \textcolor{orgin}{\textbf{0.9663}} & \textcolor{orgin}{\textbf{0.7305}} & \textcolor{orgin}{\textbf{0.8814}} & \textcolor{orgin}{\textbf{0.9874}} & \textcolor{second}{0.7497} & \textcolor{orgin}{\textbf{0.7618}} &\textcolor{orgin}{\textbf{0.8875}} & \textcolor{orgin}{\textbf{0.8860}}\\
        \bottomrule
    \end{tabular}
\end{table}

%
%

\subsection{Convergence Analysis}
In the convergence experiments, for sake of fairness, we did not use warm-up. All algorithms use hyperparameters in the performance experiments. We show the plots of training convergence in Fig.\ref{fig:opaucconvergence} and Fig.\ref{fig:tpaucconvergence} on CIFAR-10 for both $\mathrm{OPAUC}$ and $\mathrm{TPAUC}$. Due to the space limitation, the other results could be found in Appendix.\ref{section:experiment_details}. According to the figures, we can make the following observations: (1) Our algorithm and SOPA converge faster than other methods for $\mathrm{OPAUC}$. However, for $\mathrm{TPAUC}$ optimization, the SOPA converges very slowly due to its complicated algorithm, while our method still shows the best convergence property in most cases. (2) It's notable that our algorithm converges to stabilize after twenty epochs in most cases. That means our method has better stability in practice.

\begin{figure}[htbp]
	\centering
		\begin{minipage}{0.32\linewidth}
		\centering
		\includegraphics[width=\linewidth]{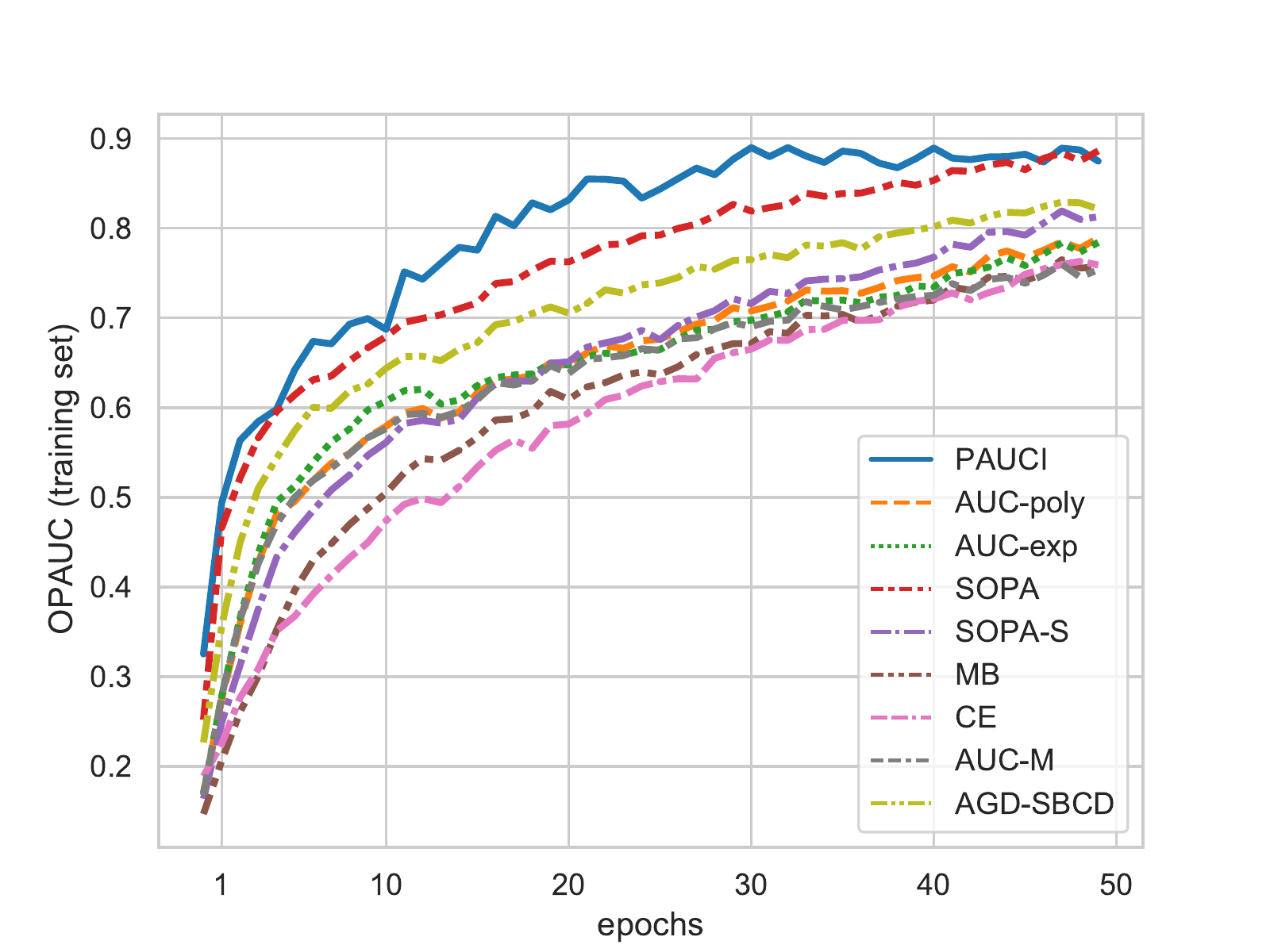}
		\caption*{(a) CIFAR-100-Long-Tail-1}
	\end{minipage}
	\begin{minipage}{0.32\linewidth}
		\centering
		\includegraphics[width=\linewidth]{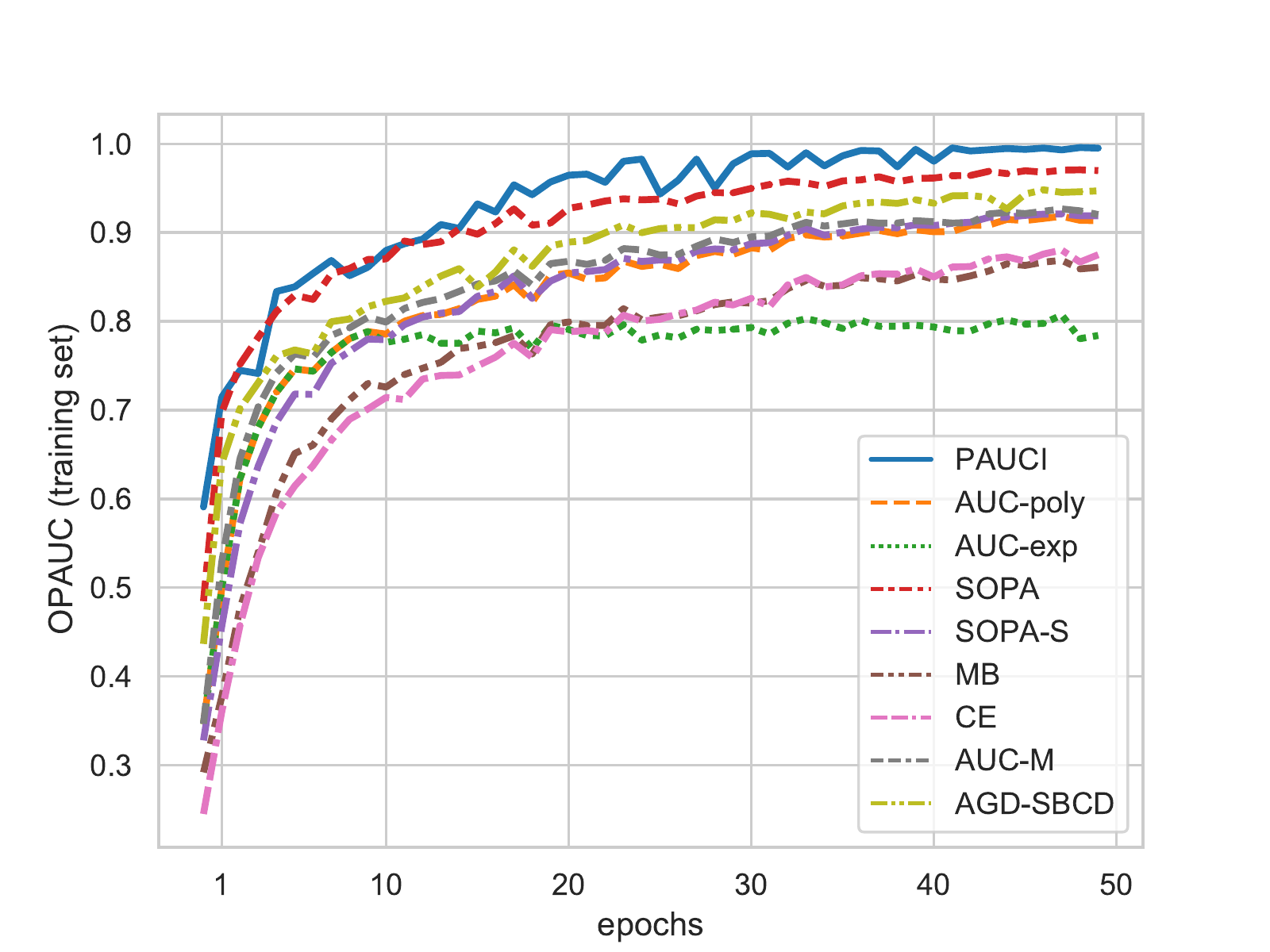}
		\caption*{(b) CIFAR-100-Long-Tail-2}
	\end{minipage}
	\begin{minipage}{0.32\linewidth}
		\centering
		\includegraphics[width=\linewidth]{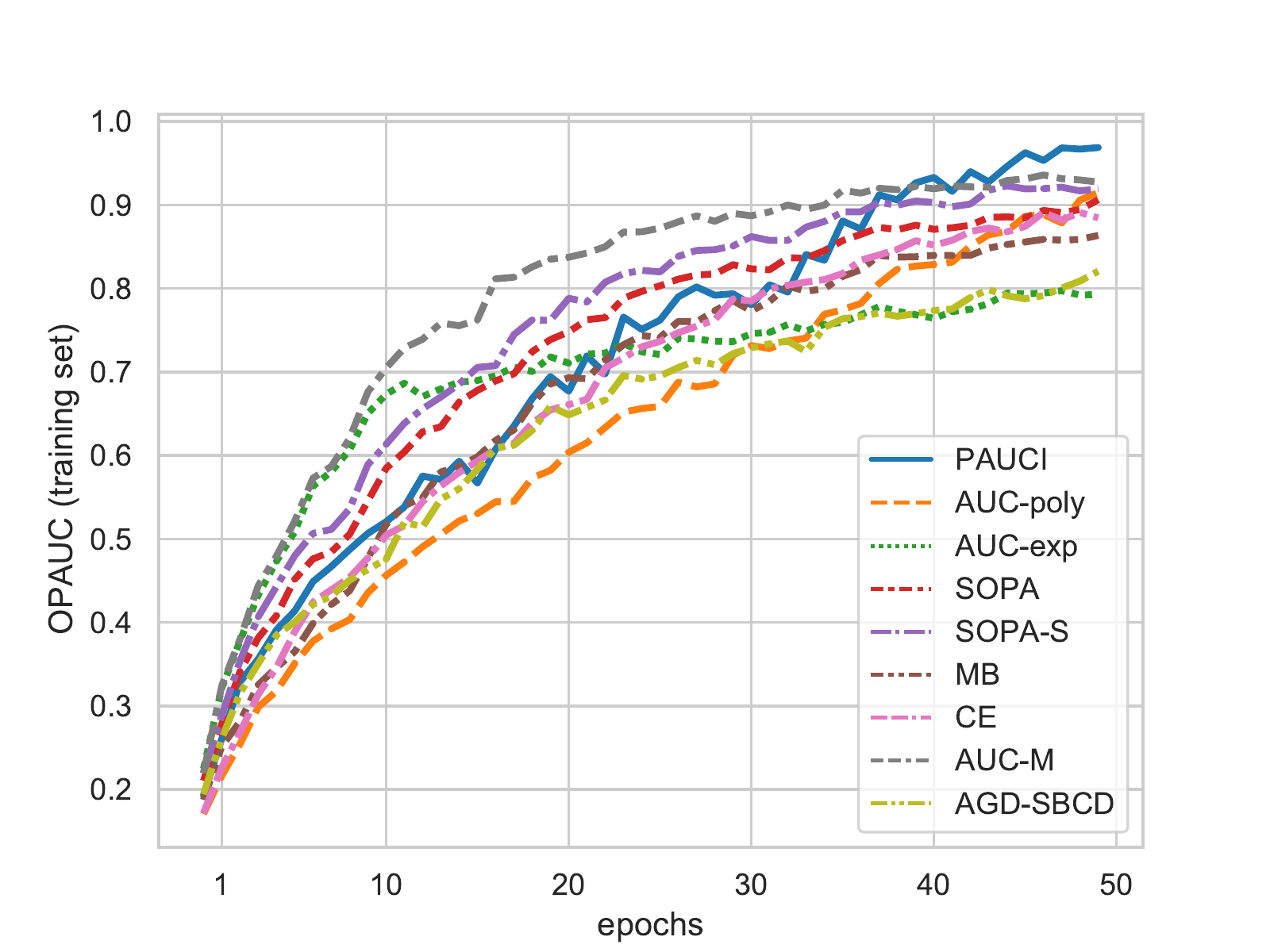}
		\caption*{(c) CIFAR-100-Long-Tail-3}
	\end{minipage}
	\caption{Convergence of OPAUC optimization.}
\end{figure}

\begin{figure}[htbp]
	\centering
		\begin{minipage}{0.32\linewidth}
		\centering
		\includegraphics[width=\linewidth]{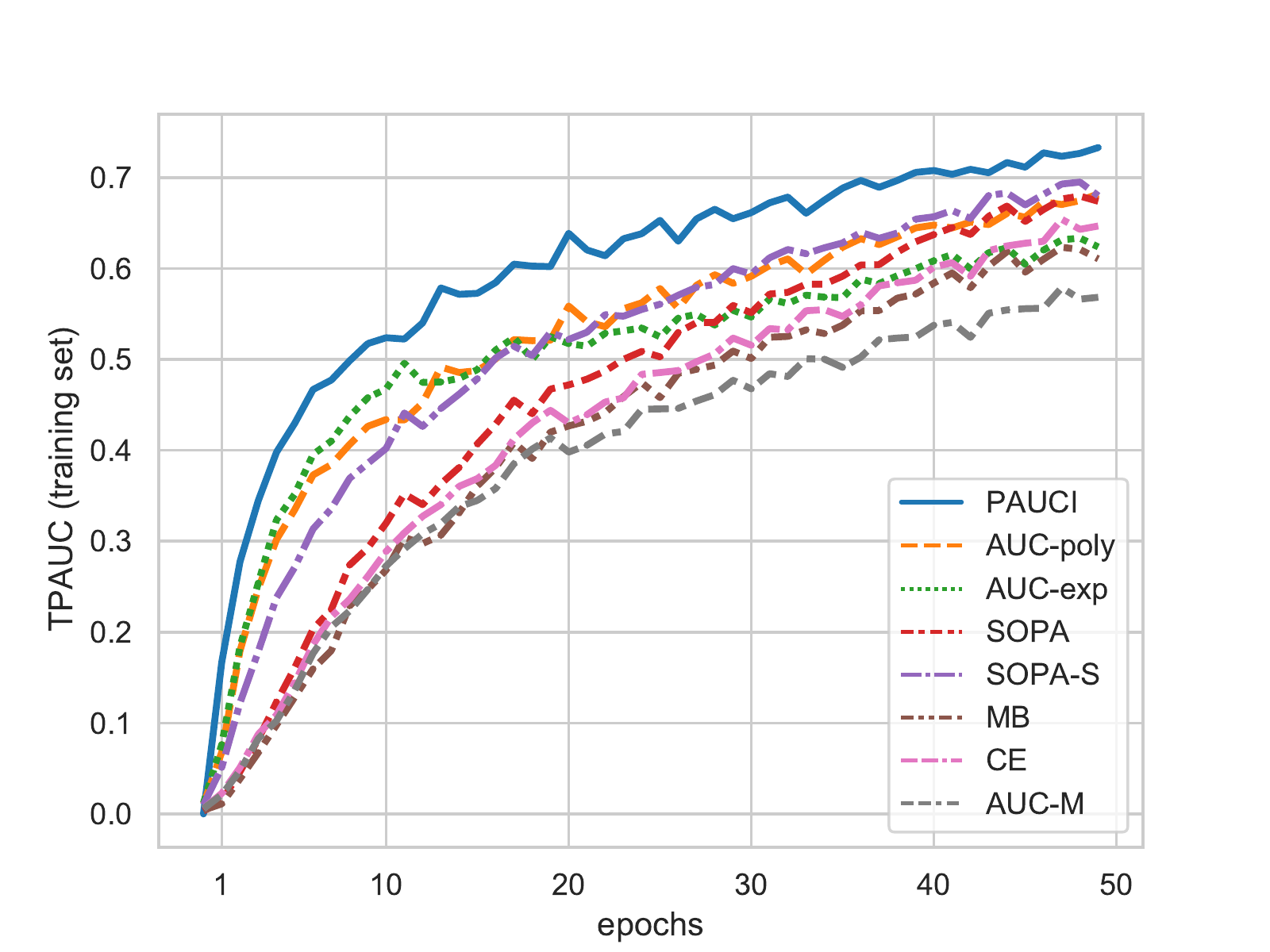}
		\caption*{(d) CIFAR-100-Long-Tail-1}
	\end{minipage}
	\begin{minipage}{0.32\linewidth}
		\centering
		\includegraphics[width=\linewidth]{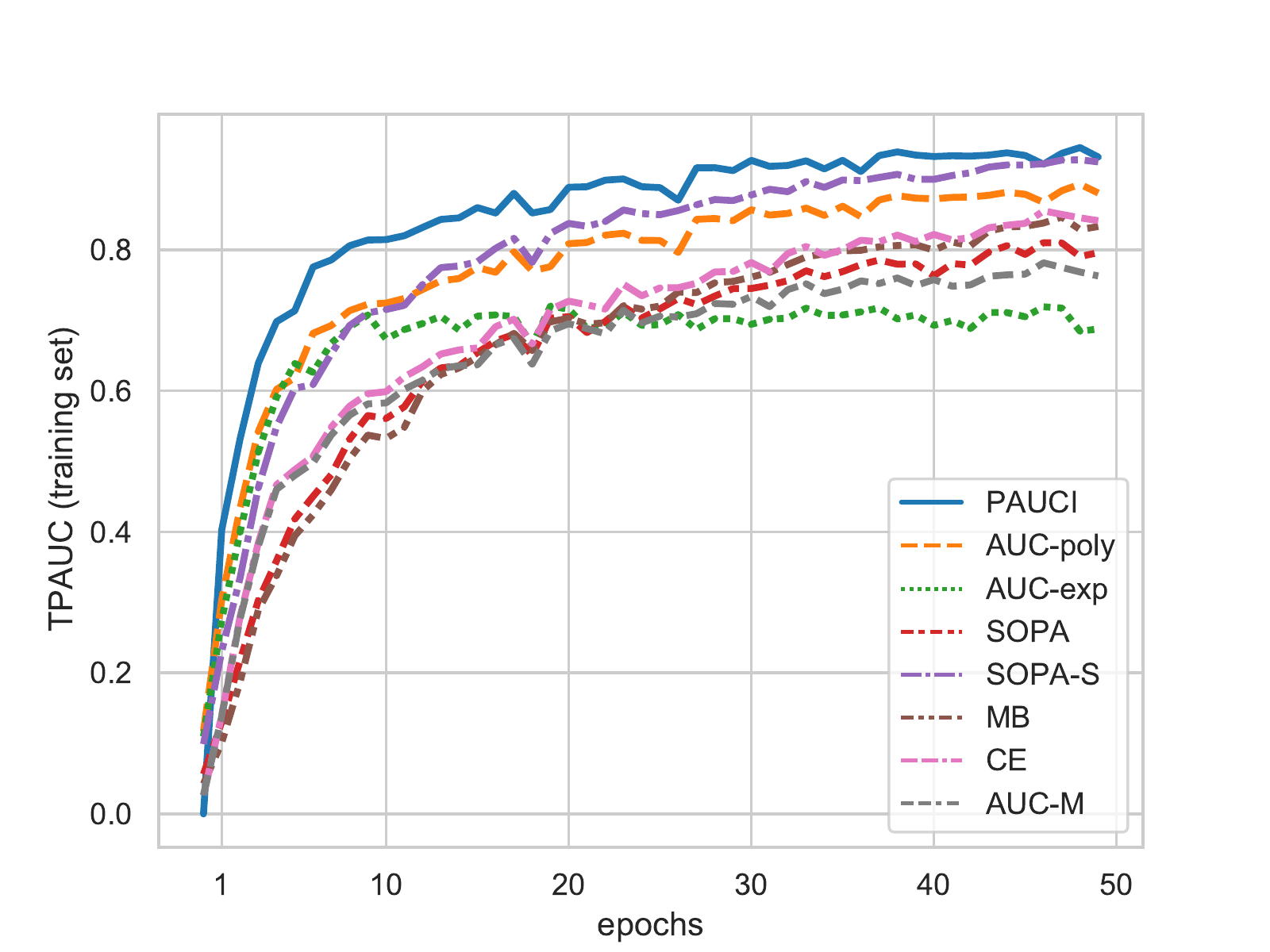}
		\caption*{(e) CIFAR-100-Long-Tail-2}
	\end{minipage}
	\begin{minipage}{0.32\linewidth}
		\centering
		\includegraphics[width=\linewidth]{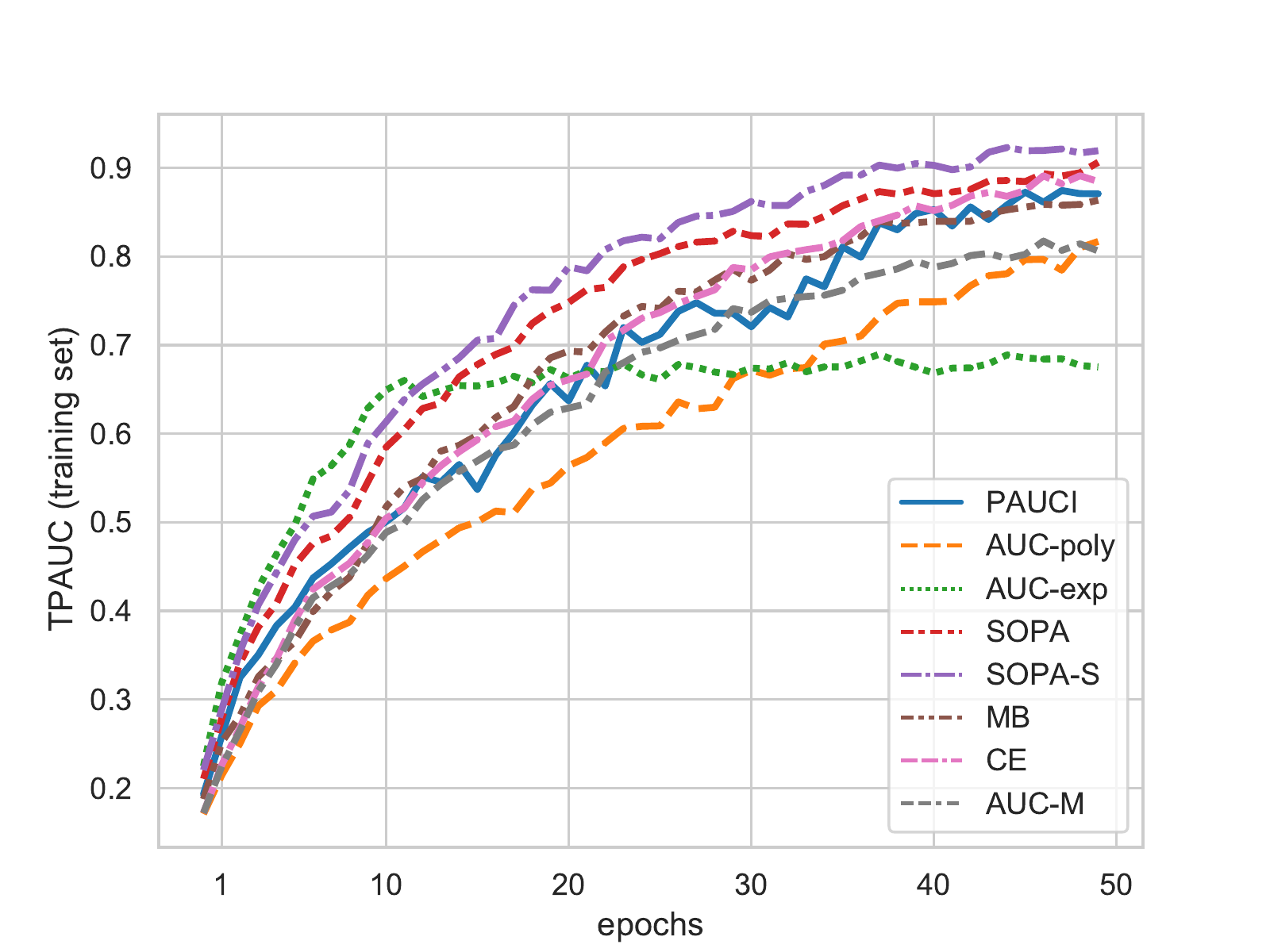}
		\caption*{(f) CIFAR-100-Long-Tail-3}
	\end{minipage}
	\caption{Convergence of TPAUC optimization.}
\end{figure}

\section{Conclusion}
In this paper, we focus on designing an efficient and asymptotically unbiased algorithm for PAUC. We propose a nonconvex strongly concave minimax instance-wise formulation for $\mathrm{OPAUC}$ and $\mathrm{TPAUC}$. In this way, we incorporate the instances selection into the loss calculation to eliminate the score ranking challenge.
For $\mathrm{OPAUC}$ and $\mathrm{TPAUC}$, we employ an efficient stochastic minimax algorithm that ensures we can find a $\epsilon$-first order saddle point after $O(\epsilon^{-3})$ iterations. Moreover, we present a theoretical analysis of the generalization error of our formulation. Our conclusion may contribute to future work about AUC generalization. Finally, empirical studies over a range of long-tailed benchmark datasets speak to the effectiveness of our proposed algorithm. 

\section{Acknowledgements}
This work was supported in part by the National Key R\&D Program of China under Grant 2018AAA0102000, in part by National Natural Science Foundation of China: U21B2038, 61931008, 61836002, 6212200758 and 61976202, in part by the Fundamental Research Funds for the Central Universities, in part by Youth Innovation Promotion Association CAS, in part by the Strategic Priority Research Program of Chinese Academy of Sciences, Grant No. XDB28000000, in part by the National Postdoctoral Program for Innovative Talents under Grant BX2021298, and in part by mindspore, which is a new AI computing framework \footnote{https://www.mindspore.cn/}.

\bibliography{reference}

\appendix

\section*{Checklist}

\begin{enumerate}

\item For all authors...
\begin{enumerate}
  \item Do the main claims made in the abstract and introduction accurately reflect the paper's contributions and scope?
    \answerYes{}
  \item Did you describe the limitations of your work?
   \answerNA{}
  \item Did you discuss any potential negative societal impacts of your work? \answerNo{} We have not discovered any negative societal impacts in our study.
  \item Have you read the ethics review guidelines and ensured that your paper conforms to them?
    \answerYes{}
\end{enumerate}

\item If you are including theoretical results...
\begin{enumerate}
  \item Did you state the full set of assumptions of all theoretical results?
    \answerYes{}
        \item Did you include complete proofs of all theoretical results?
    \answerYes{}
\end{enumerate}

\item If you ran experiments...
\begin{enumerate}
  \item Did you include the code, data, and instructions needed to reproduce the main experimental results (either in the supplemental material or as a URL)?
    \answerYes{}
  \item Did you specify all the training details (e.g., data splits, hyperparameters, how they were chosen)?
    \answerYes{}
        \item Did you report error bars (e.g., with respect to the random seed after running experiments multiple times)?
    \answerNo{} Our experiments are very time consuming.
        \item Did you include the total amount of compute and the type of resources used (e.g., type of GPUs, internal cluster, or cloud provider)?
    \answerYes{}
\end{enumerate}

\item If you are using existing assets (e.g., code, data, models) or curating/releasing new assets...
\begin{enumerate}
  \item If your work uses existing assets, did you cite the creators?
    \answerYes{}
  \item Did you mention the license of the assets?
    \answerYes{}
  \item Did you include any new assets either in the supplemental material or as a URL?
    \answerYes{}
  \item Did you discuss whether and how consent was obtained from people whose data you're using/curating?
    \answerNo{} Because we use the open source software and data.
  \item Did you discuss whether the data you are using/curating contains personally identifiable information or offensive content?
    \answerNA{}
\end{enumerate}

\item If you used crowdsourcing or conducted research with human subjects...
\begin{enumerate}
  \item Did you include the full text of instructions given to participants and screenshots, if applicable?
    \answerNA{}
  \item Did you describe any potential participant risks, with links to Institutional Review Board (IRB) approvals, if applicable?
    \answerNA{}
  \item Did you include the estimated hourly wage paid to participants and the total amount spent on participant compensation?
    \answerNA{}
\end{enumerate}

\end{enumerate}


\newpage

\begin{appendices}
\settocdepth{section}
\addtocontents{ptc}{\setcounter{tocdepth}{3}}

\LARGE \textbf{List of Appendix}
\normalsize
\startcontents[sections]
\printcontents[sections]{}{1}{}

\newpage

\section{Related work}
\textbf{Deep AUC Optimization.} 
In the past few decades, AUC optimization has already achieved remarkable success in the long-tailed/imbalanced learning task \cite{yang2022auc}. A partial list of the related literature includes \cite{graepel2000large, cortes2003auc, yan2003optimizing, joachims2005support, pepe2000combining, freund2003efficient, rakotomamonjy2004support, ying2016stochastic}. In recent age, many studies focused on AUC optimization with stochastic gradient method. For example, on top of the square surrogate loss, \cite{ying2016stochastic} first proposed a minimax reformulation of the AUC. With a strongly convex regularizer, \cite{natole2018stochastic} improved the convergence rate of the stochastic learning algorithm for AUC to $O(1/T)$. In succession, \cite{liu2019stochastic, guo2020communication, yuan2021compositional} proposed some AUC optimization methods that can be applied to nonconvex deep neural networks.

\textbf{Partial AUC (PAUC) Optimization.} \cite{mcclish1989analyzing} first introduced the concept of PAUC. Earlier studies related to PAUC only paid attention to the simplest linear models. In \cite{pepe2000combining}, the PAUC is first optimized by a distribution-free rank-based method. \cite{wang2011marker} developed a non-parametric estimate of the PAUC, and selected features at each step to build the final classifier. \cite{narasimhan2013structural} develops a cutting plane algorithm to find the most violated constraint instance, decomposing PAUC optimization into subproblems and solving them by an efficient structural SVM-based approach. However, most of the above approaches often fall into the non-differentiable property or intractable optimization problems, posing a significant obstacle to the end-to-end implementation. Using the Implicit Function Theorem, \cite{kumar2021implicit} formulated a rate-constrained optimization problem that modeled the quantile threshold as the output of a function of model parameters. As a milestone study, \cite{yang2021all} simplifies the challenging sample-selected problem involved in PAUC optimization in a bi-level manner and thus facilitates the end-to-end optimization for PAUC of deep learning. Concretely, the inner-level optimization achieves instances selection, and the outer-level optimization minimizes the loss. However, their estimation may suffer from an approximation error with true PAUC. \cite{zhu2022auc,yao2022large} proposed a smooth estimator of PAUC and provided a sound theoretical convergence guarantee of their algorithm. Nevertheless, their algorithm is limited by a slow convergence rate, especially for TPAUC.

\textbf{Generalization Analysis for Partial AUC Optimization.}  \cite{narasimhan2017support} presented the first generalization analysis for $\mathrm{OPAUC}$ and derived a uniform convergence generalization bound. Following their work, a recent study \cite{yang2021all} extended this generalization bound to $\mathrm{TPAUC}$. However, limited by the pair-wise form of AUC, all of above studies require complicated decomposition. Moreover, their generalization analysis only hold for hard-threshold functions and VC-dimension. Based on our instance-wise reformulation, we show that the generalization of partial AUC is as simple as other instance-wise algorithm and can deal with real-valued score functions by Rademacher complexity.

\section{Convergence of the Bias without Regularization}
\label{uniform_convergence}

Take $\mathrm{OPAUC}$ as an example, we will prove that the approximation induced by $r_k$ has a finite convergence rate which vanishes when $\kappa\rightarrow \infty$. For the sake of convenience, we denote the bias as:
\begin{align*}
    & \Delta_\kappa = 
    \min _{f,(a, b)\in[0,1]^2} 
    \max _{\gamma\in\Omega_{\gamma}} 
    \min _{s^{\prime}\in\Omega_{s'}} 
    \underset{\bm{z} \sim S}{\hat{\mathbb{E}}}\Big[
    G_{o p}^{\kappa}\left(f, a, b, \gamma, \boldsymbol{z}, s^{\prime}\right)
    \Big]\\
    &~~~~~~~~~~~~-
    \min _{ f,(a, b)\in[0,1]^2} 
    \max _{\gamma\in\Omega_{\gamma}} 
    \min _{s^{\prime}\in\Omega_{s'}} 
    \underset{\bm{z} \sim S}{\hat{\mathbb{E}}}\Big[
    G_{o p}(f,a, b, \gamma, \boldsymbol{z}, s^{\prime})\Big]
\end{align*}

Specifically, we can also prove the following convergence condition 
holds without the regularization term:
\begin{theorem}
With the assumption that $f(\bm{x}) \in [0,1], \forall \bm{x}$, we have the following convergence result:
\begin{equation}
\begin{aligned}
& \lim _{\kappa \rightarrow \infty}
\bigg|
\min _{f,(a, b)\in[0,1]^2} 
\max _{\gamma\in\Omega_{\gamma}} 
\min _{s^{\prime}\in\Omega_{s'}} 
\underset{\bm{z} \sim S}{\hat{\mathbb{E}}}\Big[
G_{o p}^{\kappa}\left(f, a, b, \gamma, \boldsymbol{z}, s^{\prime}\right)
\Big]\\
&~~~~~~~~~~~~-
\min _{ f,(a, b)\in[0,1]^2} 
\max _{\gamma\in\Omega_{\gamma}} 
\min _{s^{\prime}\in\Omega_{s'}} 
\underset{\bm{z} \sim S}{\hat{\mathbb{E}}}\Big[
G_{o p}(f,a, b, \gamma, \boldsymbol{z}, s^{\prime})\Big]
\bigg| \\
&=0. 
\end{aligned}
\end{equation}
Moreover, we can also obtain a convergence rate:
\begin{equation}
    \Delta_\kappa = O(1/\kappa).
\end{equation}
\end{theorem}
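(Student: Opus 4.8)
The plan is to reduce the claimed bound to a pointwise statement about the gap between $r_\kappa(\cdot)$ and $[\cdot]_+$, and then push it through the three nested optimizations using the fact that $\min$/$\max$ operations are nonexpansive with respect to the sup-norm. The starting observation is the elementary inequality
\[
0 \le r_\kappa(x) - [x]_+ = \frac{\log\left(1+\exp(\kappa\cdot x)\right)}{\kappa} - [x]_+ \le \frac{\log 2}{\kappa},
\]
which holds for every $x \in \mathbb{R}$: the difference is exactly $\tfrac{1}{\kappa}\log\bigl(1+\exp(-\kappa|x|)\bigr)$, which lies in $(0, \tfrac{\log 2}{\kappa}]$. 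Since $G_{op}^{\kappa}$ differs from $G_{op}$ only by replacing one occurrence of $[\,\cdot\,]_+$ (inside the negative-instance term, weighted by $(1-y)/[\beta(1-p)]$) with $r_\kappa(\cdot)$, we get the uniform pointwise bound
\[
\bigl| G_{op}^{\kappa}(f,a,b,\gamma,\bm{z},s') - G_{op}(f,a,b,\gamma,\bm{z},s') \bigr| \le \frac{\log 2}{\kappa\,\beta(1-p)}
\]
for all admissible $(f,a,b,\gamma,s')$ and all $\bm{z}$. Taking the empirical expectation $\hat{\mathbb{E}}_{\bm{z}\sim S}$ preserves this bound since it is a convex average.

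Next I would propagate this uniform bound through the optimizations. The key lemma is that for any two functions $\phi,\psi$ on a common domain with $\sup|\phi-\psi|\le c$, one has $|\inf\phi - \inf\psi|\le c$ and $|\sup\phi - \sup\psi|\le c$; iterating this for the composite operator $\min_{f,(a,b)}\max_{\gamma}\min_{s'}$ (each layer applied to the previous layer's value function) yields
\[
\Bigl| \min_{\cmin}\max_{\gamma\in\Omega_\gamma}\min_{s'\in\Omega_{s'}} \hat{\mathbb{E}}[G_{op}^{\kappa}] - \min_{\cmin}\max_{\gamma\in\Omega_\gamma}\min_{s'\in\Omega_{s'}} \hat{\mathbb{E}}[G_{op}] \Bigr| \le \frac{\log 2}{\kappa\,\beta(1-p)}.
\]
This is precisely $|\Delta_\kappa| \le \frac{\log 2}{\kappa\,\beta(1-p)} = O(1/\kappa)$, and letting $\kappa\to\infty$ gives the limit statement. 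The constant depends on $\beta$ and the positive proportion $p$, both fixed, so it is absorbed into the $O(\cdot)$.

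The main (minor) obstacle is bookkeeping rather than anything deep: one must verify that the value functions produced at each stage of the nested $\min$–$\max$–$\min$ are well-defined (finite) on the relevant compact domains, so that the nonexpansiveness lemma applies at each layer. This follows from $f(\bm{x})\in[0,1]$, $(a,b)\in[0,1]^2$, $\gamma\in\Omega_\gamma$ compact, $s'\in\Omega_{s'}=[0,5]$ compact, and continuity of $G_{op}$ and $G_{op}^{\kappa}$ in all arguments — so all the inner extrema are attained and finite, and the bound transfers cleanly. One subtlety worth a sentence: the bound holds with the \emph{same} constant independent of which variable is being optimized because the pointwise gap $\tfrac{\log 2}{\kappa\beta(1-p)}$ does not depend on $(f,a,b,\gamma,s')$ at all; hence no union bound or covering argument over the hypothesis class is needed here, in contrast to the generalization analysis.
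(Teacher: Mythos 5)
Your proposal is correct and follows essentially the same route as the paper: bound the uniform gap between $r_\kappa(\cdot)$ and $[\cdot]_+$ by $\log 2/\kappa$ and propagate it through the nested $\min$--$\max$--$\min$ via nonexpansiveness of extrema under sup-norm perturbations. Your execution is if anything slightly cleaner — the exact identity $r_\kappa(x)-[x]_+=\tfrac{1}{\kappa}\log(1+e^{-\kappa|x|})$ gives the bound on all of $\mathbb{R}$ (the paper instead restricts the argument to $[-5,5]$ and argues the gap is maximized at $x=0$ via monotonicity), and you retain the weighting constant $1/[\beta(1-p)]$ that the paper silently drops, which only affects the constant in $O(1/\kappa)$.
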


\begin{proof}
Denote:

\begin{equation}
    \begin{aligned}
\Delta_\kappa =&\bigg|
\min _{f,(a, b)\in[0,1]^2} 
\max _{\gamma\in\Omega_{\gamma}} 
\min _{s^{\prime}\in\Omega_{s'}} 
\underset{\bm{z} \sim S}{\hat{\mathbb{E}}}\Big[
G_{o p}^{\kappa}\left(f, a, b, \gamma, \boldsymbol{z}, s^{\prime}\right)
\Big]\\
&~-
\min _{ f,(a, b)\in[0,1]^2} 
\max _{\gamma\in\Omega_{\gamma}} 
\min _{s^{\prime}\in\Omega_{s'}} 
\underset{\bm{z} \sim S}{\hat{\mathbb{E}}}\Big[
G_{o p}(f,a, b, \gamma, \boldsymbol{z}, s^{\prime})\Big]
\bigg| .
\end{aligned}
\end{equation}

First, we have:

\begin{equation}
\begin{aligned}
& \limsup_{\kappa \rightarrow +\infty}\Delta_\kappa \leq & \underbrace{
\limsup _{\kappa \rightarrow +\infty} 
\sup _{f, (a, b)\in[0,1]^2, \gamma\in \Omega_{\gamma}, s^{\prime}\in \Omega_{s'}, 
\boldsymbol{z}\sim\mathcal{D}_{\mathcal{Z}}}
\left|
\frac{\log(1+\exp(\kappa\cdot g))}{\kappa} - 
[g]_+
\right|}_{(a)}.
\end{aligned}
\end{equation}

where $g=(f(\bm{x})-b)^2+2(1+\gamma)f(\bm{x})-s'$ and $[x]_+=\max\{x,0\}$.  Since $g \in[-5,5]$ in the feasible set, we have:

\begin{equation}
(a)\le \limsup _{\kappa \rightarrow +\infty} 
\sup _{x\in[-5,5]}
\left|
\frac{\log(1+\exp(\kappa\cdot x))}{\kappa} - 
[x]_+
\right|.
\end{equation}

Next we prove that 

\begin{equation}
\underset{\kappa\to\infty}{\limsup} 
\underset{x \in [-5,5]}{\sup}
\left[\left|
\frac{\log(1+\exp(\kappa\cdot x))}{\kappa} - 
[x]_+
\right|\right] \le 0.
\end{equation}

For the sake of simplicity, we denote:

\begin{equation}
\ell(x) = \left|
\frac{\log(1+\exp(\kappa\cdot x))}{\kappa} - 
[x]_+
\right|.
\end{equation}

It is easy to see that, when $x<0$, we have:

\begin{equation}
\ell(x)^\prime = \left(\frac{\log(1+\exp(\kappa \cdot x))}{\kappa}\right)^\prime \ge 0.
\end{equation}

When $x>0$, we have:

\begin{equation}
    \ell(x)'=\left(\frac{\log(1+\exp(\kappa \cdot x))}{\kappa}-x\right)^\prime \le 0.
\end{equation}

Hence, the supremum must be attained at $x=0$. We thus have:

\begin{equation}
    (a)\le \limsup_{\kappa\rightarrow +\infty} \frac{\log(2)}{\kappa}= 0 .
\end{equation}

Obviously, the absolute value ensures that:

\begin{equation}
    \liminf_{\kappa\rightarrow +\infty}\Delta_\kappa\ge 0.
\end{equation}

The result follows from the fact:

\begin{equation}
    0 \le \liminf_{\kappa \rightarrow +\infty} \Delta_\kappa \le \limsup_{\kappa \rightarrow +\infty} \Delta_\kappa \le 0.
\end{equation}

Moreover, from the proof above, we also obtain a convergence rate:
\begin{equation}
    \Delta_\kappa = O(1/\kappa).
\end{equation}

\end{proof}

\section{The Constrained Reformulation}

\label{val_constraintval}
In this section, we will prove that the constrained reformulation which is used in the proof of Thm.\ref{thm:step2} and  Thm.\ref{tpthm:step2}. Our proof can be established by Lem.\ref{lem:A}, Lem.\ref{lem:B}, and Thm.\ref{thm:A}. Throughout the proof, we will define:

\begin{equation}
 \begin{aligned}
&a^* = \mathbb{E}_{\bm{x}\sim\mathcal{D}_{\mathcal{P}}}[f(\bm{x})] &:= E_+\\
&b^* = \mathbb{E}_{\bm{x}'\sim\mathcal{D}_{\mathcal{N}}}[f(\bm{x}')|f(\bm{x}')\ge \eta_\beta(f)] &:=E_- \\
&b^* - a^* & := \Delta E\\ 
&\tilde{a}^* = \mathbb{E}_{\bm{x}\sim\mathcal{D}_{\mathcal{P}}}[f(\bm{x})|f(\bm{x})\le \eta_\alpha(f)] &:= \tilde{E}_+\\
&b^* - \tilde{a}^* & := \Delta \tilde{E}\\
& \mathbb{E}_{\bm{x}\sim\mathcal{D}_\mathcal{P}}[(f(\bm{x})-a)^2]& :={E}_a\\ 
& \mathbb{E}_{\bm{x}\sim\mathcal{D}_\mathcal{P}}[(f(\bm{x})-a)^2|
f(\bm{x})\leq\eta_\alpha(f)]& :=\tilde{E}_a\\ 
& \mathbb{E}_{\bm{x}'\sim\mathcal{D}_{\mathcal{N}}}[(f(\bm{x}')-b)^2|f(\bm{x}')\geq\eta_{\beta}(f)]  &:= E_b \\
&\mathbb{E}_{\bm{x}\sim\mathcal{D}_\mathcal{P}}[f(\bm{x})^2|
f(\bm{x})\leq\eta_\alpha(f)] &:=E_{+,2}\\
& \mathbb{E}_{\bm{x}'\sim\mathcal{D}_{\mathcal{N}}}[f(\bm{x}')^2|f(\bm{x}')\geq\eta_{\beta}(f)] &:=E_{-,2}
\end{aligned}   
\end{equation}

\begin{lemma}\label{lem:A}
(The Reformulation for OPAUC) For a fixed scoring function $f$, the following two problems shares the same optimum, given that the scoring function satisfies: $f(\bm{x}) \in [0,1],~ \forall \bm{x}$:
\begin{equation}
    \begin{aligned}
    \boldsymbol{(OP1)} \min_{(a,b)\in[0,1]^2}\max_{\gamma \in [-1,1]}
\mathbb{E}_{\bm{x}\sim\mathcal{D}_\mathcal{P}}[(f(\bm{x})-a)^2] +
\mathbb{E}_{\bm{x}'\sim\mathcal{D}_\mathcal{N}}[(f(\bm{x}')-b)^2|
f(\bm{x}')\geq\eta_\beta(f)] \\
+2\Delta E + 2\gamma \Delta E-\gamma^2
\\
\boldsymbol{(OP2)} \min_{(a,b)\in[0,1]^2}\max_{\gamma \in [b-1,1]}
\mathbb{E}_{\bm{x}\sim\mathcal{D}_\mathcal{P}}[(f(\bm{x})-a)^2] +
\mathbb{E}_{\bm{x}'\sim\mathcal{D}_\mathcal{N}}[(f(\bm{x}')-b)^2|
f(\bm{x}')\geq\eta_\beta(f)] \\
+2\Delta E + 2\gamma \Delta E-\gamma^2
    \end{aligned}
\end{equation}
\end{lemma}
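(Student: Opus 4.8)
The plan is to reduce both minimax problems to an explicit one-dimensional computation in $\gamma$, exploiting that $\boldsymbol{(OP1)}$ and $\boldsymbol{(OP2)}$ share the \emph{same} concave-in-$\gamma$ objective and differ only in the lower endpoint of the feasible interval for $\gamma$. Since $b\in[0,1]$ implies $[b-1,1]\subseteq[-1,1]$, the optimal value of $\boldsymbol{(OP2)}$ is automatically no larger than that of $\boldsymbol{(OP1)}$, so the entire content of the lemma is the reverse inequality.

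First I would write the common objective as $h(a,b,\gamma)=E_a+E_b+2\Delta E+\bigl(2\gamma\Delta E-\gamma^2\bigr)$, and record two elementary facts: (i) $\Delta E=E_--E_+$ does not depend on $(a,b,\gamma)$, and since $f(\bm{x})\in[0,1]$ forces $E_+,E_-\in[0,1]$, we have $\Delta E\in[-1,1]$; (ii) by the bias--variance identity, $E_a=\mathrm{Var}_{\mathcal{D}_\mathcal{P}}(f)+(a-E_+)^2$ and $E_b=\mathrm{Var}(f\mid f(\bm{x}')\ge\eta_\beta(f))+(b-E_-)^2$. Fact (i) says $\gamma\mapsto 2\gamma\Delta E-\gamma^2$ is a downward parabola whose unconstrained maximizer $\gamma=\Delta E$ is feasible for $\boldsymbol{(OP1)}$, so $\max_{\gamma\in[-1,1]}h(a,b,\gamma)=E_a+E_b+2\Delta E+\Delta E^2$ for every $(a,b)$; combined with (ii), the outer minimum of $\boldsymbol{(OP1)}$ is attained at $a^*=E_+\in[0,1]$, $b^*=E_-\in[0,1]$, with value $V_1=\mathrm{Var}_{\mathcal{D}_\mathcal{P}}(f)+\mathrm{Var}(f\mid f(\bm{x}')\ge\eta_\beta(f))+2\Delta E+\Delta E^2$.

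For $\boldsymbol{(OP2)}$, fix $(a,b)\in[0,1]^2$ and split into cases. If $\Delta E\ge b-1$, the inner maximizer $\gamma=\Delta E$ is still feasible, so the inner max equals $E_a+E_b+2\Delta E+\Delta E^2\ge V_1$. If $\Delta E<b-1$, the parabola is decreasing on $[b-1,1]$, so the inner max is attained at $\gamma=b-1$; completing the square gives $2(b-1)\Delta E-(b-1)^2=\Delta E^2-\bigl(\Delta E-(b-1)\bigr)^2$, while (ii) gives $E_b-E_{b^*}=(b-E_-)^2$. Hence the inner max exceeds $V_1$ exactly when $(b-E_-)^2\ge\bigl(\Delta E-(b-1)\bigr)^2$; but in this case $b-1>\Delta E=E_--E_+$ forces $b-E_->1-E_+\ge0$ and $\Delta E-(b-1)=(1-E_+)-(b-E_-)<0$, so the desired inequality collapses to $b-E_-\ge(b-E_-)-(1-E_+)$, i.e.\ to $E_+\le1$, which holds since $f\in[0,1]$. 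In either case $\max_{\gamma\in[b-1,1]}h(a,b,\gamma)\ge V_1$, whence $V_2\ge V_1$; together with $V_2\le V_1$ this yields $V_1=V_2$.

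The routine parts are the parabola optimization and the bias--variance bookkeeping; the one step that needs care is the "bad" case $\Delta E<b-1$, where one must see that pushing $b$ past the threshold at which the unconstrained $\gamma$-optimizer leaves $[b-1,1]$ inflates $E_b$ by \emph{at least} as much as it costs in the $\gamma$-term --- and the clean way to see this is the algebraic collapse to $E_+\le1$ after completing the square. I would close by remarking that this computation also identifies the common optimal value as $\mathrm{Var}_{\mathcal{D}_\mathcal{P}}(f)+\mathrm{Var}(f\mid f(\bm{x}')\ge\eta_\beta(f))+2\Delta E+\Delta E^2$, which is precisely the quantity needed downstream in the proof of Thm.~\ref{thm:step2}.
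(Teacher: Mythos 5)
Your proof is correct and follows essentially the same route as the paper's: both reduce the inner maximization to a truncated quadratic in $\gamma$, split on whether the unconstrained maximizer $\gamma=\Delta E$ lies in $[b-1,1]$, and in the bad case $b-1>\Delta E$ rely on exactly the same fact $E_+\le 1$ (the paper via the monotonicity $F_{1,1}'(b)=2-2E_+\ge 0$, you via completing the square), arriving at the same common optimum $E_{a^*}+E_{b^*}+(\Delta E)^2+2\Delta E$. The only cosmetic difference is that you establish the nontrivial direction pointwise in $(a,b)$ and get the reverse inequality for free from $[b-1,1]\subseteq[-1,1]$, whereas the paper explicitly minimizes over $b$ in each case.
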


\begin{remark}
$\boldsymbol{(OP1)}$ and $\boldsymbol{(OP2)}$ have the equivalent formulation:
\begin{equation}
    \begin{aligned}
\boldsymbol{(OP1)}\Leftrightarrow \min_{(a,b)\in[0,1]^2}&\max_{\gamma \in [-1,1]} \mathbb{E}_{\bm{z}\sim\mathcal{D}_{\mathcal{Z}}}\Big[\Large[(f(\bm{x})-a)^2- 
2(1+\gamma)f(\bm{x})\Large]y/p-\gamma^2\\
&\Large[(f(\bm{x})-b)^2+2(1+\gamma)f(\bm{x})\Large]\cdot[(1-y)\mathbb{I}_{f(\bm{x})\geq \eta_\beta(f)}]/[(1-p)\beta]\big].
\end{aligned}
\end{equation}
\begin{equation}
    \begin{aligned}
\boldsymbol{(OP2)}\Leftrightarrow \min_{(a,b)\in[0,1]^2}&\max_{\gamma \in [b-1,1]} \mathbb{E}_{\bm{z}\sim\mathcal{D}_{\mathcal{Z}}}\Big[\Large[(f(\bm{x})-a)^2- 
2(1+\gamma)f(\bm{x})\Large]y/p-\gamma^2\\
&\Large[(f(\bm{x})-b)^2+2(1+\gamma)f(\bm{x})\Large]\cdot[(1-y)\mathbb{I}_{f(\bm{x})\geq \eta_\beta(f)}]/[(1-p)\beta]\big].
\end{aligned}
\end{equation}
\end{remark}

\begin{proof}
From the proof of our main paper, we know that $(OP1)$ has a closed-form minimum:
\begin{equation}
     E_{a^*} + {E}_{b^*} + (\Delta E)^2 + 2 \Delta E.
\end{equation}
Hence, we only need to prove that $(OP2)$ has the same minimum solution. By expanding $(OP2)$, we have:
\begin{equation}
    \begin{aligned}
\min_{(a,b)\in[0,1]^2}\max_{\gamma\in[b-1,1]} 
\mathbb{E}_{\bm{z}\sim\mathcal{D}_{\mathcal{Z}}}[
F_{op}(f,a,b,\gamma,\eta_\beta(f),\bm{z})] &= \\ 
2\Delta E + \min_{a \in [0,1]}E_a  + \min_{b\in[0,1]}\max_{\gamma \in [b-1,1]} F_0
\end{aligned}
\end{equation}
where 
\begin{equation}
    F_0:= E_b + 2\gamma\Delta E - \gamma^2
\end{equation}
Obviously since $a$ is decoupled with $b,\gamma$, we have:
\begin{equation}
    \min_{a \in [0,1]} E_a  = E_{a^*}
\end{equation}
Now, we solve the minimax problem of $F_0$. For any fixed feasible $b$, the inner max problem is a truncated quadratic programming, which has a unique and closed-form solution. Hence, we first solve the inner maximization problem for fixed $b$, and then represent the minimax problem as a minimization problem for $b$. Specifically, we have:
\begin{equation}
    \left(\max_{\gamma\in[b-1,1]} 2\gamma \Delta E-\gamma^2\right) = 
\begin{cases}
(\Delta E)^2,  & \Delta E \ge b-1 \\
2(b-1)\Delta E - (b-1)^2, & \text{otherwise}
\end{cases} 
\end{equation}

Thus, we have:
\begin{equation}
    \min_{b\in[0,1]} \max_{\gamma\in[b-1,1]} F_0=
\min_{b \in [0,1]} F_1
\end{equation}
where
\begin{equation}
    F_1 = \begin{cases}
&F_{1,0}(b) := E_b + (\Delta E)^2,  b-1 \le \Delta E \\ 
& F_{1,1}(b) := E_{-,2}- 2bE_- + 2b-1 + 2(b-1)\Delta E,  \text{otherwise}
\end{cases}
\end{equation}
It is easy to see that both cases of $F_1$ are convex functions w.r.t $b$. So, we can find the global minimum by comparing the minimum of $F_{1,0}$ and $F_{1,1}$. 
\begin{itemize}
\item CASE 1: $\Delta E \ge b-1$. 
    It is easy to see that $b^* = E_- \in (-\infty, 1+\Delta E]$, by taking the derivative to zero, we have, the optimum value is obtained at $b= E_-$ for $F_{1,0}$.

\item CASE 2: $\Delta E \leq b-1$. 
    Again by taking the derivative, we have:
    \begin{equation}
        F_{1,1}(b)' = -2E_- + 2 + 2 \Delta E = 2-2E_+ \ge 0 
    \end{equation}
    We must have:
    \begin{equation}
        \inf_{b \geq 1 + \Delta E} F_{1,1}(b) ~\ge~ F_{1,1}(1+\Delta E) ~=~ F_{1,0}(1+\Delta E) ~\geq~ F_{1,0}(E_-) ~=~ F_{1,0}(b^*)
    \end{equation}
    
\item Putting all together
Hence the global minimum of $F_1$ is obtained at $b^*$ with:
\begin{equation}
    F_1(b^*) = F_{1,0}(b^*) = E_{b^*}+ (\Delta E)^2
\end{equation}
\end{itemize}

Hence, we have $(OP2)$ has the minimum value:
\begin{equation}
     E_{a^*} + E_{b^*} + (\Delta E)^2 + 2 \Delta E
\end{equation}
\end{proof}

Now, we use a similar trick to prove the result for TPAUC:

\begin{lemma}\label{lem:B}
    (The Reformulation for TPAUC) For a fixed scoring function $f$, the following two problems shares the same optimum, given that the scoring function satisfies: $f(\bm{x}) \in [0,1],~ \forall \bm{x}$:
\begin{equation}
    \begin{aligned}
    \boldsymbol{(OP3)} \min_{f,(a,b)\in[0,1]^2}\max_{\gamma \in [-1,1]}
&\mathbb{E}_{\bm{x}\sim\mathcal{D}_\mathcal{P}}[(f(\bm{x})-a)^2|
f(\bm{x})\leq\eta_\alpha(f)]\\+
&\mathbb{E}_{\bm{x}'\sim\mathcal{D}_\mathcal{N}}[(f(\bm{x}')-b)^2|
f(\bm{x}')\geq\eta_\beta(f)]\\
+&2\Delta \tilde{E} + 2\gamma \Delta \tilde{E}-\gamma^2.
\\
\boldsymbol{(OP4)} \min_{f,(a,b)\in[0,1]^2}\max_{\gamma \in [\max\{-a,b-1\},1]}
&\mathbb{E}_{\bm{x}\sim\mathcal{D}_\mathcal{P}}[(f(\bm{x})-a)^2|
f(\bm{x})\leq\eta_\alpha(f)]\\+
&\mathbb{E}_{\bm{x}'\sim\mathcal{D}_\mathcal{N}}[(f(\bm{x}')-b)^2|
f(\bm{x}')\geq\eta_\beta(f)]\\
 +&2\Delta \tilde{E} + 2\gamma \Delta \tilde{E}-\gamma^2.
    \end{aligned}
\end{equation}
\end{lemma}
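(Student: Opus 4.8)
The plan is to imitate the treatment of Lemma~\ref{lem:A}, the new twist being that the left endpoint of the feasible $\gamma$-interval is now $\max\{-a,b-1\}$ rather than $b-1$, which couples $a$ to the quadratic-in-$\gamma$ term and so blocks the clean $a$-decoupling used in the OPAUC case. First I would fix the scoring function $f$ and observe that $\boldsymbol{(OP3)}$ and $\boldsymbol{(OP4)}$ share the same integrand $\tilde E_a+E_b+2\Delta\tilde E+2\gamma\Delta\tilde E-\gamma^2$ (here $\tilde E_a=\mathbb E_{\bm x\sim\mathcal D_{\mathcal P}}[(f(\bm x)-a)^2\mid f(\bm x)\le\eta_\alpha(f)]$ depends only on $a$, $E_b=\mathbb E_{\bm x'\sim\mathcal D_{\mathcal N}}[(f(\bm x')-b)^2\mid f(\bm x')\ge\eta_\beta(f)]$ only on $b$, and $\Delta\tilde E$ is a constant in $f$), differing solely in the $\gamma$-range. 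Since $f\in[0,1]$ gives $\Delta\tilde E=E_--\tilde E_+\in[-1,1]$, the vertex $\gamma=\Delta\tilde E$ of the concave parabola lies in $[-1,1]$, so $\boldsymbol{(OP3)}$ takes the value $\tilde E_{a^*}+E_{b^*}+(\Delta\tilde E)^2+2\Delta\tilde E$ (the TPAUC analogue of the closed form quoted in the proof of Lemma~\ref{lem:A}), attained at $a=\tilde a^*=\tilde E_+$, $b=b^*=E_-$. Because $\max\{-a,b-1\}\ge-1$ for $a,b\in[0,1]$, the $\gamma$-domain of $\boldsymbol{(OP4)}$ is contained in that of $\boldsymbol{(OP3)}$, hence $\mathrm{val}(\boldsymbol{(OP4)})\le\mathrm{val}(\boldsymbol{(OP3)})$ with no computation; the content is the reverse inequality.

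For the reverse inequality I would, for fixed $a,b\in[0,1]$, solve the truncated concave-quadratic inner maximization in closed form: writing $\gamma_0=\max\{-a,b-1\}$, one gets $M(a,b):=\max_{\gamma\in[\gamma_0,1]}(2\gamma\Delta\tilde E-\gamma^2)$ equal to $(\Delta\tilde E)^2$ when $\gamma_0\le\Delta\tilde E$ and to $(\Delta\tilde E)^2-(\gamma_0-\Delta\tilde E)^2$ otherwise (valid in both cases because the vertex $\Delta\tilde E$ never exceeds the right endpoint $1$). Completing the square gives the exact identities $\tilde E_a-\tilde E_{a^*}=(a-\tilde a^*)^2$ and $E_b-E_{b^*}=(b-b^*)^2$, so after subtracting the common $2\Delta\tilde E$ the gap between the $\boldsymbol{(OP4)}$ objective at $(a,b)$ and the $\boldsymbol{(OP3)}$ value becomes exactly
$(a-\tilde a^*)^2+(b-b^*)^2-\big(\max\{\gamma_0-\Delta\tilde E,\,0\}\big)^2$,
and it remains to show this is nonnegative for all $a,b\in[0,1]$.

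The interesting case is $\gamma_0>\Delta\tilde E$ (the other being trivial), and this is where the $a$–$\gamma$ coupling bites, since one can no longer simply replace $\tilde E_a$ by $\tilde E_{a^*}$ as in the OPAUC proof. Using $\Delta\tilde E=b^*-\tilde a^*$ I would rewrite $\gamma_0-\Delta\tilde E=\max\{-(a-\tilde a^*)-b^*,\ (b-b^*)-(1-\tilde a^*)\}$; because $f\in[0,1]$ forces $b^*=E_-\ge0$ and $1-\tilde a^*=1-\tilde E_+\ge0$, whichever branch realizes this positive maximum is bounded above in absolute value by $|a-\tilde a^*|$ respectively $|b-b^*|$, so its square is dominated by one of the two squared terms and the gap is nonnegative. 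Taking $\min_f$ on both sides (the argument is uniform in $f$) yields $\mathrm{val}(\boldsymbol{(OP4)})\ge\mathrm{val}(\boldsymbol{(OP3)})$, and together with the trivial direction, equality. I expect the main obstacle to be precisely this coupling: unlike Lemma~\ref{lem:A}, the minimization over $a$ is not separable, and one must verify that the extra constraint $\gamma\ge-a$ — combined with $b^*\ge0$ — is exactly what prevents the loss from dipping below the $\boldsymbol{(OP3)}$ optimum, which is also the a posteriori reason the correct TPAUC constraint is $\gamma\ge\max\{-a,b-1\}$ rather than $\gamma\ge b-1$.
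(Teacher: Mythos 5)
Your proposal is correct, and it reaches the same closed form for the truncated inner maximization as the paper, but the way you establish the key lower bound is genuinely different. The paper's proof expands $(OP4)$, solves $\max_{\gamma\in[c,1]}(2\gamma\Delta\tilde{E}-\gamma^2)$ with $c=\max\{-a,b-1\}$, and then runs a three-case analysis over the $(a,b)$-region (the case $c\le\Delta\tilde{E}$, the case $b-1\ge\Delta\tilde{E}$ with $-a\le b-1$, and the case $-a\ge\Delta\tilde{E}$ with $b-1\le-a$), in each case using convexity/monotonicity of the piecewise objective (sign of derivatives, comparison at the boundary point, mirroring CASE 2 of Lem.~\ref{lem:A}) to show the value never drops below $\tilde{E}_{\tilde{a}^*}+E_{b^*}+(\Delta\tilde{E})^2$, which CASE 1 attains. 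You instead compute the suboptimality gap exactly: completing the square gives $\tilde{E}_a-\tilde{E}_{\tilde{a}^*}=(a-\tilde{a}^*)^2$ and $E_b-E_{b^*}=(b-b^*)^2$, and writing the truncated maximum as $(\Delta\tilde{E})^2-\bigl(\max\{c-\Delta\tilde{E},0\}\bigr)^2$ reduces the whole lemma to the single pointwise inequality $(a-\tilde{a}^*)^2+(b-b^*)^2\ge\bigl(\max\{c-\Delta\tilde{E},0\}\bigr)^2$, which you verify branchwise from $b^*\ge 0$ and $1-\tilde{a}^*\ge 0$; the easy direction follows from inclusion of the $\gamma$-intervals. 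Both arguments are valid; the paper's parallels Lem.~\ref{lem:A} and exhibits the minimizers region by region, while yours is shorter, yields an explicit formula for the gap at any $(a,b)$, avoids the somewhat terse cross-references in the paper's CASE 2/3, and makes transparent why the coupled constraint $\gamma\ge\max\{-a,b-1\}$ is exactly what keeps the constrained problem from dipping below the unconstrained optimum.
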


\begin{remark}
$\boldsymbol{(OP3)}$ and $\boldsymbol{(OP4)}$ have the equivalent formulation:
\begin{equation}
   \begin{aligned}
\boldsymbol{(OP3)} \Leftrightarrow
\min_{(a,b)\in[0,1]^2}&\max_{\gamma \in [-1, 1]} \mathbb{E}_{\bm{z}\sim \mathcal{D}_\mathcal{Z}}\Big[\Large[(f(\bm{x})-a)^2- 
2(1+\gamma)f(\bm{x})\Large]\cdot[y\mathbb{I}_{f(\bm{x})\leq \eta_\alpha(f)}]/p-\gamma^2\\
&\Large[(f(\bm{x})-b)^2+2(1+\gamma)f(\bm{x})\Large]\cdot[(1-y)\mathbb{I}_{f(\bm{x})\geq \eta_\beta(f)}]/[(1-p)\beta]\Big].
\end{aligned} 
\end{equation}
\begin{equation}
    \begin{aligned}
\boldsymbol{(OP4)} \Leftrightarrow
\min_{(a,b)\in[0,1]^2}&\max_{\gamma \in [\max\{-a,b-1\} 1]} \mathbb{E}_{\bm{z}\sim \mathcal{D}_\mathcal{Z}}\Big[\Large[(f(\bm{x})-a)^2- 
2(1+\gamma)f(\bm{x})\Large]\cdot[y\mathbb{I}_{f(\bm{x})\leq \eta_\alpha(f)}]/p\\
&\Large[(f(\bm{x})-b)^2+2(1+\gamma)f(\bm{x})\Large]\cdot[(1-y)\mathbb{I}_{f(\bm{x})\geq \eta_\beta(f)}]/[(1-p)\beta]-\gamma^2\Big].
\end{aligned}
\end{equation}
\end{remark}

\begin{proof}
Again, $(OP3)$ has the minimum value:

\begin{equation}
    \tilde{E}_{\tilde{a}^*} + E_{b^*} + (\Delta \tilde{E})^2 + 2 \Delta \tilde{E} 
\end{equation}

We proof that $(OP4)$ ends up with the minimum value. By expanding $(OP4)$, we have:
\begin{equation}
    \begin{aligned}
(OP4)= 2\Delta \tilde{E} + \min_{(a,b)\in[0,1]^2}\max_{\gamma \in [\max\{-a,b-1\},1]} F_3
\end{aligned}
\end{equation}
where 
\begin{equation}
    F_3:=  \tilde{E}_a+ E_{b}
 +2\Delta \tilde{E} + 2\gamma \Delta \tilde{E}-\gamma^2
\end{equation}

For any fixed feasible $a,b$, the inner max problem is a truncated quadratic programming, which has a unique and closed-form solution. Specifically, define $c = \max\{-a,b-1\}$, we have:
\begin{equation}
   \left(\max_{\gamma \in [c,1]} 2\gamma \Delta \tilde{E}-\gamma^2\right) = 
\begin{cases}
(\Delta \tilde{E})^2,  & \Delta \tilde{E} \ge c \\
2c\Delta \tilde{E} - c^2, & \text{otherwise}
\end{cases}  
\end{equation}
Thus, we have:
\begin{equation}
    \min_{(a,b)\in[0,1]^2} \max_{\gamma\in[c,1]} F_3=
\min_{(a,b) \in [0,1]} F_4
\end{equation}
where
\begin{equation}
    F_4 = \begin{cases}
&F_{4,0}(a,b) := \tilde{E}_a + E_b + (\Delta \tilde{E})^2,  c \le \Delta \tilde{E} \\ 
& F_{4,1}(a,b) := \tilde{E}_a + E_{-,2} -2b E_- + 2(b-1) \Delta \tilde{E} + 2b -1 ,  b-1 \ge \Delta \tilde{E}, -a \le b-1 \\ 
& F_{4,2}(a,b) := E_b + E_{+,2}  -2a \tilde{E}_+ -2a \Delta \tilde{E},   -a \ge \Delta \tilde{E}, b-1 \le -a 
\end{cases}
\end{equation}
It is easy to see that both cases of $F_1$ are convex functions w.r.t $b$. So, we can find the global minimum by comparing the minimum of $F_{1,0}$ and $F_{1,1}$. 
\begin{itemize}

    \item CASE 1: $\Delta \tilde{E} \ge \max\{-a,b-1\}$. 
 
It is easy to check that when $a = \tilde{E}_+, b = E_- $, we have $-a \le \Delta \tilde{E}$ and $b-1 \le \Delta \tilde{E}$. It is easy to see that $a,b$ are decoupled in the expression of $F_{4,0}(a,b)$. By setting:
\begin{equation}
     \begin{aligned}
\frac{\partial F_{4,0}(a,b)}{\partial a} = 0, \\ 
\frac{\partial F_{4,0}(a,b)}{\partial b} = 0
\end{aligned}
\end{equation}
We know that the minimum solution is attained at $a= \tilde{a}^*$, $b= b^*$. Then the minimum value of $F_{4,0}(a,b)$ at this range becomes:
\begin{equation}
    \tilde{E}_{\tilde{a}^*} + E_{b^*} + (\Delta \tilde{E})^2
\end{equation}
Moreover, we will also use the fact that $E_{\tilde{a}^*}$ and $E_{b^*}$ are also the global minimum for $E_a$ and $E_b$, respectively. 

    \item CASE 2: $b-1 \ge \Delta \tilde{E},~ -a \le b-1$.

It is easy to see that $E_a \ge E_{\tilde{a}^*}$ in this case.  According to the same derivation as in Lem.\ref{lem:A} CASE 2, we have:
\begin{equation}
    E_{-,2} -2b E_- + 2(b-1) \Delta \tilde{E} + 2b -1 \ge E_{b^*} + (\Delta \tilde{E})^2
\end{equation}
holds when $b -1 \geq \Delta \tilde{E}$. Recall that CASE 2 is include in the condition $b -1 \geq\Delta \tilde{E}$. So, under the condition of CASE 2:
\begin{equation}
    F_{4,1}(a,b) \ge \tilde{E}_{\tilde{a}^*} + E_{b^*} + (\Delta \tilde{E})^2
\end{equation}

    \item CASE 3: $-a\geq \Delta \tilde{E}, b-1\leq -a$

In this case, we have $E_b \ge E_{b^*}$. It remains to check:
\begin{equation}
    g(a) =  -2a \tilde{E}_+ -2a \Delta \tilde{E}
\end{equation}
By taking derivative, we have:
\begin{equation}
    g'(a) =  -2\tilde{E}_+ - 2\Delta \tilde{E} = -2\tilde{E}_- \le 0. 
\end{equation}
Similar as the proof of CASE 2, when $-a \ge \Delta \tilde{E}$, we have:
\begin{equation}
    g(a) \ge \tilde{E}_{\tilde{a}^*} + (\Delta \tilde{E})^2
\end{equation}
 and thus
\begin{equation}
    F_{4,2}(a,b) \ge \tilde{E}_{\tilde{a}^*} + E_{b^*} + (\Delta \tilde{E})^2
\end{equation}
holds. Since the condition of CASE 3 is included in the set $-a \ge \Delta \tilde{E}$:
\begin{equation}
    F_{4,2}(a,b) \ge \tilde{E}_{\tilde{a}^*} + E_{b^*} + (\Delta \tilde{E})^2
\end{equation}
holds under the condition of CASE 3.
    \item Putting altogether:
The minimum value of $(OP4)$ reads:
\begin{equation}
    \tilde{E}_{\tilde{a}^*} + E_{b^*} + (\Delta \tilde{E})^2 + 2\Delta \tilde{E}
\end{equation}
which is the same as $(OP3)$.
\end{itemize}
\end{proof}

Finally, since for each fixed $f$ $(OP3) = (OP4)$, and $(OP1) = (OP2)$ . We can then claim the following theorem:

\begin{theorem}\label{thm:A}
\label{Constrainted_Reformulation}
(Constrainted Reformulation) 
\begin{equation}
    \min_f (OP1) = \min_f (OP2), ~~ \min_f (OP3) = \min_f (OP4)
\end{equation}
\end{theorem}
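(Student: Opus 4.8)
The plan is to obtain Theorem~\ref{thm:A} as an immediate corollary of the two pointwise identities already established in Lemma~\ref{lem:A} and Lemma~\ref{lem:B}. First I would regard $(OP1),(OP2),(OP3),(OP4)$ as functions of the scoring function $f$, i.e. the inner min--max values $\min_{(a,b)\in[0,1]^2}\max_{\gamma}[\cdots]$ taken over the respective $\gamma$-domains. Lemma~\ref{lem:A} asserts that for every fixed $f$ with $f(\bm{x})\in[0,1]$ the values $(OP1)(f)$ and $(OP2)(f)$ coincide (both equal $E_{a^*}+E_{b^*}+(\Delta E)^2+2\Delta E$), and Lemma~\ref{lem:B} asserts the analogous equality $(OP3)(f)=(OP4)(f)$ for every such $f$.

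Given these pointwise identities, the argument is one line: since $(OP1)(f)=(OP2)(f)$ for every admissible $f$, the two functions of $f$ are identical, hence their minima (equivalently, infima) over the hypothesis class agree, i.e. $\min_f(OP1)=\min_f(OP2)$; replacing Lemma~\ref{lem:A} by Lemma~\ref{lem:B} gives $\min_f(OP3)=\min_f(OP4)$ in exactly the same way. No interchange of optimization order, no uniform-in-$f$ estimate, and no limiting argument is required, because the lemmas provide an \emph{exact} equality of optimal values for each individual $f$, not merely an inequality or an asymptotic statement.

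The only genuine content therefore sits inside Lemmas~\ref{lem:A} and~\ref{lem:B}, which I take as already proved in the excerpt; there the constraint $\gamma\in[b-1,1]$ (resp.\ $\gamma\in[\max\{-a,b-1\},1]$) is shown to be ``free'' via a case split on whether the unconstrained maximizer $\gamma=\Delta E$ (resp.\ $\Delta\tilde{E}$) lies in the restricted interval: at the optimal choice $b=b^*=E_-$ one has $\Delta E=E_--E_+\ge b^*-1$, so the truncated quadratic in $\gamma$ attains its maximum at the same point as on $[-1,1]$, and one checks that none of the boundary cases can improve on it. The potential pitfall to watch -- and the reason the lemmas genuinely need a case analysis rather than a one-liner -- is that the restricted $\gamma$-interval depends on the optimization variable $b$ (and, for TPAUC, also on $a$), so one cannot simply declare ``the unconstrained optimum is feasible'' without first pinning down the optimal $(a,b)$. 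Since Theorem~\ref{thm:A} merely aggregates the two lemmas, this difficulty does not resurface at this level, and the proof reduces to the observation that taking $\min_f$ of two functions that are equal pointwise yields equal values.
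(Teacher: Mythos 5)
Your proposal is correct and matches the paper's own argument: the paper likewise establishes the pointwise-in-$f$ equalities $(OP1)(f)=(OP2)(f)$ and $(OP3)(f)=(OP4)(f)$ in Lemmas~\ref{lem:A} and~\ref{lem:B}, and then obtains Theorem~\ref{thm:A} by the same one-line observation that minimizing two pointwise-equal functions of $f$ yields equal values. Your remark about the $b$- (and $a$-) dependence of the $\gamma$-interval being the real content, handled inside the lemmas via the case analysis, is also consistent with the paper.
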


\begin{remark}
Since the calculation is irelevant to the definition of the expectation, the replace the population-level expectation with the empirical expectation over the training data.
\end{remark}

\begin{remark}
By applying Theorem 1, we can get the reformulation result in Theorem 2
\item  for $\mathrm{OPAUC}$
\begin{equation}
    \min_{f,(a,b)\in[0,1]^2} \max_{\gamma\in [b-1,1]}\min_{s'\in\Omega_{s'}}\mathbb{E}_{\bm{z}\sim \mathcal{D}_\mathcal{Z}} [G_{op}(f,a,b,\gamma,\bm{z},s')]
\end{equation}
where
\begin{equation}
    \begin{aligned}
    G_{op}(f,a,b,\gamma,\bm{z},s')&=[(f(\bm{x})-a)^2- 
    2(1+\gamma)f(\bm{x})]y/p-\gamma^2\\
    & +
    \left(\beta s' +\left[(f(\bm{x})-b)^2+2(1+\gamma) f(\bm{x})-s'\right]_+\right)(1-y)/[\beta (1-p)].
    \end{aligned}
\end{equation}
\item for $\mathrm{TPAUC}$
\begin{equation}
    \min_{f,(a,b)\in[0,1]^2 } \max_{\gamma\in [\max\{-a,b-1\}, 1]}\min_{s\in\Omega_{s},s'\in\Omega_{s'}}\mathbb{E}_{\bm{z}\sim \mathcal{D}_\mathcal{Z}} [G_{tp}(f,a,b,\gamma,\bm{z},s,s')]
\end{equation}
where
\begin{equation}
    \begin{aligned}
    G_{tp}(f,a,b,\gamma,\bm{z},s,s')&=\left(\alpha s + r_{\kappa}\left((f(\bm{x})-a)^2- 
    2(1+\gamma)f(\bm{x})-s\right)\right)y/(\alpha p) -\gamma^2\\
    & +
    \left(\beta s' +r_{\kappa}\left((f(\bm{x})-b)^2+2(1+\gamma) f(\bm{x})-s'\right)\right)(1-y)/[\beta (1-p)].
    \end{aligned}
\end{equation}
\end{remark}

\section{Reformulation for TPAUC}
\label{sec:tpauc_reformulation}
According to Eq.\eqref{TPAUCM}, given a surrogate loss $\ell$ and the finite dataset $S$, maximizing $\mathrm{TPAUC}$ and $\hat{\mathrm{AUC}}_{\alpha,\beta}(f, S)$ is equivalent to solving the following problems, respectively:
\begin{equation}
        \underset{f}{\min}~  \mathcal{R}_{\alpha,\beta}(f)= \mathbb{E}_{\bm{x} \sim \mathcal{D}_\mathcal{P}, \bm{x}'\sim \mathcal{D}_\mathcal{N}} \left[\mathbb{I}_{f(\bm{x}) \le \eta_\alpha(f)} \cdot \mathbb{I}_{f(\bm{x}') \ge \eta_\beta(f)} \cdot \ell(f(\bm{x})- f(\bm{x}'))\right],
        \label{TPAUCO}
    \end{equation}
\begin{equation}
\begin{aligned}
    \underset{f}{\min} \ \hat{\mathcal{R}}_{\alpha,\beta}(f, S)= \sum_{i=1}^{n_+^\alpha} \sum_{j=1}^{n_-^\beta}\frac{\ell{\left(f(\bm{x}_{[i]})- f(\bm{x}'_{[j]})\right)}}{n_+^\alpha n_-^\beta}.
    \label{TPAUCO}
\end{aligned}
\end{equation}
Similar to $\mathrm{OPAUC}$, we have the following theorem shows an instance-wise reformulation of the $\tp$ optimization problem:
\begin{theorem}
\label{theorem:7}
Assuming that $f(\bm{x})\in[0,1]$, $\forall \bm{x}\in\mathcal{X}$, $F_{tp}(f,a,b,\gamma, t, t', \bm{z})$ is defined as:
\begin{equation}
    \begin{aligned}
F_{tp}(&f,a,b,\gamma,t, t', \bm{z})=(f(\bm{x})-a)^2y\mathbb{I}_{f(\bm{x})\leq t}/(\alpha p)+
(f(\bm{x})-b)^2(1-y)\mathbb{I}_{f(\bm{x}')\geq t'}/[\beta (1-p)]\\
&+2(1+\gamma)f(\bm{x})(1-y)\mathbb{I}_{f(\bm{x}')\geq t'}/[\beta (1-p)] - 
2(1+\gamma)f(\bm{x})y/p\mathbb{I}_{f(\bm{x})\leq t}/(\alpha p) -\gamma^2,
\end{aligned}
\label{eq:fop}
\end{equation}

 where $y=1$ for positive instances, $y=0$ for negative instances and we have the following conclusions:
\begin{enumerate}[leftmargin=20pt]
    \item[(a)] (\textbf{Population Version}.) We have:
    \begin{equation}
        \underset{f}{\min} \ {\mathcal{R}}_{\alpha,\beta}(f) \Leftrightarrow \underset{\cmin}{\min}\ \underset{\gamma\in[-1,1]}{\max} \ 
    \colblue{\underset{\bm{z}\sim \mathcal{D}_\mathcal{Z}}{\mathbb{E}}}
    \left[F_{tp}(f,a,b,\gamma, \colblue{\eta_\alpha(f)},\colblue{\eta_\beta(f)},\bm{z})\right],
    \label{eq:minmaxtpauc1}
    \end{equation}
    where $\colblue{\eta_\alpha(f)}=\arg\min_{\colblue{\eta_{\alpha}} \in\mathbb{R}}\left[\mathbb{E}_{\bm{x}\sim \mathcal{D}_{\mathcal{P}}}[\mathbb{I}_{f(\bm{x})\leq\ \colblue{\eta_{\alpha}}}]=\alpha\right]$ and $\colblue{\eta_\beta(f)}=\arg\min_{\colblue{\eta_{\beta}} \in\mathbb{R}}\left[\mathbb{E}_{\bm{x}'\sim \mathcal{D}_{\mathcal{N}}}[\mathbb{I}_{f(\bm{x}')\geq\ \colblue{\eta_{\beta}}}]=\beta\right]$.
    \item[(b)] (\textbf{Empirical Version}.) Moreover, given a training dataset $S$ with sample size $n$, denote:
    \begin{equation*}
    \colbit{\ehat_{z \sim S}}[F_{tp}(f,a,b,\gamma,\colbit{\hat{\eta}_{\alpha}(f)}, \colbit{\hat{\eta}_\beta(f)}, \bm{z})] = \frac{1}{n}\sum_{i=1}^n F_{tp}(f,a,b,\gamma,\colbit{\hat{\eta}_{\alpha}(f)}, {\colbit{\hat{\eta}_\beta(f)}}, \bm{z}),
    \end{equation*}
    where $\colbit{\hat{\eta}_{\alpha}(f)}$ and $\colbit{\hat{\eta}_\beta(f)}$ are the empirical quantile of the positive and negative instances in $S$, respectively. We have:
    \begin{equation}
        \underset{f}{\min} \ \hat{\mathcal{R}}_{\alpha,\beta}(f, S) \Leftrightarrow \underset{\cmin}{\min}\ \underset{\gamma\in[-1,1]}{\max} \ 
    \colbit{\underset{\bm{z}\sim S}{\ehat}}
    \left[F_{tp}(f,a,b,\gamma, \colbit{\hat{\eta}_{\alpha}(f)}, \colbit{\hat{\eta}_\beta(f)}, \bm{z} ) \right],
    \label{eq:minmaxtpauc2}
    \end{equation} 
\end{enumerate}
\end{theorem}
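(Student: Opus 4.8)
The plan is to mirror the OPAUC argument behind Thm.\ref{theorem:1}: fix the scoring function $f$, reduce the claim to a statement about optimal values, and only then take $\min_f$. Concretely, for each fixed $f$ I would establish the identity $\min_{(a,b)\in[0,1]^2}\max_{\gamma\in[-1,1]}\mathbb E_{\bm z\sim\mathcal D_{\mathcal Z}}\!\left[F_{tp}(f,a,b,\gamma,\eta_\alpha(f),\eta_\beta(f),\bm z)\right]=\tfrac{1}{\alpha\beta}\mathcal R_{\alpha,\beta}(f)-1$; since $\alpha\beta>0$ is a fixed scalar, the two objectives are related by a fixed affine map and hence share the same minimizer over $f$, which is exactly the asserted $\Leftrightarrow$ in part (a). Part (b) then follows by rerunning the same chain on the empirical measure.

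First I would rewrite the risk. Since $\ell(t)=(1-t)^2$ and $\bm x\perp\bm x'$, the events $\{f(\bm x)\le\eta_\alpha(f)\}$ and $\{f(\bm x')\ge\eta_\beta(f)\}$ are independent with probabilities $\alpha$ and $\beta$ by the definition of $\eta_\alpha,\eta_\beta$ (tacitly assuming no atom at the quantile, as in the OPAUC case), so $\mathcal R_{\alpha,\beta}(f)=\alpha\beta\,\mathbb E\!\left[(1-f(\bm x)+f(\bm x'))^2\mid f(\bm x)\le\eta_\alpha(f),\ f(\bm x')\ge\eta_\beta(f)\right]$. Writing $U,V$ for $f(\bm x),f(\bm x')$ under this conditioning (still independent), the bias--variance split gives $\mathbb E[(1-U+V)^2]=\mathrm{Var}(U)+\mathrm{Var}(V)+(1+\mathbb E[V]-\mathbb E[U])^2$, which in the notation of Sec.\ref{val_constraintval} equals $\tilde E_{\tilde a^*}+E_{b^*}+1+2\Delta\tilde E+(\Delta\tilde E)^2$.

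Next I would unfold $\mathbb E_{\bm z}[F_{tp}(f,a,b,\gamma,\eta_\alpha(f),\eta_\beta(f),\bm z)]$. Using $\Pr[y=1]=p$ and the quantile normalizations $\mathbb E_{\bm x\sim\mathcal D_{\mathcal P}}[g(f(\bm x))\mathbb I_{f(\bm x)\le\eta_\alpha(f)}]=\alpha\,\mathbb E[g(f(\bm x))\mid f(\bm x)\le\eta_\alpha(f)]$ and its negative-class analogue, the prefactors $1/(\alpha p)$ and $1/(\beta(1-p))$ cancel, and the expectation becomes the $(OP3)$ objective $\tilde E_a+E_b+2(1+\gamma)\Delta\tilde E-\gamma^2$ (one must keep in mind the notational overloading whereby ``$f(\bm x)$'' in the $(1-y)$ terms denotes a negative instance). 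This function is separable --- a convex function of $a$, a convex function of $b$, and a concave function of $\gamma$ --- so $\min_{(a,b)}\max_\gamma$ is solved coordinatewise: the minima are at $a=\tilde E_+,\ b=E_-\in[0,1]$ with values $\tilde E_{\tilde a^*}=\mathrm{Var}(U)$, $E_{b^*}=\mathrm{Var}(V)$, while $\max_{\gamma\in[-1,1]}\{2(1+\gamma)\Delta\tilde E-\gamma^2\}$ is attained at $\gamma=\Delta\tilde E=E_--\tilde E_+\in[-1,1]$ (the membership uses $f\in[0,1]$) with value $2\Delta\tilde E+(\Delta\tilde E)^2$ --- exactly the quadratic optimization already carried out in Lem.\ref{lem:B}. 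Adding the three pieces gives $\min_{(a,b)}\max_\gamma\mathbb E_{\bm z}[F_{tp}]=\tilde E_{\tilde a^*}+E_{b^*}+2\Delta\tilde E+(\Delta\tilde E)^2$, and comparing with $\mathcal R_{\alpha,\beta}(f)=\alpha\beta\,[\tilde E_{\tilde a^*}+E_{b^*}+1+2\Delta\tilde E+(\Delta\tilde E)^2]$ from the previous paragraph, this is precisely $\tfrac{1}{\alpha\beta}\mathcal R_{\alpha,\beta}(f)-1$.

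Taking $\min_f$ on both sides of this per-$f$ identity yields part (a). For part (b) I would repeat the same two steps with the empirical measure: $\hat{\mathcal R}_{\alpha,\beta}(f,S)$ is the average of $(1-f(\bm x_{[i]})+f(\bm x'_{[j]}))^2$ over the bottom-$n_+^\alpha$ positives and the top-$n_-^\beta$ negatives, $\hat{\eta}_\alpha(f),\hat{\eta}_\beta(f)$ are the matching empirical quantiles, and $\alpha,\beta$ get replaced by the realized fractions $n_+^\alpha/n_+,\ n_-^\beta/n_-$, with every step transferring unchanged. The hard part is not conceptual: it is the careful bookkeeping in the surrogate expansion (tracking the prior $p$, the two indicator normalizations, and the negative-instance overloading of $f(\bm x)$), plus the single genuine check that the $\gamma$-maximizer $\Delta\tilde E$ stays in $[-1,1]$; the rest is the separable quadratic program already solved in Lem.\ref{lem:B}.
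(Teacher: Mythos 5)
Your proposal is correct and follows essentially the same route as the paper: the paper's own proof of this theorem factors out $\alpha\beta$ via the conditional reformulation, expands the squared surrogate, and (as in Thm.\ref{theorem:1}) introduces $a,b$ through the variance characterization and $\gamma$ through the quadratic maximization with $\gamma^*=\Delta\tilde{E}\in[-1,1]$, which is exactly your separable min--max computation and per-$f$ affine identity. The only differences are presentational (your explicit $\tfrac{1}{\alpha\beta}\mathcal{R}_{\alpha,\beta}(f)-1$ bookkeeping and the explicit check that the $\gamma$-maximizer lies in $[-1,1]$), not substantive.
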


Thm.\ref{theorem:7} provides a support to convert the pair-wise loss into instance-wise loss for $\mathrm{TPAUC}$. Actually, for $\mathcal{R}_{\alpha, \beta}(f)$, we can just reformulate it as an  Average Top-$k$ (ATk) loss. Denote $\ell_+(\bm{x})=(f(\bm{x})-a)^2-2(1+\gamma)f(\bm{x})$ and $\ell_-(\bm{x}')=(f(\bm{x}')-b)^2+2(1+\gamma)f(\bm{x}')$. In the proof of the next theorem, we will show that $\ell_+(\bm{x})$ is an decreasing function and $\ell_-(\bm{x}')$ is an increasing function w.r.t. $f(\bm{x})$ and $f(\bm{x}')$, namely:
\begin{equation}
    \mathbb{E}_{\bm{x}\sim\mathcal{D}_\mathcal{P}}[\mathbb{I}_{f(\bm{x})\leq\eta_{\alpha}(f)}\cdot\ell_+(\bm{x})]= \min_s \frac{1}{\alpha} \cdot \mathbb{E}_{\bm{x}\sim\mathcal{D}_\mathcal{P}} [\alpha s + [\ell_+(\bm{x})-s]_+],
\end{equation}
\begin{equation}
    \mathbb{E}_{\bm{x}'\sim\mathcal{D}_\mathcal{N}}[\mathbb{I}_{f(\bm{x}')\geq\eta_{\beta}(f)}\cdot\ell_-(\bm{x}')]= \min_{s'}  \frac{1}{\beta} \cdot \mathbb{E}_{\bm{x}'\sim\mathcal{D}_\mathcal{N}} [\beta s' + [\ell_-(\bm{x}')-s']_+],
\end{equation}
The similar result holds for $\hat{\mathcal{R}}_{\alpha, \beta}(f, S)$. Then, we can reach to Thm.\ref{tpthm:step2}
\begin{theorem}\label{tpthm:step2}
Assuming that $f(\bm{x})\in[0,1]$, for all $\bm{x}\in\mathcal{X}$, we have the equivalent optimization for $\mathrm{TPAUC}$: 
\begin{equation}
\begin{aligned}
    \underset{\cmin}{\min}\ \underset{\gamma\in[-1,1]}{\max} \ 
\colblue{\underset{\bm{z}\sim \mathcal{D}_\mathcal{Z}}{\mathbb{E}}}[F_{tp}(f,a,b,\gamma,\colblue{\efa}, \colblue{\efb}, \bm{z})]
\\ \Leftrightarrow \underset{\cmin}{\min}\ 
\underset{\cmax }{\max} 
\ \underset{s\in\Omega_{s},s'\in\Omega_{s'}}{\min}
\ \colblue{\underset{\bm{z}\sim \mathcal{D}_\mathcal{Z}}{\mathbb{E}}}[G_{tp}(f,a,b,\gamma,\bm{z},s,s')],
\label{minmaxmin_tp}
\end{aligned}
\end{equation}
\begin{equation}
    \begin{aligned}
        \underset{\cmin}{\min}\ \underset{\cmax}{\max} \ 
    \colbit{\underset{\bm{z}\sim S}{\ehat}}[F_{tp}(f,a,b,\gamma, \colbit{\hefa}, \colbit{\hefb},\bm{z})]
    \\ \Leftrightarrow \underset{\cmin}{\min}\ 
    \underset{\gamma\in\Omega_{\gamma} }{\max} 
    \ \underset{s\in\Omega_{s},s'\in\Omega_{s'}}{\min}
    \ \colbit{\underset{\bm{z}\sim S}{\ehat}}[G_{tp}(f,a,b,\gamma,\bm{z},s,s')],
    \label{minmaxmin_tp}
    \end{aligned}
    \end{equation}

where $\Omega_{\gamma}=[\max\{b-1, -a\},1]$, $\Omega_{s}=[-4, 1]$, $\Omega_{s'}=[0,5]$ and

\begin{equation}
    \begin{aligned}
G_{tp}(f,a,b,\gamma,&\bm{z},s, s')=\left(\alpha s + \left[(f(\bm{x})-a)^2- 
2(1+\gamma)f(\bm{x})-s\right]_+\right)y/(\alpha p)\\
& +
\left(\beta s' +\left[(f(\bm{x})-b)^2+2(1+\gamma) f(\bm{x})-s'\right]_+\right)(1-y)/[\beta (1-p)] -\gamma^2.
\end{aligned}
\label{OPAUC_IB}
\end{equation}

\label{theorem:8}
\end{theorem}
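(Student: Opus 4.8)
The plan is to obtain Thm.~\ref{tpthm:step2} by composing three tools that are already in place: the instance-wise reformulation of $\tp$ in Thm.~\ref{theorem:7}, the constrained reformulation of Thm.~\ref{thm:A} (which shrinks the range of $\gamma$ from $[-1,1]$ to $\Omega_{\gamma}=[\max\{b-1,-a\},1]$ while leaving the saddle value untouched), and the Average-Top-$k$ identity of Lem.~\ref{lemma:1}. I will run the argument for the population version; the empirical version follows by the same chain of equalities, with each expectation replaced by its empirical counterpart, each distributional quantile by the corresponding empirical quantile, and the population ATk identity by its finite-sample form --- the empirical validity of Thm.~\ref{thm:A} being recorded in the remark after it, which is also what lets one pass from $\gamma\in[-1,1]$ in Thm.~\ref{theorem:7}(b) to $\gamma\in\Omega_{\gamma}$ on the empirical side.

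First I would invoke Thm.~\ref{theorem:7}(a) to rewrite $\min_f \mathcal{R}_{\alpha,\beta}(f)$ as $\min_{f,(a,b)\in[0,1]^2}\max_{\gamma\in[-1,1]}\mathbb{E}_{\bm{z}\sim\mathcal{D}_{\mathcal{Z}}}[F_{tp}(f,a,b,\gamma,\efa,\efb,\bm{z})]$, and then apply Thm.~\ref{thm:A} to replace $\max_{\gamma\in[-1,1]}$ by $\max_{\gamma\in\Omega_{\gamma}}$. Grouping the summands of $F_{tp}$, the positive block is $\ell_+(\bm{x})\, y\, \mathbb{I}_{f(\bm{x})\le\efa}/(\alpha p)$ with $\ell_+(\bm{x})=(f(\bm{x})-a)^2-2(1+\gamma)f(\bm{x})$, the negative block is $\ell_-(\bm{x}')\,(1-y)\,\mathbb{I}_{f(\bm{x}')\ge\efb}/(\beta(1-p))$ with $\ell_-(\bm{x}')=(f(\bm{x}')-b)^2+2(1+\gamma)f(\bm{x}')$, and there is a free term $-\gamma^2$. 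The reason the $\gamma$-narrowing is exactly the right move is monotonicity: viewing $\ell_+$ as a function of the scalar $u=f(\bm{x})\in[0,1]$, its derivative $2u-2(a+1+\gamma)$ is at most $-2(a+\gamma)\le 0$ precisely when $\gamma\ge -a$; viewing $\ell_-$ as a function of $v=f(\bm{x}')\in[0,1]$, its derivative $2v+2(1+\gamma-b)$ is at least $2(1+\gamma-b)\ge 0$ precisely when $\gamma\ge b-1$. Both inequalities are enforced by $\gamma\in\Omega_{\gamma}$.

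Next, since $\ell_+$ is non-increasing in $f(\bm{x})$ whenever $\gamma\in\Omega_{\gamma}$, the event $\{f(\bm{x})\le\efa\}$ coincides up to a null set with $\{\ell_+(\bm{x})\ge q_{\alpha}\}$, where $q_{\alpha}$ is the upper $\alpha$-quantile of $\ell_+$ under $\mathcal{D}_{\mathcal{P}}$; likewise $\{f(\bm{x}')\ge\efb\}$ coincides with $\{\ell_-(\bm{x}')\ge q'_{\beta}\}$ for the upper $\beta$-quantile $q'_{\beta}$ of $\ell_-$ under $\mathcal{D}_{\mathcal{N}}$. Applying the ATk identity of Lem.~\ref{lemma:1} once with $x=\ell_+(\bm{x})$ at level $\alpha$ and once with $x=\ell_-(\bm{x}')$ at level $\beta$ converts each conditional block into a minimisation over an auxiliary variable ($s$ for the positives, $s'$ for the negatives), which is precisely how $G_{tp}$ is assembled; hence for every fixed $(f,a,b)$ and every $\gamma\in\Omega_{\gamma}$ one gets
\[
\mathbb{E}_{\bm{z}\sim\mathcal{D}_{\mathcal{Z}}}[F_{tp}(f,a,b,\gamma,\efa,\efb,\bm{z})]=\min_{s\in\Omega_{s},\,s'\in\Omega_{s'}}\mathbb{E}_{\bm{z}\sim\mathcal{D}_{\mathcal{Z}}}[G_{tp}(f,a,b,\gamma,\bm{z},s,s')].
\]
Taking $\max_{\gamma\in\Omega_{\gamma}}$ and then $\min_{f,(a,b)\in[0,1]^2}$ on both sides yields the asserted equivalence, and the empirical statement follows by the same steps with $\hat{\mathbb{E}}$ and empirical quantiles.

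Finally I would check that clipping the auxiliary variables to $\Omega_{s}=[-4,1]$ and $\Omega_{s'}=[0,5]$ loses nothing. Since $f,a,b\in[0,1]$ and $\gamma\in[-1,1]$ on $\Omega_{\gamma}$, one has $\ell_+(\bm{x})\in[-4,1]$ and $\ell_-(\bm{x}')\in[0,5]$; the map $s\mapsto\alpha s+\mathbb{E}[(\ell_+-s)_+]$ is convex with derivative $\alpha-\Pr[\ell_+>s]$, which is negative for $s<\inf\ell_+$ and positive for $s>\sup\ell_+$, so every minimiser lies in $[\inf\ell_+,\sup\ell_+]\subseteq\Omega_{s}$, and symmetrically for $s'$. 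The step I expect to be the crux is the $\gamma$-narrowing: the monotonicity of $\ell_\pm$ --- on which the entire ATk reduction rests --- holds only on $\Omega_{\gamma}$, so one genuinely needs Thm.~\ref{thm:A} to certify that restricting to the smaller domain leaves the value of the inner $\max_\gamma$ problem unchanged (equivalently, that the unconstrained maximiser $\gamma^*$ already falls in $\Omega_{\gamma}$); the remaining manipulations are routine bookkeeping.
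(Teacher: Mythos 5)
Your proposal is correct and follows essentially the same route as the paper's proof: the constrained reformulation (Thm.~\ref{thm:A}) to narrow $\gamma$ to $\Omega_{\gamma}$, the monotonicity of $\ell_+$ and $\ell_-$ on that range (the paper's Prop.~\ref{proposition1} and Prop.~\ref{proposition2}), and the ATk identity of Lem.~\ref{lemma:1} applied separately to the positive and negative blocks. Your explicit check that restricting $s\in\Omega_{s}$ and $s'\in\Omega_{s'}$ is lossless is a welcome addition that the paper asserts without detailed justification.
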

Similar to $\mathrm{OPAUC}$, we can get a regularized non-convex strongly-concave $\mathrm{TPAUC}$ optimization problem:
\begin{equation}
    \underset{\cmin}{\min}\ \underset{\gamma\in\Omega_{\gamma}}{\max} \min_{s \in \Omega_{s}, s' \in \Omega_{s'}}
    \ \colblue{\underset{\bm{z}\sim \mathcal{D}_\mathcal{Z}}{\mathbb{E}}}[G_{tp}^{{\colblue{\kappa}},\colbit{\omega}}]\Leftrightarrow \underset{f,(a,b)\in[0,1]^2,s\in\Omega_{s},s'\in\Omega_{s'}}{\min}\ \underset{\cmax}{\max} 
\ \colblue{\underset{\bm{z}\sim \mathcal{D}_\mathcal{Z}}{\mathbb{E}}}[G_{tp}^{{\colblue{\kappa}},\colbit{\omega}}],
\label{minmax_tp}
\end{equation}

\begin{equation}
    \underset{\cmin}{\min}\ \underset{\gamma\in\Omega_{\gamma}}{\max} \min_{s \in \Omega_{s}, s'\in \Omega_{s'}}
    \ \colbit{\underset{\bm{z}\sim S}{\ehat}}[{G}_{tp}^{{\colblue{\kappa}},\colbit{\omega}}]\Leftrightarrow \underset{f,(a,b)\in[0,1]^2,s\in\Omega_{s},s'\in\Omega_{s'}}{\min}\ \underset{\cmax}{\max} 
\ \colbit{\underset{\bm{z}\sim S}{\ehat}}[{G}_{tp}^{{\colblue{\kappa}},\colbit{\omega}}],
\end{equation}
where $G_{tp}^{{\colblue{\kappa}},\colbit{\omega}}=G_{tp}^{{\colblue{\kappa}},\colbit{\omega}}(f,a,b,\gamma,\bm{z},s,s')$. 

\section{Experiment Details}
\label{section:experiment_details}
\subsection{Dataset}
\textbf{Binary CIFAR-10-Long-Tail Dataset.} 
The CIFAR-10 dataset contains 60,000 images, each of 32 * 32 shapes, grouped into 10 classes of 6,000 images. The training and test sets contain 50,000 and 10,000 images, respectively. We construct the binary datasets by selecting one super category as positive class and the other categories as negative class. We generate three binary subsets composed of positive categories, including 1) birds, 2) automobiles, and 3) cats.

\textbf{Binary CIFAR-100-Long-Tail Dataset.} The original CIFAR-100 dataset has 100 classes, with each containing 600 images. In the CIFAR-100, there are 100 classes divided into 20 superclasses. By selecting a superclass as a positive class example each time, we create CIFAR-100-LT by following the same process as CIFAR-10-LT. The positive superclasses consist of 1) fruits and vegetables, 2) insects, and 3) large omnivores and herbivores, respectively.  

\textbf{Binary Tiny-ImageNet-200-Long-Tail Dataset.} There are 100,000 256 * 256 color pictures in the Tiny-ImageNet-200 dataset, divided into 200 categories, with 500 pictures per category. We chose three positive superclasses to create binary subsets: 1) dogs, 2) birds, and 3) vehicles. 

All data are divided into training, validation, and test sets with proportion 0.7 : 0.15 : 0.15. In each class, sample sizes decay exponentially, and the ratio of sample sizes of the least frequent to the most frequent class is set to 0.01.

\begin{table}[h]
\caption{Details of dataset.}
\renewcommand\arraystretch{1.2}
\footnotesize
\begin{tabular}{lllll}
   \toprule
   Dataset & Pos. Class ID & Pos. Class Name & \# Pos& \#Neg \\
   \midrule
   CIFAR-10-LT-1 & 2 & birds & 1,508 & 8,907 \\
   CIFAR-10-LT-2 & 1 & automobiles & 2,517 & 7,898 \\
   CIFAR-10-LT-3 & 3 & birds & 904 & 9,511 \\
   \midrule
   CIFAR-100-LT-1 & 6,7,14,18,24 & insects & 1,928 & 13,218 \\
   CIFAR-100-LT-2 & 0,51,53,57,83 & fruits and vegatables & 885 & 14,261 \\
   CIFAR-100-LT-3 & 15,19,21,32,38 & large omnivores herbivores  & 1,172 & 13,974 \\
   \midrule
   Tiny-ImageNet-200-LT-1 & 24,25,26,27,28,29 & dogs & 2,100 & 67,900 \\
   Tiny-ImageNet-200-LT-2 & 11,20,21,22 & birds & 1,400 & 68,600 \\
   Tiny-ImageNet-200-LT-3 & \tabincell{l}{70,81,94,107,111,116,121,\\133,145,153,164,166} & vehicles & 4,200 & 65,800 \\
   \bottomrule
\end{tabular}
\label{tab:dataset}
\end{table}

\subsection{Implementation Details}
All experiments are conducted on an Ubuntu 16.04.1 server equipped with an Intel(R) Xeon(R) Silver 4110 CPU and four RTX 3090 GPUs, and all codes are developed in \texttt{Python 3.8} and \texttt{pytorch 1.8.2} environment. We use the ResNet-18 as a backbone. With a \texttt{Sigmoid} function, the output is scaled into $[0,1]$. The batch size is set as $1024$. Following  the previous studies \cite{yang2021all, yuan2021large, guo2020communication}, we warm up all algorithms for $10$ epochs with CELoss to avoid overfitting. All models are trained using \texttt{SGD} as the basic optimizer. 

\subsection{Competitors}
We compare our algorithm with 6 baselines: the approximation algorithms of PAUC, which are denoted as AUC-poly \cite{yang2021all} (poly calibrated weighting function) and AUC-exp \cite{yang2021all} (exp calibrated weighting function); the DRO formulation of PAUC, which are denoted as SOPA \cite{zhu2022auc} (exact estimator) and SOPA-S \cite{zhu2022auc} (soft estimator); the large-scale PAUC optimization method, which is denoted AGD-SBCD \cite{yao2022large}; the naive mini-batch version of empirical partial AUC optimization, which is denoted as MB \cite{kar2014online}; the AUC minimax \cite{ying2016stochastic} optimization, which is denoted as AUC-M; the binary CELoss; and our method, which is denoted as PAUCI.

\subsection{Parameter Tuning}
The learning rate of all methods is tuned in $[10^{-2},10^{-5}]$. Weight decay
is tuned in $[10^{-3},10^{-5}]$. Specifically, $E_{k}$ for AUC-poly and AUC-exp is searched in $\{3, 5, 8, 10, 12, 15, 18, 20\}$. For AUC-poly, $\gamma$ is searched in $\{0.03, 0.05, 0.08, 0.1, 1, 3, 5\}$.
For AUC-exp, $\gamma$ is searched in $\{8, 10, 15, 20, 25, 30\}$. For SOPA-S, we tune the KL-regularization parameter $\lambda$ in $\{0.1, 1.0, 10\}$, and we fix $\beta_0 = \beta_1 = 0.9$. For PAUCI, $k$ is tuned in $[1, 10]$, $\nu$, $\lambda$,$c_1$, $c_2$ are tuned in $[0, 1]$, $m$ is tuned in $[10, 100]$, $\kappa$ is tuned in $[2, 6]$ and $\omega$ is tuned in $[0, 4]$.

\subsection{Per-iteration Acceleration}
\label{section:convergence}
\begin{figure}[!t]
	\centering
	
		\begin{minipage}{0.32\linewidth}
		\centering
		\includegraphics[width=\linewidth]{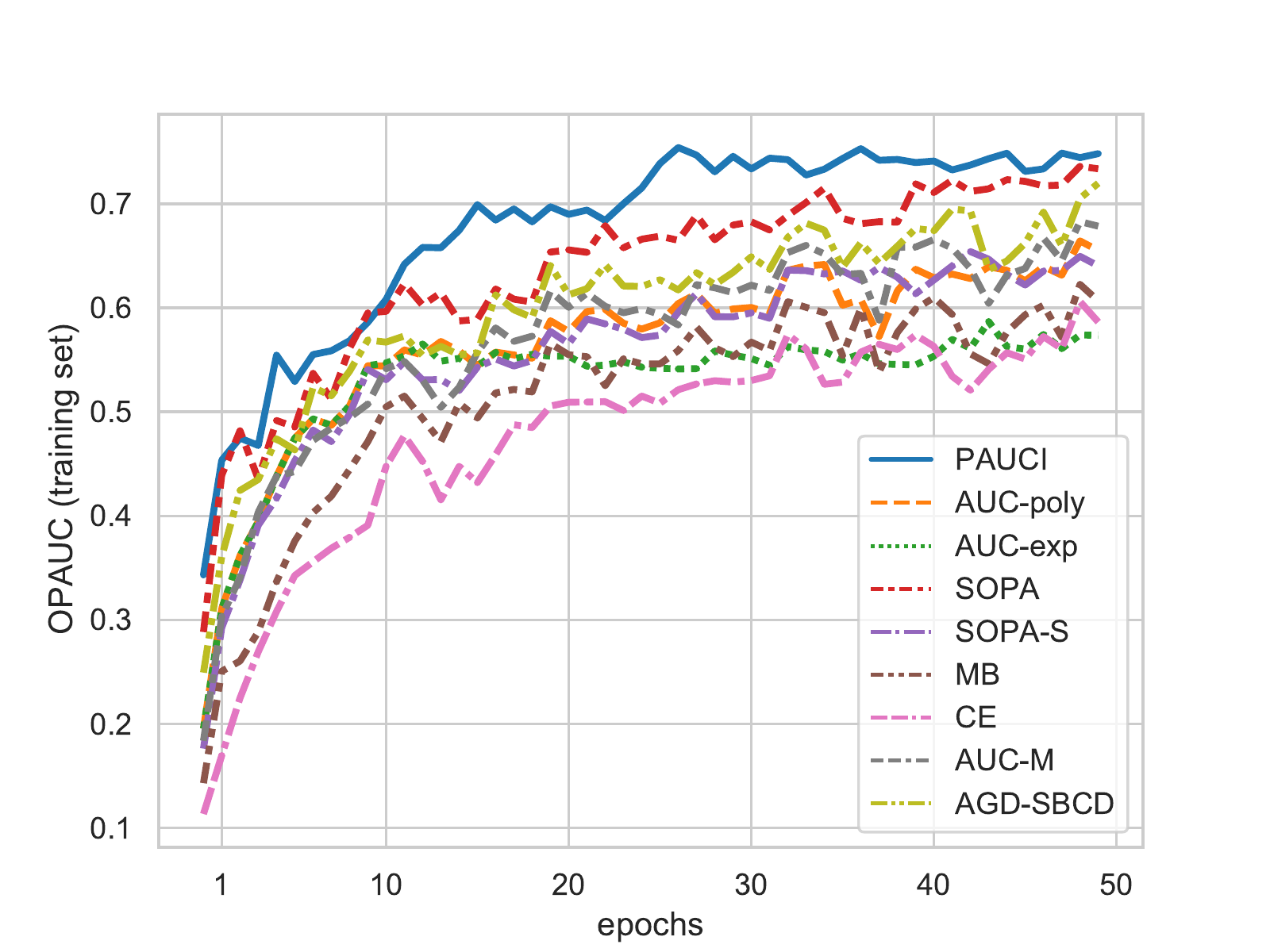}
		\caption*{(a) CIFAR-10-LT-1}
	\end{minipage}
	\begin{minipage}{0.32\linewidth}
		\centering
		\includegraphics[width=\linewidth]{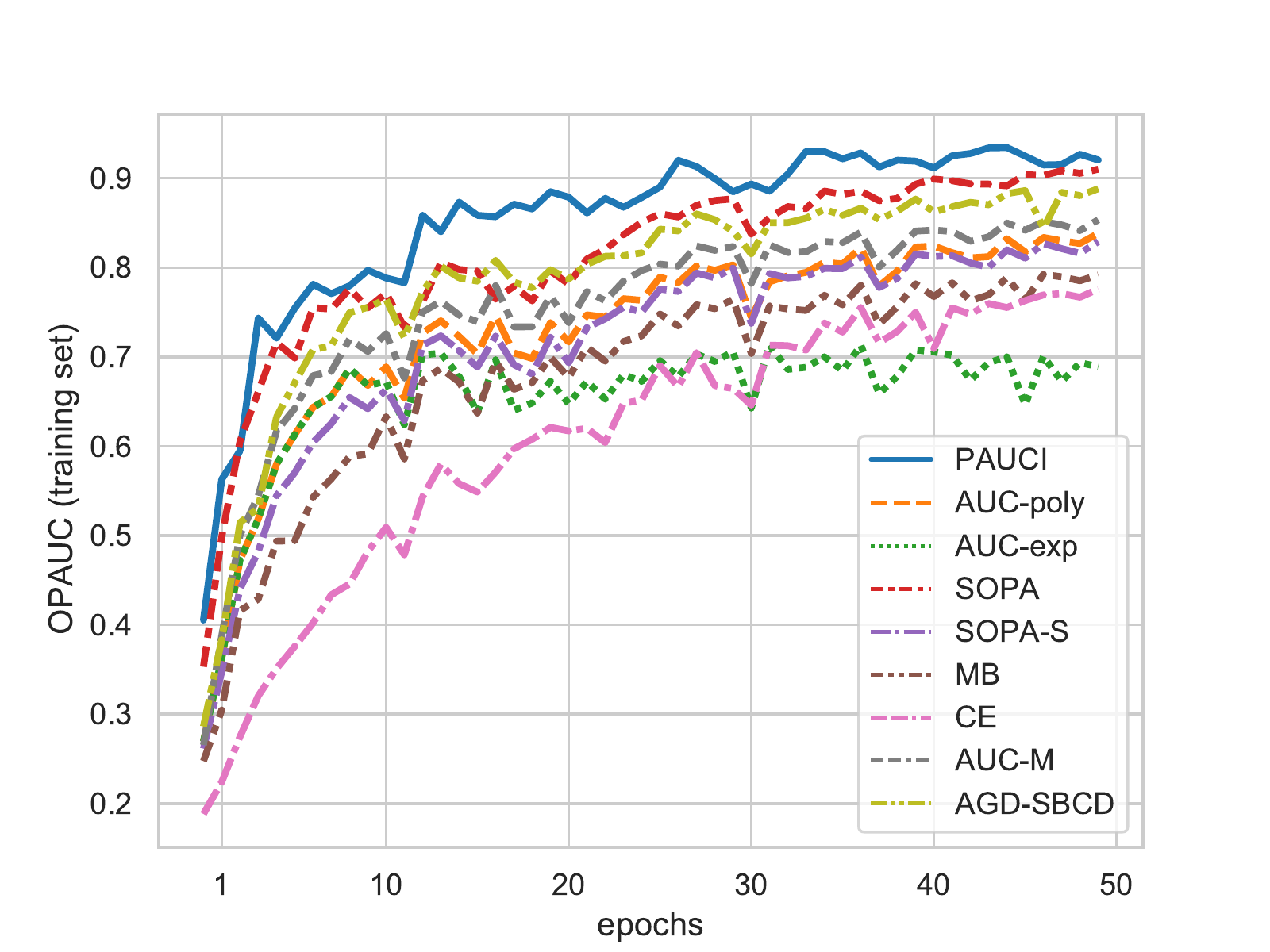}
		\caption*{(b) CIFAR-10-LT-2}
	\end{minipage}
	\begin{minipage}{0.32\linewidth}
		\centering
		\includegraphics[width=\linewidth]{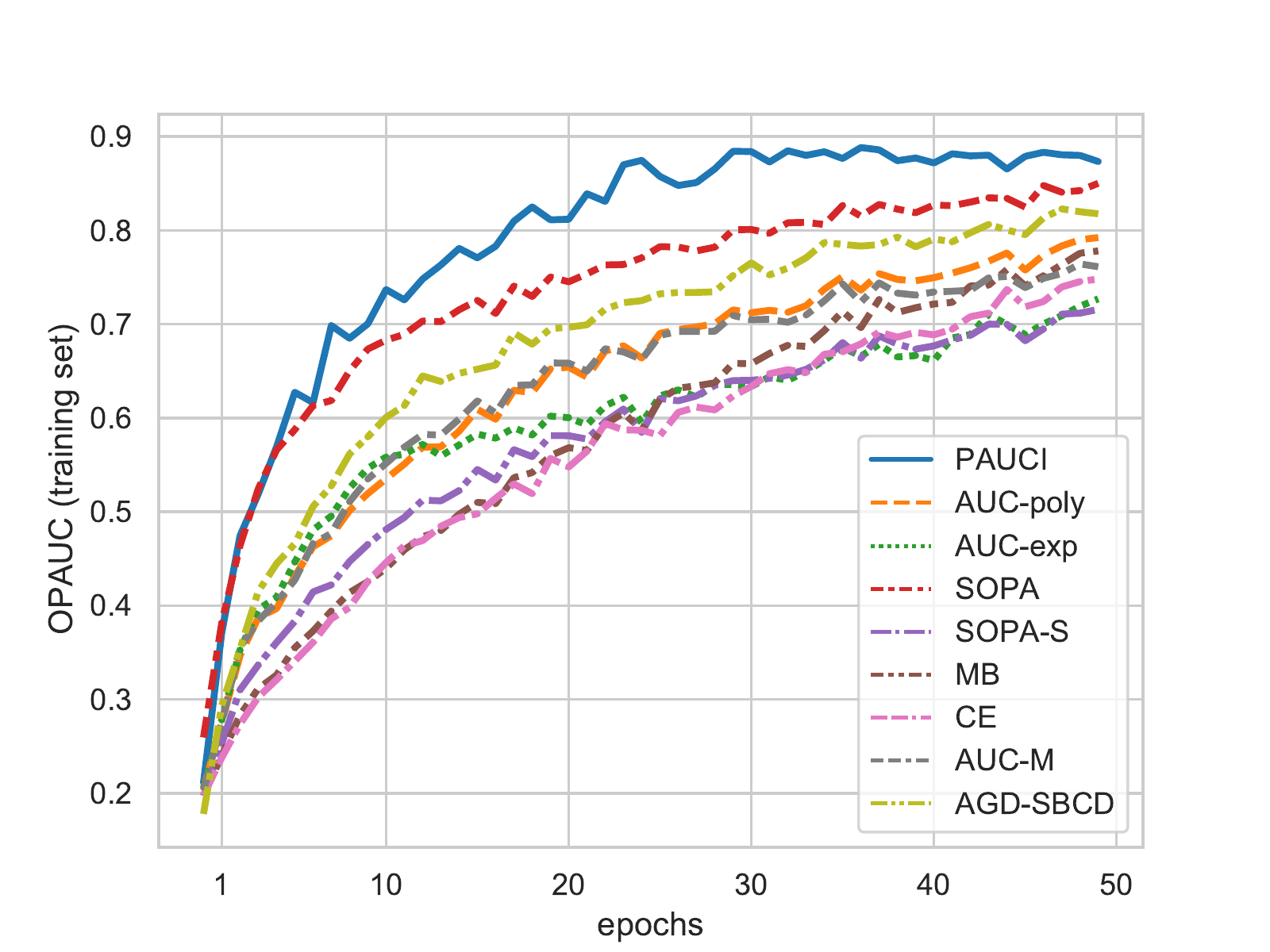}
		\caption*{(c) CIFAR-10-LT-3}
	\end{minipage}
	
	\caption{Convergence of OPAUC optimization.}
	\label{fig:opaucconvergence}
\end{figure}
\begin{figure}[!t]
	\centering
	
		\begin{minipage}{0.32\linewidth}
		\centering
		\includegraphics[width=\linewidth]{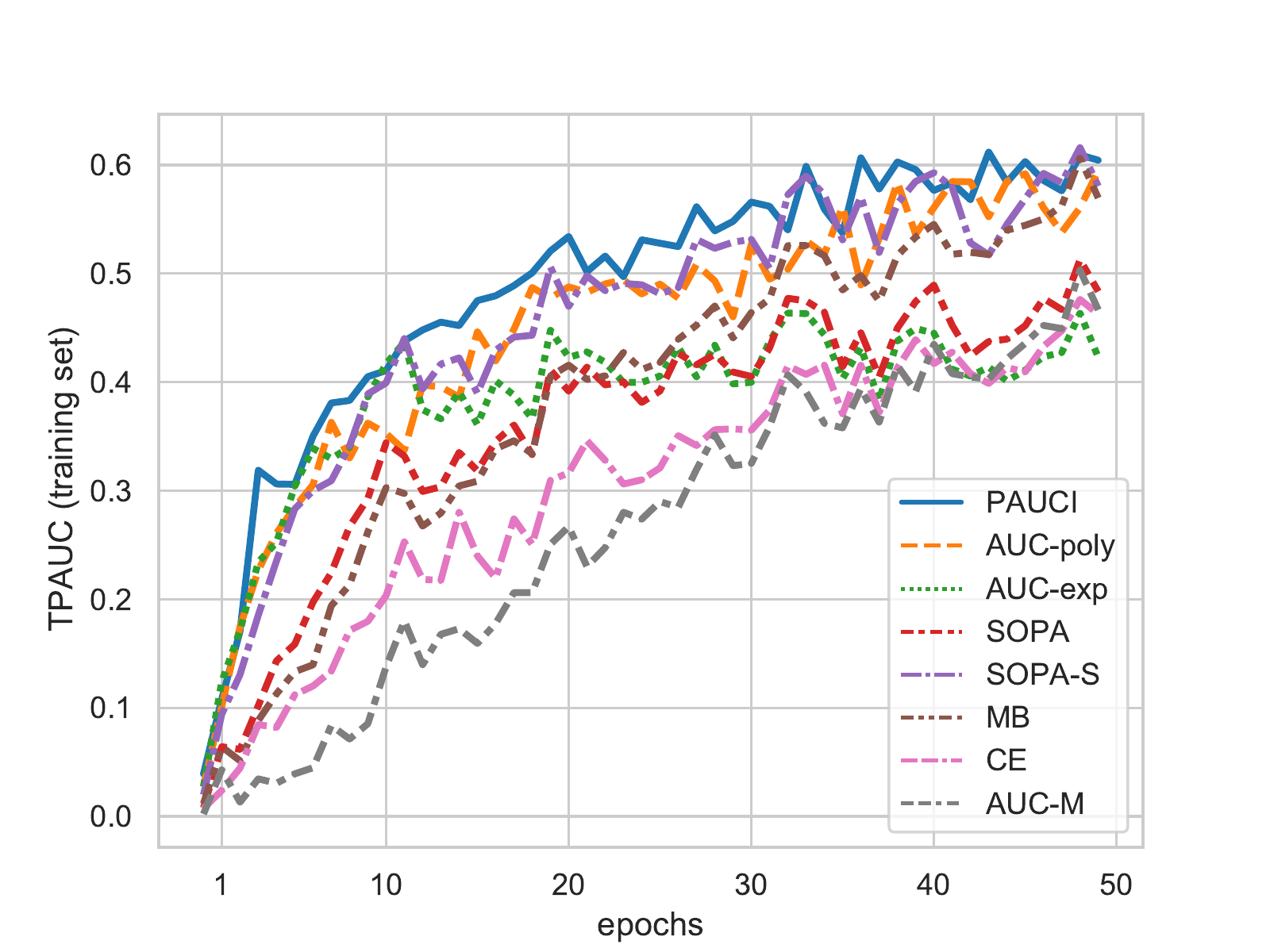}
		\caption*{(a) CIFAR-10-LT-1}
	\end{minipage}
	\begin{minipage}{0.32\linewidth}
		\centering
		\includegraphics[width=\linewidth]{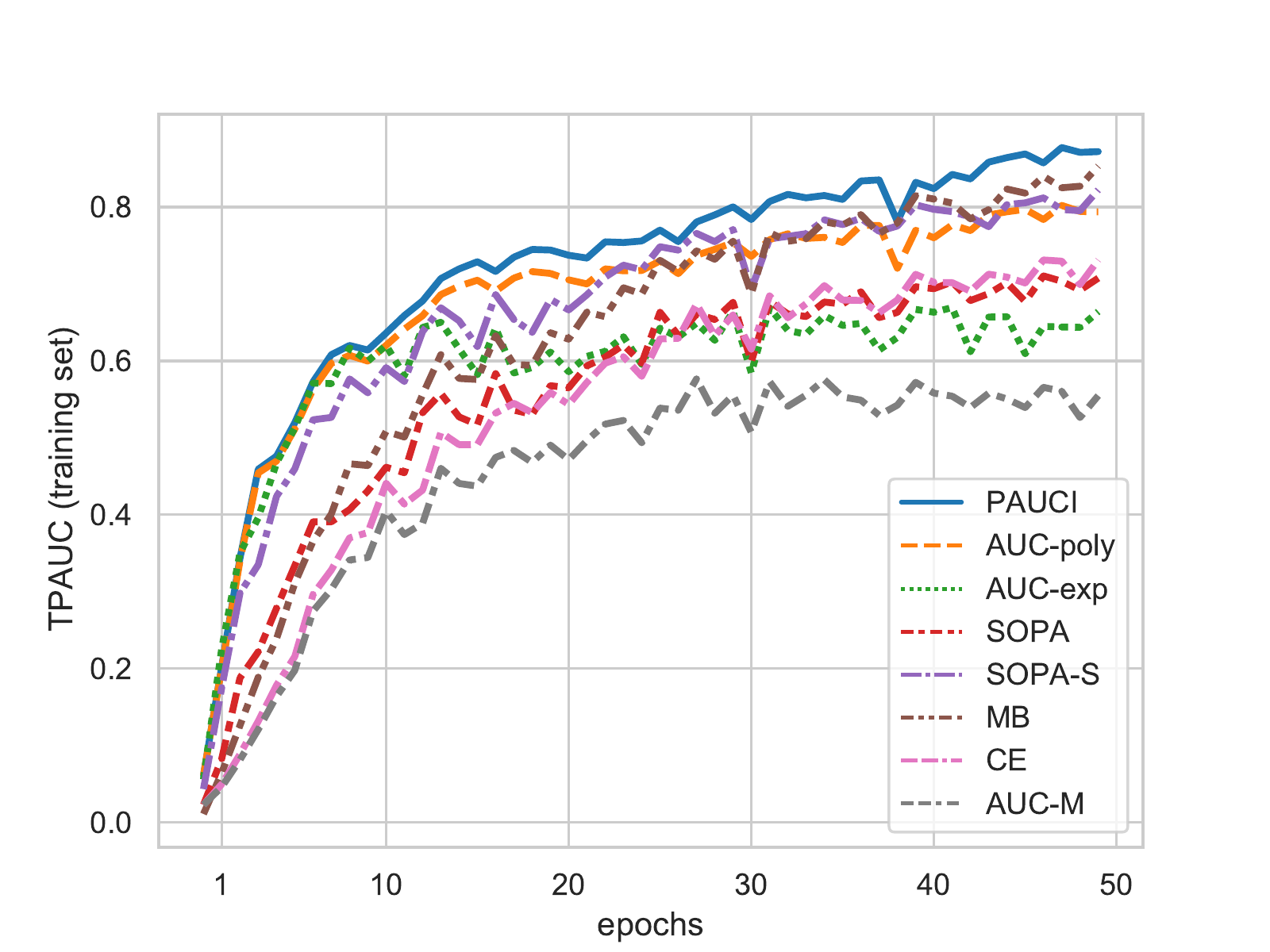}
		\caption*{(b) CIFAR-10-LT-2}
	\end{minipage}
	\begin{minipage}{0.32\linewidth}
		\centering
		\includegraphics[width=\linewidth]{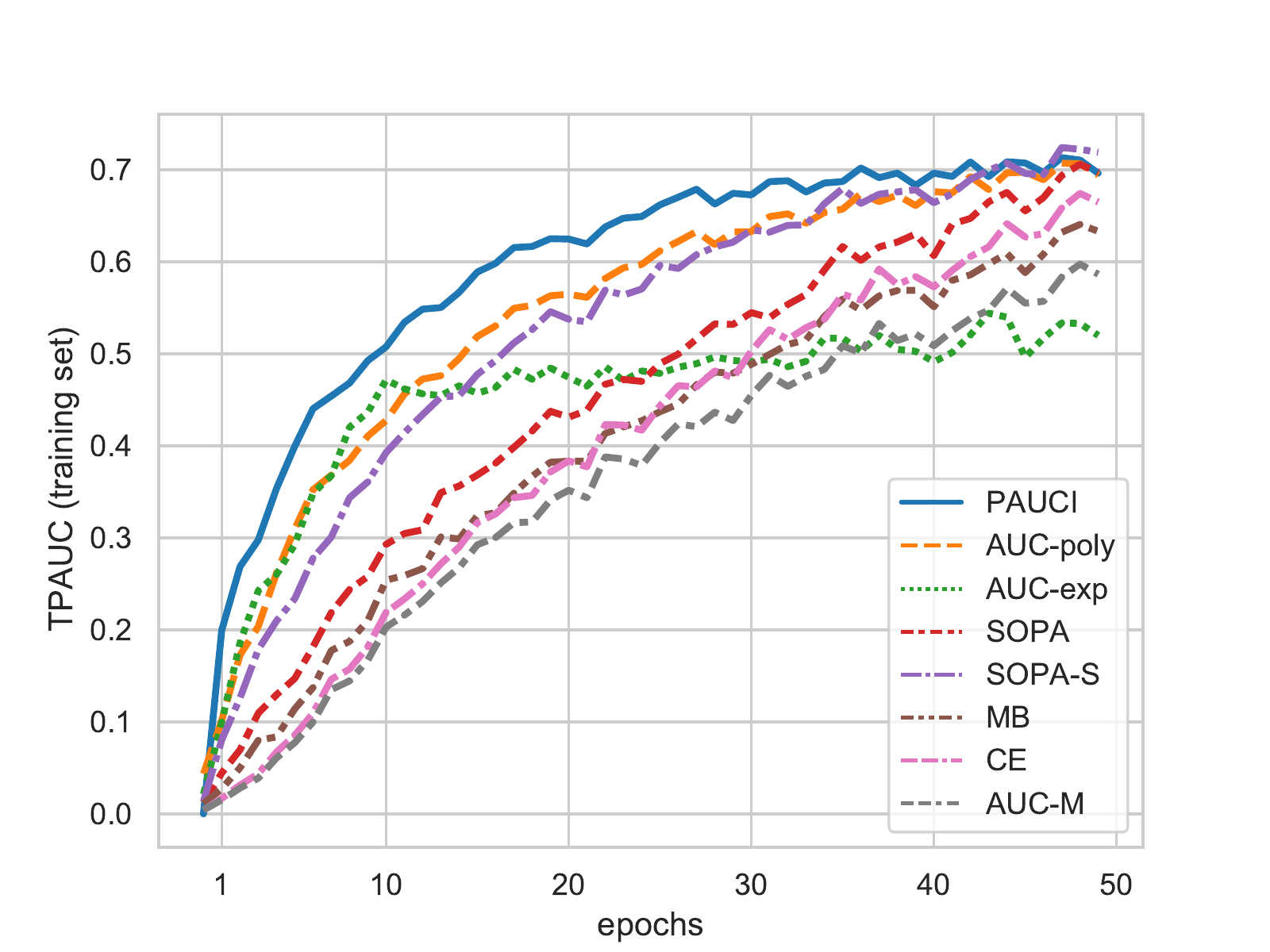}
		\caption*{(c) CIFAR-10-LT-3}
	\end{minipage}
	
	\caption{Convergence of TPAUC optimization.}
	\label{fig:tpaucconvergence}
\end{figure}

We conduct some experiments for per-iteration complexity with a fixed epoch with varying $n_+^B$ and $n_-^B$. All experiments are conducted on an Ubuntu 16.04.1 server with an Intel(R) Xeon(R) Silver 4110 CPU. For every method, we repeat running 10000 times and record the average running time. We only record the loss calculation time and use the python package time.time() to calculate the running time. Methods with * stand for the pair-wise estimator, while methods with ** stand for the instance-wise estimator. Here is the result of the experiment. We see the acceleration is significant when the data is large.

\begin{table}[!ht]
    \centering
    \caption{
    Pre-Iteration time complexity experiments for OPAUC ($\mathrm{FPR}\leq0.3$):
    }
    
        \begin{tabular}{ccccccc}
    \toprule
        unit:ms & $\begin{aligned}n_+^B=64\\n_-^B=64\end{aligned}$ & $\begin{aligned}n_+^B=128\\n_-^B=128\end{aligned}$ & $\begin{aligned}n_+^B=256\\n_-^B=256\end{aligned}$ & $\begin{aligned}n_+^B=512\\n_-^B=512\end{aligned}$ & $\begin{aligned}n_+^B=1024\\n_-^B=1024\end{aligned}$ & $\begin{aligned}n_+^B=2048\\n_-^B=2048\end{aligned}$ \\ \midrule
        SOPA* & 0.075 & 0.205 & 1.427 & 5.053 & 20.132 & 86.779 \\ 
        SOPA-S* & 0.063 & 0.165 & 0.946 & 4.003 & 15.815 & 62.031 \\ 
        AUC-poly* & 0.062 & 0.178 & 1.086 & 3.553 & 14.266 & 56.637 \\
        AUC-exp* & 0.063 & 0.182 & 0.985 & 3.513 & 14.155 & 55.689 \\ 
        AGD-SBCD* & 0.061 & 0.145 & 1.040 & 3.413 & 13.273 & 54.954 \\ 
        MB* & 0.121 & 0.174 & 0.468 & 1.713 & 6.393 & 25.663 \\ 
        PAUCI** & 0.026 & 0.029 & 0.033 & 0.043 & 0.072 & 0.107 \\ 
        AUC-M** & 0.025 & 0.028 & 0.031 & 0.040 & 0.059 & 0.104 \\ 
        CE** & 0.018 & 0.020 & 0.026 & 0.036 & 0.055 & 0.096 \\ 
    \bottomrule
    \end{tabular}
    
\end{table}

\begin{table}[!ht]
    \centering
    \caption{
    Pre-Iteration time complexity experiments for TPAUC ($\mathrm{FPR}\leq0.5,\mathrm{TPR}\geq0.5$):
    }
    
        \begin{tabular}{ccccccc}
    \toprule
        unit:ms & $\begin{aligned}n_+^B=64\\n_-^B=64\end{aligned}$ & $\begin{aligned}n_+^B=128\\n_-^B=128\end{aligned}$ & $\begin{aligned}n_+^B=256\\n_-^B=256\end{aligned}$ & $\begin{aligned}n_+^B=512\\n_-^B=512\end{aligned}$ & $\begin{aligned}n_+^B=1024\\n_-^B=1024\end{aligned}$ & $\begin{aligned}n_+^B=2048\\n_-^B=2048\end{aligned}$ \\ \midrule
        SOPA* & 0.079 & 0.206 & 1.439 & 5.197 & 20.556 & 88.314 \\
        SOPA-S* & 0.065 & 0.153 & 0.947 & 3.940 & 15.388 & 62.541 \\
        AUC-poly* & 0.062 & 0.180 & 1.175 & 3.573 & 14.440 & 56.469 \\ 
        AUC-exp* & 0.059 & 0.206 & 1.154 & 3.558 & 14.080 & 56.566 \\ 
        MB* & 0.173 & 0.198 & 0.491 & 1.955 & 6.554 & 29.369 \\ 
        PAUCI** & 0.030 & 0.030 & 0.038 & 0.045 & 0.071 & 0.109 \\ 
        AUC-M** & 0.025 & 0.027 & 0.033 & 0.043 & 0.059 & 0.104 \\
        CE** & 0.018 & 0.021 & 0.026 & 0.037 & 0.0535 & 0.096 \\ 
    \bottomrule
    \end{tabular}
    
\end{table}

\section{Proofs for Section 3}
\label{section:proofs_section4}

\subsection{Proof for Lemma \ref{lemma:1}}
\textbf{Remainder of Lemma \ref{lemma:1}.} \emph{$\sum_{i=1}^k x_{[i]}$ is a convex function of $(x_1,\cdots,x_n)$ where $x_{[i]}$ is the top-i element of a set $\{x_1,x_2,\cdots,x_n\}$. Furthermore, for $x_i,i=1,\cdots,n$, we have $\frac{1}{k}\sum_{i=1}^kx_{[i]}=\min_{s}\{s+\frac{1}{k}\sum_{i=1}^n[x_i-s]_+\}$, where $[a]_+=\max\{0,a\}$. The population version is $\mathbb{E}_{x}[x\cdot\mathbb{I}_{x\geq \eta(\alpha)}]=\min_s \frac{1}{\alpha}\mathbb{E}_{x}[\alpha s+[x-s]_+]$, where $\eta(\alpha)=\arg\min_{\eta\in\mathbb{R}}[\mathbb{E}_{x}[\mathbb{I}_{x\geq \eta}]=\alpha]$.}
\begin{proof}
For the summation case, please see Lemma 1 in \cite{fan2017learning} for the proof.  We only proof the expectation case here. Specifically, calculating the sub-differential of the term $\mathbb{E}_{x}[\alpha s+[x-s]_+]$ $\textit{w.r.t.}$, $s$, we get:
\begin{equation}
\begin{aligned}
& \alpha-\mathbb{E}_x[\mathbb{I}_{x\geq s}] \in \partial \left(\mathbb{E}_{x}[\alpha s +[x-s]_+]\right)\\
\end{aligned}
\end{equation}
Since $s$ is convex for $\alpha s +\left[x-s\right]_+$, so we can get the optimal $s$ by letting the it be 0:
\begin{equation}
    \mathbb{E}_x[\mathbb{I}_{x\geq s}]=\alpha
\end{equation}
It's' clear that optimal $s$ achieves $\mathrm{top}-\alpha$ quantile.
\end{proof}

\subsection{Proofs for OPAUC}
\subsubsection{Step 1}
\textbf{Remainder of Theorem \ref{theorem:1}.}
\emph{Assuming that $f(\bm{x})\in[0,1]$, $\forall \bm{x}\in\mathcal{X}$, $F_{op}(f,a,b,\gamma, t, \bm{z})$ is defined as:}

\begin{equation}
    \begin{aligned}
F_{op}(f,a,b,\gamma,t, \bm{z})=&\Large[(f(\bm{x})-a)^2- 
2(1+\gamma)f(\bm{x})\Large]y/p-\gamma^2\\
&\Large[(f(\bm{x})-b)^2+2(1+\gamma)f(\bm{x})\Large](1-y)\mathbb{I}_{f(\bm{x})\geq t}/(1-p)/\beta,
\end{aligned}
\label{eq:fop}
\end{equation}

\emph{where $y=1$ for positive instances, $y=0$ for negative instances and we have the following conclusions:}
\begin{enumerate}[leftmargin=20pt]
    \item[(a)] (\textbf{Population Version}.) We have:
    \begin{equation}
        \underset{f}{\min} \ {\mathcal{R}}_{\beta}(f) \Leftrightarrow \underset{\cmin}{\min}\ \underset{\gamma\in[-1,1]}{\max} \ 
    \colblue{\underset{\bm{z}\sim \mathcal{D}_\mathcal{Z}}{\mathbb{E}}}
    \left[F_{op}(f,a,b,\gamma, \colblue{\eta_\beta(f)},\bm{z})\right],
    \label{eq:minmaxopauc1}
    \end{equation}
     where  $\colblue{\eta_\beta(f)}=\arg\min_{\colblue{\eta_{\beta}} \in\mathbb{R}}\mathbb{E}_{\bm{x}'\sim \mathcal{D}_{\mathcal{N}}}[\mathbb{I}_{f(\bm{x}')\geq\ \colblue{\eta_{\beta}}}=\beta]$.
    \item[(b)] (\textbf{Empirical Version}.) Moreover, given a training dataset $S$ with sample size $n$, denote:
    \begin{equation*}
    \colbit{\underset{\bm{z} \sim S}{\ehat}}[F_{op}(f,a,b,\gamma,\colbit{\hat{\eta}_\beta(f)}, \bm{z})] = \frac{1}{n}\sum_{i=1}^n F_{op}(f,a,b,\gamma,{\colbit{\hat{\eta}_\beta(f)}}, \bm{z}_i),
    \end{equation*}
    where $\colbit{\hat{\eta}_\beta(f)}$ is the empirical quantile of the negative instances in $S$. We have:
    \begin{equation}
        \underset{f}{\min} \ \hat{\mathcal{R}}_{\beta}(f, S) \Leftrightarrow \underset{\cmin}{\min}\ \underset{\cmax}{\max} \ 
    \colbit{\underset{\bm{z}\sim S}{\ehat}}
    \left[F_{op}(f,a,b,\gamma, \colbit{\hat{\eta}_\beta(f)}, \bm{z} ) \right],
    \end{equation} 
\end{enumerate}

\begin{proof}
Firstly, we give a reformulation of $\mathrm{OPAUC}$:
\begin{equation}
\begin{aligned}
    \underset{f}{\min} \ {\mathcal{R}}_{\beta}(f) &=\underset{f}{\min} \ \mathbb{E}_{\bm{x} \sim \mathcal{D}_\mathcal{P}, \bm{x}'\sim \mathcal{D}_\mathcal{N}} \left[\mathbb{I}_{f(\bm{x}') \ge \eta_\beta(f)} \cdot \ell(f(\bm{x})- f(\bm{x}'))\right]\\
    &=\underset{f}{\min}\ \mathbb{E}_{\bm{x} \sim \mathcal{D}_\mathcal{P}, \bm{x}'\sim \mathcal{D}_\mathcal{N}} \left[\ell(f(\bm{x})- f(\bm{x}'))|f(\bm{x}') \ge \eta_\beta(f)\right]\cdot \underset{\bm{x}'\sim\mathcal{D}_{\mathcal{N}}}{\mathbb{P}}[f(\bm{x}') \ge \eta_\beta(f)]\\
    &=\underset{f}{\min}\ \mathbb{E}_{\bm{x} \sim \mathcal{D}_\mathcal{P}, \bm{x}'\sim \mathcal{D}_\mathcal{N}} \left[\ell(f(\bm{x})- f(\bm{x}'))|f(\bm{x}') \ge \eta_\beta(f)\right]\cdot\beta\\
    &=\beta\cdot\underset{f}{\min}\ \mathbb{E}_{\bm{x} \sim \mathcal{D}_\mathcal{P}, \bm{x}'\sim \mathcal{D}_\mathcal{N}} \left[\ell(f(\bm{x})- f(\bm{x}'))|f(\bm{x}') \ge \eta_\beta(f)\right].
\end{aligned}
\end{equation}

Applying the surrogate loss $(1-x)^2$ to the estimator of $\mathrm{OPAUC}$, we have:
\begin{equation}
\begin{aligned}
&\underset{\bm{x},\bm{x}'\sim\mathcal{D}_{\mathcal{P}}, \mathcal{D}_{\mathcal{N}}}
{\mathbb{E}}[(1-(f(\bm{x})-f(\bm{x}')))^2|f(\bm{x}')\geq\eta_\beta(f)]\\
&=1+\exdp[f(\bm{x})^2]
+\exdn[f(\bm{x}')^2|f(\bm{x}')\geq 
\eta_{\beta}(f)] -2\exdp[f(\bm{x})]\\
& \qquad +2\exdn[f(\bm{x}')|f(\bm{x}')\geq 
\eta_{\beta}(f)] -2\underset{\bm{x}\sim \mathcal{D}_\mathcal{P}}{\mathbb{E}}[f(\bm{x})]
\underset{\bm{x}'\sim \mathcal{D}_\mathcal{N}}{\mathbb{E}}[f(\bm{x}')|f(\bm{x}')\geq 
\eta_{\beta}(f)]\\
&=1+\underset{\bm{x}\sim \mathcal{D}_\mathcal{P}}{\mathbb{E}}[f(\bm{x})^2]
-\underset{\bm{x}\sim \mathcal{D}_\mathcal{P}}{\mathbb{E}}[f(\bm{x})]^2+\underset{\bm{x}'\sim \mathcal{D}_\mathcal{N}}{\mathbb{E}}[f(\bm{x}')^2|f(\bm{x}') \ge \eta_\beta(f)] \\
& \qquad - \underset{\bm{x}'\sim \mathcal{D}_\mathcal{N}}{\mathbb{E}}[f(\bm{x}')^2|f(\bm{x}') \ge \eta_\beta(f)]^2-2\underset{\bm{x}\sim \mathcal{D}_\mathcal{P}}{\mathbb{E}}[f(\bm{x})]
+2\underset{\bm{x}'\sim \mathcal{D}_\mathcal{N}}{\mathbb{E}}[f(\bm{x}')|f(\bm{x}') \ge \eta_\beta(f)] \\
& \qquad + (\underset{\bm{x}\sim \mathcal{D}_\mathcal{P}}{\mathbb{E}}[f(\bm{x})]
-\underset{\bm{x}'\sim \mathcal{D}_\mathcal{N}}{\mathbb{E}}[f(\bm{x}')|f(\bm{x}') \ge \eta_\beta(f)])^2.
\end{aligned}
\end{equation}
Note that 
\begin{equation}
   \underset{\bm{x}\sim \mathcal{D}_\mathcal{P}}{\mathbb{E}}[f(\bm{x})^2]-\underset{\bm{x}\sim \mathcal{D}_\mathcal{P}}{\mathbb{E}}[f(\bm{x})]^2=\min_{a\in[0,1]}\underset{\bm{x}\sim \mathcal{D}_\mathcal{P}}{\mathbb{E}}[(f(\bm{x})-a)^2],
\end{equation}
where the minimization is achieved by:
\begin{equation}
    a^* = \underset{\bm{x}\sim \mathcal{D}_\mathcal{P}}{\mathbb{E}}[f(\bm{x})],
\end{equation}
where $a^*\in[0, 1]$. Likewise, 
\begin{equation}
\begin{aligned}
    &\underset{\bm{x}'\sim \mathcal{D}_\mathcal{N}}{\mathbb{E}}[f(\bm{x}')^2|f(\bm{x}') \ge \eta_\beta(f)] - \underset{\bm{x}'\sim \mathcal{D}_\mathcal{N}}{\mathbb{E}}[f(\bm{x}')|f(\bm{x}') \ge \eta_\beta(f)]^2=\\
    &\qquad \qquad \qquad \min_{b\in[0,1]}\underset{\bm{x}'\sim \mathcal{D}_\mathcal{N}}{\mathbb{E}}[(f(\bm{x}')-b)^2|f(\bm{x}') \ge \eta_\beta(f)],
\end{aligned}
\end{equation}
where the minimization is get by:
\begin{equation}
    b^* = \underset{\bm{x}'\sim \mathcal{D}_\mathcal{N}}{\mathbb{E}}[f(\bm{x}')|f(\bm{x}') \ge \eta_\beta(f)].
\end{equation}
where $b^*\in[0, 1]$. It's notable that
\begin{equation}
\begin{aligned}
    &\left(\underset{\bm{x}'\sim \mathcal{D}_\mathcal{N}}{\mathbb{E}}[f(\bm{x}')|f(\bm{x}') \ge \eta_\beta(f)]-\underset{\bm{x}\sim \mathcal{D}_\mathcal{P}}{\mathbb{E}}[f(\bm{x})]\right)^2=\\
&\max_{\gamma}\left\{2\gamma\left(\underset{\bm{x}'\sim \mathcal{D}_\mathcal{N}}{\mathbb{E}}[f(\bm{x}')|f(\bm{x}') \ge \eta_\beta(f)]-\underset{\bm{x}\sim \mathcal{D}_\mathcal{P}}{\mathbb{E}}[f(\bm{x})]\right)-\gamma^2\right\}, 
\end{aligned}
\end{equation}
where the maximization can be obtained by:
\begin{equation}
    \gamma^* = \underset{\bm{x}'\sim \mathcal{D}_\mathcal{N}}{\mathbb{E}}[f(\bm{x}')|f(\bm{x}') \ge \eta_\beta(f)]-\underset{\bm{x}\sim \mathcal{D}_\mathcal{P}}{\mathbb{E}}[f(\bm{x})].
\end{equation}
It's clear that $\gamma^*=b^*-a^*$. Then we can constraint $\gamma$ with range $[-1, 1]$ and get the equivalent optimization formulation:
\begin{equation}
\begin{aligned}
    &\underset{\bm{x},\bm{x}'\sim\mathcal{D}_{\mathcal{P}}, \mathcal{D}_{\mathcal{N}}}
{\mathbb{E}}[(1-(f(\bm{x})-f(\bm{x}')))^2|f(\bm{x}')\geq\eta_\beta(f)]\Leftrightarrow \\
&\min_{(a,b)\in[0,1]^2}\max_{\gamma\in[-1,1]} \exdp [(f(\bm{x})-a)^2-2(\gamma+1)f(\bm{x})]  -\gamma^2 \\
&+ \exdn [(f(\bm{x}')-b)^2+2(\gamma+1)f(\bm{x}')|f(\bm{x}')\geq \eta_{\beta}(f)].
\end{aligned}
\end{equation}
Taking expectation $\textit{w.r.t.}$, $\bm{z}$, we have:
\begin{equation}
    \begin{aligned}
\underset{f}{\min} \ \mathcal{R}_{\beta}(f)
\Leftrightarrow \underset{f,a,b}{\min}\ \underset{\gamma\in[-1, 1]}{\max} \ 
\colblue{\underset{\bm{z}\sim \mathcal{D}_\mathcal{Z}}{\mathbb{E}}}
[F_{op}(f,a,b,\gamma,\eta_{\beta}(f),\bm{z})],
\end{aligned}
\end{equation}
and the instance-wise function $F_{op}(f,a,b,\gamma,\eta_{\beta}(f),\bm{z})$ is defined by:

\begin{equation}
    \begin{aligned}
F_{op}(f,a,b,\gamma,t, \bm{z})=&\Large[(f(\bm{x})-a)^2- 
2(1+\gamma)f(\bm{x})\Large]y/p-\gamma^2\\
&\Large[(f(\bm{x})-b)^2+2(1+\gamma)f(\bm{x})\Large](1-y)\mathbb{I}_{f(\bm{x})\geq t}/(1-p)/\beta,
\end{aligned}
\end{equation}

where $p=\Pr[y=1]$. The same result holds for empirical version $\colbit{\underset{\bm{z}\sim S}{\hat{\mathbb{E}}}}
[F_{op}(f,a,b,\gamma,\hat{\eta}_{\beta}(f),\bm{z})]$.
\end{proof}

\subsubsection{Step 2}
First we need the following proposition to complete the proof in this subsection.
\begin{proposition}
If $\gamma \in \Omega_{\gamma} =[b-1, 1]$, $\ell_{-}(\bm{x}')=(f(\bm{x}')-b)^2+2(1+\gamma) f(\bm{x}')$ is an increasing function \wrt $f(\bm{x}')$ when $\bm{x}'\sim\mathcal{D}_\mathcal{N}$ and $f(\bm{x}')\in[0,1]$. 
\label{proposition1}
\begin{proof}
We have:
\begin{equation}
    \frac{\partial \ell_{-}(\bm{x}')}{\partial f(\bm{x}')}=2(f(\bm{x}')-b+1+\gamma).
\end{equation}
Assuming that $f(\bm{x}')\in[0,1]$, then the feasible solution of $b$ is nonnegative. When $\gamma \in [b-1,1]$, the negative loss function's partial derivative $\partial\ell_{-}(\bm{x}')/\partial f(\bm{x}')\geq 0$. Then $\ell_{-}(\bm{x}')$ is an increasing function \wrt $f(\bm{x}')$.
\end{proof}
\end{proposition}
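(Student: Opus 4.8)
The plan is to regard $\ell_-(\bm{x}')$ as a one–variable quadratic in the score $u := f(\bm{x}') \in [0,1]$, with $b$ and $\gamma$ held fixed, and to establish monotonicity by showing that its derivative is non-negative on the whole feasible range $[0,1]$. Concretely, I would write $g(u) = (u-b)^2 + 2(1+\gamma)u$, so that $\ell_-(\bm{x}') = g(f(\bm{x}'))$, and reduce the claim to ``$g$ is non-decreasing on $[0,1]$''.

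Next I would differentiate: $g'(u) = 2(u-b) + 2(1+\gamma) = 2\bigl(u + (1+\gamma-b)\bigr)$. The constraint $\gamma \in \Omega_{\gamma} = [b-1,1]$ gives $\gamma \ge b-1$, i.e.\ $1+\gamma-b \ge 0$; combined with $u = f(\bm{x}') \ge 0$ this yields $g'(u) \ge 0$ for every $u \in [0,1]$. Hence $g$ is non-decreasing on $[0,1]$, which is exactly the assertion that $\ell_-(\bm{x}')$ is increasing (in the weak sense) \wrt $f(\bm{x}')$ on the negative distribution $\mathcal{D}_\mathcal{N}$.

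There is essentially no obstacle here; the one thing worth being careful about is which part of the feasible set is actually doing the work. The worst case of $g'$ over $u \in [0,1]$ is the left endpoint $u=0$, where $g'(0) = 2(1+\gamma-b)$, and this is non-negative precisely because the lower end of $\Omega_{\gamma}$ was set to $b-1$; the upper bound $\gamma \le 1$ and the bound $b \in [0,1]$ are not needed for this particular monotonicity statement (they enter elsewhere, e.g.\ to keep the objective bounded and to make the concave regularization valid). Finally, I would flag the purpose of the lemma: monotonicity of $\ell_-$ in $f(\bm{x}')$ is what licenses the substitution $x = \ell_-(\bm{x}')$ into Lemma~\ref{lemma:1}, so that selecting the top-$\beta$ negatives ranked by $f$ coincides with an Average-Top-$k$ term in $\ell_-$ — the reformulation step used in Theorem~\ref{thm:step2}.
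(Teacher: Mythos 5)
Your proposal is correct and follows essentially the same route as the paper: differentiate $\ell_-$ with respect to $u=f(\bm{x}')$ to get $2(u-b+1+\gamma)$ and conclude nonnegativity from $\gamma\ge b-1$ together with $u\ge 0$. Your added observation that only the lower endpoint of $\Omega_\gamma$ (and not $b\in[0,1]$ or $\gamma\le 1$) is doing the work is a fair refinement of the paper's slightly looser wording, but the argument is the same.
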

\begin{remark}
For negative instances, if the loss function is an increasing function \wrt the score $f(\bm{x}')$, then the top-ranked losses are equivalent to the losses of top-ranked instances.
\end{remark}
\textbf{Reminder of Theorem \ref{theorem:2}.}
\emph{Assuming that $f(\bm{x})\in[0,1]$, for all $\bm{x}\in\mathcal{X}$, we have the equivalent optimization for $\mathrm{OPAUC}$: }
\begin{equation}
\begin{aligned}
    \underset{\cmin}{\min}\ \underset{\gamma \in [-1,1]}{\max} \ 
\colblue{\underset{\bm{z}\sim \mathcal{D}_\mathcal{Z}}{\mathbb{E}}}[F_{op}(f,a,b,\gamma,\colblue{\efb}, \bm{z})]
&\Leftrightarrow \underset{\cmin}{\min}\ 
\underset{\cmax }{\max} 
\ \underset{s'\in\Omega_{s'}}{\min}
\ \colblue{\underset{\bm{z}\sim \mathcal{D}_\mathcal{Z}}{\mathbb{E}}}[G_{op}(f,a,b,\gamma,\bm{z},s')],
\end{aligned}
\end{equation}
\begin{equation}
    \begin{aligned}
        \underset{\cmin}{\min}\ \underset{\gamma\in [-1,1]}{\max} \ 
    \colbit{\underset{\bm{z}\sim S}{\ehat}}[F_{op}(f,a,b,\gamma, \colbit{\hefb},\bm{z})]
    &\Leftrightarrow \underset{\cmin}{\min}\ 
    \underset{\gamma\in\Omega_{\gamma} }{\max} 
    \ \underset{s'\in\Omega_{s'}}{\min}
    \ \colbit{\underset{\bm{z}\sim S}{\ehat}}[G_{op}(f,a,b,\gamma,\bm{z},s')],
\end{aligned}
\end{equation}

\emph{where $\Omega_{\gamma}=[b-1,1]$, $\Omega_{s'}=[0,5]$ and}

\begin{equation}
    \begin{aligned}
G_{op}(f,a,b,\gamma,\bm{z},s')&=[(f(\bm{x})-a)^2- 
2(1+\gamma)f(\bm{x})]y/p-\gamma^2\\
& +
\left(\beta s' +\left[(f(\bm{x})-b)^2+2(1+\gamma) f(\bm{x})-s'\right]_+\right)(1-y)/[\beta(1-p)].
\end{aligned}
\end{equation}

\begin{proof}


According to the Thm.\ref{Constrainted_Reformulation} in Appendix.\ref{val_constraintval},
when we constraint $\gamma$ in range $\Omega_{\gamma}=[b-1, 1]$, we have:

\begin{equation}
    \min_{f,(a,b)\in[0,1]^2}\max_{\gamma\in [-1,1]}\colblue{\mathbb{E}_{z\sim\mathcal{D}_{\mathcal{Z}}}} [F_{op}]
    \Leftrightarrow
    \min_{f,(a,b)\in[0,1]^2}\max_{\gamma\in [b-1,1]}\colblue{\mathbb{E}_{z\sim\mathcal{D}_{\mathcal{Z}}}} [F_{op}]
\end{equation}

According to Thm.\ref{theorem:1}, we have: 

\begin{equation}
    \begin{aligned}
\underset{\bm{z}\sim \mathcal{D}_\mathcal{Z}}{\mathbb{E}}
[F_{op}(f,a,b,\gamma,\colblue{\eta_{\beta}(f)},&\bm{z})]\Leftrightarrow 
\underset{\bm{x}\sim \mathcal{D}_\mathcal{P}}{\mathbb{E}}[(f(\bm{x})-a)^2- 
2(1+\gamma)f(\bm{x})] -\gamma^2\\
& +\underset{\bm{x}'\sim \mathcal{D}_\mathcal{N}}{\mathbb{E}}
\left([(f(\bm{x}')-b)^2+2(1+\gamma)f(\bm{x}')]\cdot \mathbb{I}_{f(\bm{x})\geq \eta_{\beta}(f)}\right)/\beta.
\end{aligned}
\end{equation}

We denote $\ell_-(\bm{x}')=(f(\bm{x}')-b)^2+2(1+\gamma)f(\bm{x}')$. The Prop.\ref{proposition1} ensures that the negative loss function $\ell_-(\bm{x}')$ is an increasing function when $\gamma\in[b-1,1]$. Then we can get: 

\begin{equation}
    \mathbb{E}_{\bm{x}'\sim\mathcal{D}_\mathcal{N}}[\mathbb{I}_{f(\bm{x}')\geq\eta_{\beta}(f)}\cdot\ell_-(\bm{x}')]= \min_s \frac{1}{\beta} \mathbb{E}_{\bm{x}'\sim\mathcal{D}_\mathcal{N}} [\beta s + [\ell_-(\bm{x}')-s]_+],
\end{equation}

Applying Lem.\ref{lemma:1} to negative loss function, then we have:

\begin{equation}
    \begin{aligned}
\colblue{\underset{\bm{z}\sim \mathcal{D}_\mathcal{Z}}{\mathbb{E}}}
[F_{op}(f,a,b,\gamma,\colblue{\eta_{\beta}(f)},\bm{z})]&= \underset{s'}{\min}
\underset{\bm{x}\sim \mathcal{D}_\mathcal{P}}{\mathbb{E}}[(f(\bm{x})-a)^2- 
2(1+\gamma)f(\bm{x})] -\gamma^2\\
&\quad +\underset{\bm{x}'\sim \mathcal{D}_\mathcal{N}}{\mathbb{E}}
\left(\beta s' +\left[(f(\bm{x}')-b)^2+2(1+\gamma)f(\bm{x}')-s'\right]_+\right)/\beta.
\end{aligned}
\end{equation}

Then, we get:
\begin{equation}
\begin{aligned}
\colblue{\underset{\bm{z}\sim \mathcal{D}_\mathcal{Z}}{\mathbb{E}}}[F_{op}(f,a,b,\gamma,\colblue{\eta_{\beta}(f)},\bm{z})]
&=
\ \underset{s'\in\Omega_{s'}}{\min}
\ \colblue{\underset{\bm{z}\sim \mathcal{D}_\mathcal{Z}}{\mathbb{E}}}[G_{op}(f,a,b,\gamma,\bm{z},s')],
\end{aligned}
\end{equation}
where

\begin{equation}
    \begin{aligned}
G_{op}(f,a,b,\gamma,\bm{z},s')&=[(f(\bm{x})-a)^2- 
2(1+\gamma)f(\bm{x})]y/p -\gamma^2\\
& +
\left(\beta s' +\left[(f(\bm{x})-b)^2+2(1+\gamma) f(\bm{x})-s'\right]_+\right)(1-y)/[\beta(1-p)].
\end{aligned}
\end{equation}

We have the equivalent optimization for $\mathrm{OPAUC}$:
\begin{equation}
\begin{aligned}
    &\underset{\cmin}{\min}\ \underset{\gamma\in[-1,1]}{\max} \ 
\colblue{\underset{\bm{z}\sim \mathcal{D}_\mathcal{Z}}{\mathbb{E}}}[F_{op}(f,a,b,\gamma,\colblue{\efb}, \bm{z})]
\Leftrightarrow \\
&\underset{\cmin}{\min}\ 
\underset{\cmax }{\max} 
\ \underset{s'\in\Omega_{s'}}{\min}
\ \colblue{\underset{\bm{z}\sim \mathcal{D}_\mathcal{Z}}{\mathbb{E}}}[G_{op}(f,a,b,\gamma,\bm{z},s')],
\end{aligned}
\end{equation}
where $\Omega_{\gamma}=[b-1,1]$, $\Omega_{s'}=[0,5]$, $p=\mathbb{P}[y=1]$. The same result holds for empirical version $\colbit{\underset{\bm{z}\sim S}{\hat{\mathbb{E}}}}[G_{op}(f,a,b,\gamma,\bm{z},s')]$.

\end{proof}

\subsection{Proofs for TPAUC}

\subsubsection{Step 1}
\textbf{Reminder of Theorem \ref{theorem:7}.}
\emph{Assuming that $f(\bm{x})\in[0,1]$, $\forall \bm{x}\in\mathcal{X}$, $F_{tp}(f,a,b,\gamma, t, t', \bm{z})$ is defined as:}

\begin{equation}
    \begin{aligned}
F_{tp}(f,a,b,&\gamma,t, t', \bm{z})=(f(\bm{x})-a)^2y\mathbb{I}_{f(\bm{x})\leq t}/(\alpha p)+
(f(\bm{x})-b)^2(1-y)\mathbb{I}_{f(\bm{x}')\geq t'}/[\beta(1-p)]\\
&+2(1+\gamma)f(\bm{x})(1-y)\mathbb{I}_{f(\bm{x}')\geq t'}/[\beta(1-p)] - 
2(1+\gamma)f(\bm{x})y\mathbb{I}_{f(\bm{x})\leq t}/(\alpha p) -\gamma^2,
\end{aligned}
\label{eq:fop}
\end{equation}

\emph{where $y=1$ for positive instances, $y=0$ for negative instances and we have the following conclusions:
\begin{enumerate}[leftmargin=20pt]
    \item[(a)] (\textbf{Population Version}.) We have:
    \begin{equation}
        \underset{f}{\min} \ {\mathcal{R}}_{\alpha,\beta}(f) \Leftrightarrow \underset{\cmin}{\min}\ \underset{\gamma\in[-1,1]}{\max} \ 
    \colblue{\underset{\bm{z}\sim \mathcal{D}_\mathcal{Z}}{\mathbb{E}}}
    \left[F_{tp}(f,a,b,\gamma, \colblue{\eta_\alpha(f)},\colblue{\eta_\beta(f)},\bm{z})\right],
    \label{eq:minmaxtpauc1}
    \end{equation}
    where $\colblue{\eta_\alpha(f)}=\arg\min_{\colblue{\eta_{\alpha}} \in\mathbb{R}}\mathbb{E}_{\bm{x}\sim \mathcal{D}_{\mathcal{P}}}[\mathbb{I}_{f(\bm{x})\leq\ \colblue{\eta_{\alpha}}}=\alpha]$ and $\colblue{\eta_\beta(f)}=\arg\min_{\colblue{\eta_{\beta}} \in\mathbb{R}}\mathbb{E}_{\bm{x}'\sim \mathcal{D}_{\mathcal{N}}}[\mathbb{I}_{f(\bm{x}')\geq\ \colblue{\eta_{\beta}}}=\beta]$.
    \item[(b)] (\textbf{Empirical Version}.) Moreover, given a training dataset $S$ with sample size $n$, denote:
    \begin{equation*}
    \colbit{\underset{\bm{z} \sim S}{\ehat}}[F_{tp}(f,a,b,\gamma,\colbit{\hat{\eta}_{\alpha}(f)}, \colbit{\hat{\eta}_\beta(f)}, \bm{z})] = \frac{1}{n}\sum_{i=1}^n F_{tp}(f,a,b,\gamma,\colbit{\hat{\eta}_{\alpha}(f)}, {\colbit{\hat{\eta}_\beta(f)}}, \bm{z})
    \end{equation*}
    where $\colbit{\hat{\eta}_{\alpha}(f)}$ and $\colbit{\hat{\eta}_\beta(f)}$ are the empirical quantile of the positive and negative instances in $S$, respectively. We have:
    \begin{equation}
        \underset{f}{\min} \ \hat{\mathcal{R}}_{\alpha,\beta}(f, S) \Leftrightarrow \underset{\cmin}{\min}\ \underset{\cmax}{\max} \ 
    \colbit{\underset{\bm{z}\sim S}{\ehat}}
    \left[F_{tp}(f,a,b,\gamma, \colbit{\hat{\eta}_{\alpha}(f)}, \colbit{\hat{\eta}_\beta(f)}, \bm{z} ) \right],
    \end{equation} 
\end{enumerate}
}
\begin{proof}
Firstly, we give a reformulation of $\mathrm{TPAUC}$:

\begin{equation}
\begin{aligned}
    \underset{f}{\min} \ {\mathcal{R}}_{\alpha,\beta}(f) &=\underset{f}{\min} \ \mathbb{E}_{\bm{x} \sim \mathcal{D}_\mathcal{P}, \bm{x}'\sim \mathcal{D}_\mathcal{N}} \left[\mathbb{I}_{f(\bm{x}) \le \eta_\alpha(f)} \cdot\mathbb{I}_{f(\bm{x}') \ge \eta_\beta(f)} \cdot \ell(f(\bm{x})- f(\bm{x}'))\right]\\
    &=\underset{f}{\min}\ \mathbb{E}_{\bm{x} \sim \mathcal{D}_\mathcal{P}, \bm{x}'\sim \mathcal{D}_\mathcal{N}} \left[\ell(f(\bm{x})- f(\bm{x}'))|f(\bm{x}') \ge \eta_\beta(f), f(\bm{x}) \le \eta_\alpha(f)\right]\\
    & \quad \cdot \underset{\bm{x}'\sim\mathcal{D}_{\mathcal{N}}}{\mathbb{P}}[f(\bm{x}') \ge \eta_\beta(f)]\cdot \underset{\bm{x}\sim\mathcal{D}_{\mathcal{P}}}{\mathbb{P}}[f(\bm{x}) \le \eta_\alpha(f)]\\
    &=\underset{f}{\min}\ \mathbb{E}_{\bm{x} \sim \mathcal{D}_\mathcal{P}, \bm{x}'\sim \mathcal{D}_\mathcal{N}} \left[\ell(f(\bm{x})- f(\bm{x}'))|f(\bm{x}') \ge \eta_\beta(f),f(\bm{x}) \le \eta_\alpha(f) \right]\cdot\alpha\beta\\
    &=\alpha\beta\cdot\underset{f}{\min}\ \mathbb{E}_{\bm{x} \sim \mathcal{D}_\mathcal{P}, \bm{x}'\sim \mathcal{D}_\mathcal{N}} \left[\ell(f(\bm{x})- f(\bm{x}'))|f(\bm{x}') \ge \eta_\beta(f),f(\bm{x}) \le \eta_\alpha(f) \right].
\end{aligned}
\end{equation}

Similar to the proof of Thm.\ref{theorem:1}, using the square surrogate loss, we can get the equivalent optimization formulation:
\begin{equation}
    \begin{aligned}
&\underset{f}{\min} \ \mathcal{R}_{\alpha,\beta}(f)
\Leftrightarrow  \underset{f,(a,b)\in[0,1]^2}{\min}\ \underset{\gamma\in[-1,1]}{\max} \ 
\colblue{\underset{\bm{z}\sim \mathcal{D}_\mathcal{Z}}{\mathbb{E}}}
[F_{tp}(f,a,b,\gamma,\colblue{\eta_{\alpha}(f)}, \colblue{\eta_{\beta}(f)},\bm{z})],
\end{aligned}
\end{equation}
and the instance-wise function $F_{tp}(f,a,b,\gamma,\colblue{\eta_{\alpha}(f)},\colblue{\eta_{\beta}(f)},\bm{z})$ is defined by:

\begin{equation}
    \begin{aligned}
F_{tp}&(f,a,b,\gamma,\colblue{\eta_{\alpha}(f)},\colblue{\eta_{\beta}(f)},\bm{z})\\
&=(f(\bm{x})-a)^2y\mathbb{I}_{f(\bm{x})\leq\eta_{\alpha}(f)}/(\alpha p)+
(f(\bm{x})-b)^2(1-y)\mathbb{I}_{f(\bm{x})\geq\eta_{\beta}(f)}/[\beta (1-p)]\\
&+2(1+\gamma)f(\bm{x})(1-y)\mathbb{I}_{f(\bm{x})\geq\eta_{\beta}(f)}/[\beta (1-p)] - 
2(1+\gamma)f(\bm{x})y\mathbb{I}_{f(\bm{x})\leq\eta_{\alpha}(f)}/(\alpha p) -\gamma^2.
\end{aligned}
\end{equation}

The same result holds for empirical version $\colbit{\underset{\bm{z}\sim S}{\hat{\mathbb{E}}}}
[F_{tp}(f,a,b,\gamma,\colbit{\hat{\eta}_{\alpha}(f)},\colbit{\hat{\eta}_{\beta}(f)},\bm{z})]$.
\end{proof}

\subsubsection{Step 2}
First we need the following proposition to complete the proof in this subsection.


\begin{proposition}
If $\gamma \in \Omega_{\gamma} =[\max\{b-1, -a\}, 1]$, $\ell_+(\bm{x})=(f(\bm{x})-a)^2-2(1+\gamma) f(\bm{x})$ is a decreasing function \wrt $f(\bm{x})$ when $\bm{x}\sim\mathcal{D}_\mathcal{P}$ and $f(\bm{x})\in[0,1]$.
\label{proposition2}
\begin{proof}
We have:
\begin{equation}
    \frac{\partial \ell_{+}(\bm{x})}{\partial f(\bm{x})}=2(f(\bm{x})-a-1-\gamma).
\end{equation}
Assuming that $f(\bm{x})\in [0,1]$, then the feasible solution of $a$ is nonnegative. When $\gamma \in [\max\{b-1, -a\}, 1]$, the positive loss function's partial derivative $\partial\ell_{+}(\bm{x})/\partial f(\bm{x})\leq 0$. Then $\ell_{+}(\bm{x})$ is an decreasing function \wrt $f(\bm{x})$.
\end{proof}
\end{proposition}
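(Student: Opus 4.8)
The plan is to imitate the proof of Proposition~\ref{proposition1} line for line, since $\ell_+$ on the positive side plays the role dual to that of $\ell_-$ on the negative side. First I would differentiate $\ell_+(\bm{x}) = (f(\bm{x})-a)^2 - 2(1+\gamma)f(\bm{x})$ with respect to its scalar argument $f(\bm{x})$, which gives
\[
\frac{\partial \ell_+(\bm{x})}{\partial f(\bm{x})} = 2\big(f(\bm{x}) - a\big) - 2(1+\gamma) = 2\big(f(\bm{x}) - a - 1 - \gamma\big).
\]
Because $\ell_+$ is a quadratic in $f(\bm{x})$ with positive leading coefficient, this derivative is itself increasing in $f(\bm{x})$, so over the admissible interval $f(\bm{x}) \in [0,1]$ it is maximized at the right endpoint $f(\bm{x}) = 1$.

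Next I would substitute $f(\bm{x}) = 1$ to obtain the uniform bound $\partial \ell_+(\bm{x})/\partial f(\bm{x}) \le 2(1 - a - 1 - \gamma) = -2(a+\gamma)$ for every $f(\bm{x}) \in [0,1]$. The feasibility constraint $\gamma \in \Omega_\gamma = [\max\{b-1,-a\},\,1]$ entails in particular $\gamma \ge -a$, hence $a + \gamma \ge 0$ and the right-hand side is $\le 0$. Consequently $\partial \ell_+(\bm{x})/\partial f(\bm{x}) \le 0$ throughout $[0,1]$, which is exactly the statement that $\ell_+(\bm{x})$ is (monotonically) decreasing in $f(\bm{x})$ on that range.

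There is no genuinely hard step here; the only points to watch are that the relevant half of the constraint box is $\gamma \ge -a$ — the companion inequality $\gamma \ge b-1$ is what makes Proposition~\ref{proposition1} work on the negative side — and that one must evaluate the derivative at $f(\bm{x}) = 1$ rather than at $0$ to get the worst case. The accompanying remark, that for positive instances a decreasing loss makes the top-ranked losses coincide with the losses of the lowest-scored instances, then follows immediately, and this is precisely what justifies applying Lemma~\ref{lemma:1} to $\ell_+$ when proving Theorem~\ref{tpthm:step2}.
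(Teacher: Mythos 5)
Your proof is correct and follows essentially the same route as the paper: compute $\partial \ell_+/\partial f(\bm{x}) = 2(f(\bm{x})-a-1-\gamma)$, then use $f(\bm{x})\le 1$ together with the constraint $\gamma \ge -a$ to conclude the derivative is nonpositive on $[0,1]$. Your version merely spells out the endpoint evaluation that the paper leaves implicit, so there is nothing to correct.
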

\begin{remark}
For positive instances, if the loss function is an decreasing function \wrt the score $f(\bm{x})$, then the top-ranked losses are equivalent to the losses of bottom-ranked instances.
\end{remark}

\textbf{Reminder of Theorem \ref{theorem:8}.}
\emph{Assuming that $f(\bm{x})\in[0,1]$ for all $\bm{x} \in\mathcal{X}$, we have the equivalent optimization for $\mathrm{TPAUC}$:}
\begin{equation}
\begin{aligned}
    \underset{\cmin}{\min}\ \underset{\gamma\in[-1,1]}{\max} \ 
\colblue{\underset{\bm{z}\sim \mathcal{D}_\mathcal{Z}}{\mathbb{E}}}[F_{tp}(f,a,b,\gamma,\colblue{\efa}, \colblue{\efb}, \bm{z})]
\\ \Leftrightarrow \underset{\cmin}{\min}\ 
\underset{\cmax }{\max} 
\ \underset{s\in\Omega_{s},s'\in\Omega_{s'}}{\min}
\ \colblue{\underset{\bm{z}\sim \mathcal{D}_\mathcal{Z}}{\mathbb{E}}}[G_{tp}(f,a,b,\gamma,\bm{z},s,s')],
\end{aligned}
\end{equation}
\begin{equation}
    \begin{aligned}
        \underset{\cmin}{\min}\ \underset{\gamma\in[-1,1]}{\max} \ 
    \colbit{\underset{\bm{z}\sim S}{\ehat}}[F_{tp}(f,a,b,\gamma, \colbit{\hefa}, \colbit{\hefb},\bm{z})]
    \\ \Leftrightarrow \underset{\cmin}{\min}\ 
    \underset{\gamma\in\Omega_{\gamma} }{\max} 
    \ \underset{s\in\Omega_{s}, s'\Omega_{s'}}{\min}
    \ \colbit{\underset{\bm{z}\sim S}{\ehat}}[G_{tp}(f,a,b,\gamma,\bm{z},s,s')],
    \end{aligned}
\end{equation}

\emph{where $\Omega_{\gamma}=[\max\{b-1, -a\},1]$, $\Omega_{s}=[-4, 1]$, $\Omega_{s'}=[0,5]$ and}

\begin{equation}
    \begin{aligned}
G_{tp}(f,a,b,\gamma,\bm{z},s, s')&=\left(\alpha s + \left[(f(\bm{x})-a)^2- 
2(1+\gamma)f(\bm{x})-s\right]_+\right)y/(\alpha p)\\
& +
\left(\beta s' +\left[(f(\bm{x})-b)^2+2(1+\gamma) f(\bm{x})-s'\right]_+\right)(1-y)/[\beta(1-p)] -\gamma^2.
\end{aligned}
\label{OPAUC_IB}
\end{equation}

\begin{proof}

According to the Thm.\ref{Constrainted_Reformulation} in Appendix.\ref{val_constraintval},
when we constraint $\gamma$ in range $\Omega_{\gamma}=[\max\{-a,b-1\}, 1]$, we have:
\begin{equation}
    \min_{f,(a,b)\in[0,1]^2}\max_{\gamma\in [-1,1]}\colblue{\underset{\bm{z}\sim\mathcal{D}_{\mathcal{Z}}}{\mathbb{E}}} [F_{tp}]
    \Leftrightarrow
    \min_{f,(a,b)\in[0,1]^2}\max_{\gamma\in [\max\{-a,b-1\},1]}\colblue{\underset{\bm{z}\sim\mathcal{D}_{\mathcal{Z}}}{\mathbb{E}}} [F_{tp}]
\end{equation}

According to the Thm.\ref{theorem:8}, we have:

\begin{equation}
    \begin{aligned}
\colblue{\underset{\bm{z}\sim \mathcal{D}_\mathcal{Z}}{\mathbb{E}}}
[F_{tp}(f,a,b,\gamma,\colblue{\eta_{\alpha}(f)},&\colblue{\eta_{\beta}(f)},\bm{z})]\Leftrightarrow 
\underset{\bm{x}\sim \mathcal{D}_\mathcal{P}}{\mathbb{E}}\left([(f(\bm{x})-a)^2- 
2(1+\gamma)f(\bm{x})]\cdot\mathbb{I}_{f(\bm{x})\leq \eta_{\alpha}(f)}\right)/\alpha\\
&+\underset{\bm{x}'\sim \mathcal{D}_\mathcal{N}}{\mathbb{E}}
\left([(f(\bm{x}')-b)^2+2(1+\gamma)f(\bm{x}')]\cdot \mathbb{I}_{f(\bm{x}')\geq  \eta_{\beta}(f)}\right)/\beta -\gamma^2.
\end{aligned}
\end{equation}

When we constraint $\gamma$ in range $\Omega_{\gamma}=[\max\{b-1, -a\},1]$, Prop.\ref{proposition1} and Prop.\ref{proposition2} ensure that the positive and negative loss functions are monotonous. Then we can get:
\begin{equation}
    \mathbb{E}_{\bm{x}\sim\mathcal{D}_\mathcal{P}}[\mathbb{I}_{f(\bm{x})\leq\eta_{\alpha}(f)}\cdot\ell_+(\bm{x})]= \min_s \frac{1}{\alpha} \cdot \mathbb{E}_{\bm{x}\sim\mathcal{D}_\mathcal{P}} [\alpha s + [\ell_+(\bm{x})-s]_+],
\end{equation}
\begin{equation}
    \mathbb{E}_{\bm{x}'\sim\mathcal{D}_\mathcal{N}}[\mathbb{I}_{f(\bm{x}')\geq\eta_{\beta}(f)}\cdot\ell_-(\bm{x}')]= \min_{s'} \frac{1}{\beta} \cdot \mathbb{E}_{\bm{x}'\sim\mathcal{D}_\mathcal{N}} [\beta s' + [\ell_-(\bm{x}')-s']_+].
\end{equation}
Applying the Lem.\ref{lemma:1} to positive and negative loss, we have:
\begin{equation}
    \begin{aligned}
\colblue{\underset{\bm{z}\sim \mathcal{D}_\mathcal{Z}}{\mathbb{E}}}
[F_{tp}(f,a,b,\gamma,\colblue{\eta_{\alpha}(f)},\colblue{\eta_{\beta}(f)},\bm{z})]&\Leftrightarrow \underset{s, s'}{\min}
\underset{\bm{x}\sim \mathcal{D}_\mathcal{P}}{\mathbb{E}}\left(\alpha s + \left[(f(\bm{x})-a)^2- 
2(1+\gamma)f(\bm{x}) - s\right]_+\right)/\alpha -\gamma^2\\
&+\underset{\bm{x}'\sim \mathcal{D}_\mathcal{N}}{\mathbb{E}}
\left(\beta s' +\left[(f(\bm{x}')-b)^2+2(1+\gamma)f(\bm{x}')-s'\right]_+\right)/\beta,
\end{aligned}
\end{equation}
Then, we get:
\begin{equation}
\begin{aligned}
\colblue{\underset{\bm{z}\sim \mathcal{D}_\mathcal{Z}}{\mathbb{E}}}[F_{tp}(f,a,b,\gamma, \colblue{\eta_{\alpha}(f)}, \colblue{\eta_{\beta}(f)}, \bm{z})]
&\Leftrightarrow 
\ \underset{s\in\Omega_{s},s'\in\Omega_{s'}}{\min}
\ \colblue{\underset{\bm{z}\sim \mathcal{D}_\mathcal{Z}}{\mathbb{E}}}[G_{tp}(f,a,b,\gamma,\bm{z},s,s')].
\end{aligned}
\end{equation}
where $\Omega_{\gamma}=[\max\{b-1, -a\},1]$, $\Omega_{s}=[-4,1]$, $\Omega_{s'}=[0,5]$, $p=\mathbb{P}[y=1]$ and

\begin{equation}
    \begin{aligned}
G_{tp}(f,a,b,\gamma,&\bm{z},s,s')=\left(\alpha s + \left[(f(\bm{x})-a)^2- 
2(1+\gamma)f(\bm{x})-s\right]_+\right)y/(\alpha p)\\
& +
\left(\beta s' +\left[(f(\bm{x})-b)^2+2(1+\gamma) f(\bm{x})-s'\right]_+\right)(1-y)/[\beta(1-p)] -\gamma^2.
\end{aligned}
\end{equation}

we have the equivalent optimization for $\mathrm{TPAUC}$:
\begin{equation}
\begin{aligned}
    \underset{\cmin}{\min}\ \underset{\gamma\in[-1,1]}{\max} \ 
\colblue{\underset{\bm{z}\sim \mathcal{D}_\mathcal{Z}}{\mathbb{E}}}[F_{tp}(f,a,b,\gamma,\colblue{\efa}, \colblue{\efb}, \bm{z})]
\\ \Leftrightarrow \underset{\cmin}{\min}\ 
\underset{\cmax }{\max} 
\ \underset{s\in\Omega_{s},s'\in\Omega_{s'}}{\min}
\ \colblue{\underset{\bm{z}\sim \mathcal{D}_\mathcal{Z}}{\mathbb{E}}}[G_{tp}(f,a,b,\gamma,\bm{z},s,s')],
\end{aligned}
\end{equation}
The same result is hold for empirical version $\colbit{\underset{\bm{z}\sim S}{\ehat}}[G_{tp}(f,a,b,\gamma,\bm{z},s,s')]$.
\end{proof}

\section{Proof of Generalization Bound}
First we need the following lemma  to complete the proof in this subsection.
\label{section:proofs_generalization}
\begin{lemma}
\begin{equation}
\begin{aligned}
    \underset{x}{\max} \ f(x) - \underset{x'}{\max} \ g(x')\leq \underset{x,x'=x}{\max}\ f(x)-g(x)\\
    \underset{x}{\min} \ f(x) - \underset{x'}{\min} \ g(x')\leq \underset{x,x'=x}{\max}\ f(x)-g(x).
\end{aligned}
\end{equation}
\begin{proof}
Since the difference of suprema does not exceed the supremum of the
difference, we have:
\begin{equation}
    \underset{x}{\max} \ f(x) - \underset{x'}{\max} \ g(x')\leq \underset{x}{\max} \ \underset{x'}{\min} f(x)-g(x')\leq \underset{x,x'=x}{\max}\ f(x)-g(x).
\end{equation}
For $\underset{x}{\min} \ f(x) - \underset{x'}{\min} \ g(x')\leq \underset{x,x'=x}{\max}\ f(x)-g(x)$, we have:
\begin{equation}
    \begin{aligned}
&\underset{x}{\min} \ f(x) - \underset{x'}{\min} \ g(x')
\leq\underset{x}{\min}\ \underset{x'}{\max}\ f(x) - g(x')\\
&=\underset{x'}{\max}\ \underset{x}{\min}\ f(x) - g(x')
\leq \underset{x,x'=x}{\max}\ f(x)-g(x).
\end{aligned}
\end{equation}
\end{proof}
\label{lemma:4}
\end{lemma}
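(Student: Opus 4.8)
The plan is to establish each of the two inequalities separately by a direct pointwise argument, selecting an appropriate extremizer, rather than invoking any abstract sup/inf interchange. Throughout, note that $\max_{x,x'=x}[f(x)-g(x)]$ simply denotes the maximum of the single-variable map $x\mapsto f(x)-g(x)$, i.e. both arguments are tied to a common point. The whole lemma reduces to the observation that the left-hand gap can be pinned to the value of $f-g$ at one cleverly chosen point.

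For the first inequality I would let $x^\star\in\arg\max_x f(x)$ (assuming the maximum is attained; otherwise pass to a maximizing sequence and take limits). Then, adding and subtracting $g(x^\star)$,
\[
\max_x f(x) = f(x^\star) = \big(f(x^\star)-g(x^\star)\big)+g(x^\star) \le \max_x\big[f(x)-g(x)\big]+\max_{x'} g(x'),
\]
where the bound uses $f(x^\star)-g(x^\star)\le \max_x[f(x)-g(x)]$ together with $g(x^\star)\le \max_{x'} g(x')$. Rearranging gives exactly $\max_x f(x)-\max_{x'} g(x')\le \max_{x,x'=x}[f(x)-g(x)]$.

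For the second inequality I would instead pin down the \emph{minimizer of} $g$: let $\bar x\in\arg\min_{x'} g(x')$, so that $g(\bar x)=\min_{x'} g(x')$. Since $\min_x f(x)\le f(\bar x)$, it follows that
\[
\min_x f(x)-\min_{x'} g(x') \le f(\bar x)-g(\bar x) \le \max_x\big[f(x)-g(x)\big],
\]
which is the claimed bound. The common mechanism in the two cases is simply to evaluate $f-g$ at a single well-chosen point ($x^\star$ for the $\max$ case, $\bar x$ for the $\min$ case), so that the left-hand difference is dominated by the value of $f-g$ there, which is in turn at most its maximum.

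There is essentially no hard step here; the only subtlety is existence of the extremizers, which I would dispose of either by assuming compactness and continuity so that the extrema are attained, or by replacing $x^\star$ and $\bar x$ with $\varepsilon$-optimal points and letting $\varepsilon\to 0$. The write-up is therefore short, and the real value of the lemma lies in its later use for decomposing the min--max--min generalization gap (each extremal operation contributes a term bounded by a uniform deviation), not in any difficulty of its own proof.
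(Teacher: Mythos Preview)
Your proof is correct. Both inequalities follow cleanly from your choice of evaluating $f-g$ at a single extremizer ($x^\star=\arg\max f$ for the first, $\bar x=\arg\min g$ for the second), and the remark about passing to $\varepsilon$-optimal points covers the case where extrema are not attained.

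The paper takes a slightly different route. For the first inequality it rewrites $\max_x f(x)-\max_{x'}g(x')=\max_x\min_{x'}[f(x)-g(x')]$ (using $-\max g=\min(-g)$) and then bounds the inner $\min_{x'}$ by the diagonal choice $x'=x$. For the second it writes $\min_x f(x)-\min_{x'}g(x')=\min_x\max_{x'}[f(x)-g(x')]$, swaps the order to $\max_{x'}\min_x$ (valid because $f(x)-g(x')$ is separable, so both equal $\min_x f(x)-\min_{x'}g(x')$), and again restricts to the diagonal $x=x'$. So the paper argues via operator manipulations on the two-variable function $f(x)-g(x')$, whereas you argue by picking a concrete point. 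Your approach is more elementary and arguably clearer; the paper's chain of equalities/inequalities is equivalent in content but obscures the simple mechanism you identified. Either is perfectly adequate for this lemma.
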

\begin{lemma} (Talagrand’s lemma \cite{schapire2012foundations})
Let $\bm{\phi}_1,\cdots,\bm{\phi}_m$ be $l$-Lipschitz functions from $\mathbb{R}$ to $\mathbb{R}$ and $\sigma_1,\cdots,\sigma_m$ be Rademacher random variables. Then, for any hypothesis set $\mathcal{H}$ of real-valued functions, the following inequality holds:
\begin{equation}
    \left.\frac{1}{m} \underset{\boldsymbol{\sigma}}{\mathbb{E}}\left[\sup _{h \in \mathcal{H}} \sum_{i=1}^{m} \sigma_{i}\left(\Phi_{i} \circ h\right)\left(x_{i}\right)\right)\right] \leq \frac{l}{m} \underset{\boldsymbol{\sigma}}{\mathbb{E}}\left[\sup _{h \in \mathcal{H}} \sum_{i=1}^{m} \sigma_{i} h\left(x_{i}\right)\right]=l \widehat{\Re}_{S}(\mathcal{H}).
\end{equation}
In particular, if $\bm{\phi}_i=\bm{\phi}$ for all $i\in[m]$, then the following holds:
\begin{equation}
    \widehat{\mathfrak{R}}_{S}(\Phi \circ \mathcal{H}) \leq l \widehat{\mathfrak{R}}_{S}(\mathcal{H}).
\end{equation}
\label{lemma:5}
\end{lemma}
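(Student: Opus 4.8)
The plan is to prove Lem.~\ref{lemma:5} by the classical \emph{peeling} argument (induction on $m$) that removes one Rademacher variable at a time. Fix the sample $(x_1,\dots,x_m)$. It suffices to establish a single-variable contraction step and then apply it once per index. Concretely, conditioning on $\sigma_1,\dots,\sigma_{m-1}$ and writing $U(h) = \sum_{i=1}^{m-1}\sigma_i (\phi_i\circ h)(x_i)$, I would prove
\[
\mathbb{E}_{\sigma_m}\Big[\sup_{h\in\mathcal{H}} U(h) + \sigma_m (\phi_m\circ h)(x_m)\Big] \le \mathbb{E}_{\sigma_m}\Big[\sup_{h\in\mathcal{H}} U(h) + \sigma_m\, l\, h(x_m)\Big].
\]
By the tower property this pointwise-in-$(\sigma_1,\dots,\sigma_{m-1})$ inequality integrates up to the full expectation, and iterating it over all $m$ indices replaces every $\phi_i\circ h$ by $l\,h$. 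Pulling the nonnegative constant $l$ outside the supremum then yields $\tfrac{l}{m}\mathbb{E}_\sigma[\sup_h \sum_i \sigma_i h(x_i)] = l\,\widehat{\mathfrak{R}}_S(\mathcal{H})$, and the special case $\phi_i=\phi$ is an immediate specialization.

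To establish the one-variable step, I would first expand the expectation over the single $\pm1$ variable $\sigma_m$ into
\[
\tfrac12\sup_{h}\big[U(h) + (\phi_m\circ h)(x_m)\big] + \tfrac12\sup_{h'}\big[U(h') - (\phi_m\circ h')(x_m)\big].
\]
For any $\epsilon>0$, choose near-maximizers $h_1,h_2\in\mathcal{H}$ achieving each supremum up to slack $\epsilon$, so that $U(h_1)+U(h_2)+(\phi_m\circ h_1)(x_m)-(\phi_m\circ h_2)(x_m)$ lower-bounds $(1-\epsilon)$ times twice the displayed quantity. The key manipulation is to control the combined $\phi_m$-difference via the $l$-Lipschitz property: setting $s := \mathrm{sgn}\big(h_1(x_m)-h_2(x_m)\big)$ we have $(\phi_m\circ h_1)(x_m) - (\phi_m\circ h_2)(x_m) \le l\,|h_1(x_m)-h_2(x_m)| = s\,l\,(h_1(x_m)-h_2(x_m))$. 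Re-grouping into $\big[U(h_1)+s\,l\,h_1(x_m)\big] + \big[U(h_2)-s\,l\,h_2(x_m)\big]$ and bounding each bracket by its supremum over $\mathcal{H}$, then averaging over the two admissible values $s=\pm1$, reproduces exactly $2\,\mathbb{E}_{\sigma_m}[\sup_h U(h)+\sigma_m\, l\, h(x_m)]$; letting $\epsilon\to0$ gives the single-variable inequality.

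The main obstacle is the Lipschitz-plus-sign step: I must check that the nonsmooth absolute value is correctly linearized by $s$, and that after substituting $s\,l$ for the $\pm$ sign the two brackets decouple into independent suprema over $h$ and $h'$ without loss — this works precisely because $s\in\{-1,+1\}$ mirrors the two realizations of $\sigma_m$, so the symmetrized average over $s$ coincides with the Rademacher expectation. A secondary technical point, sharper here than in the standard statement, is that the $\phi_i$ are allowed to differ across $i$, so the induction must peel a \emph{specific} index rather than an exchangeable one; since each peeling step only touches $\sigma_m$ and $\phi_m$ while leaving $U(h)$ (carrying the remaining terms, whether already contracted or not) untouched, the argument goes through index by index regardless of whether the $\phi_i$ coincide. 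Finally, I would note that the near-maximizer device avoids any compactness or attainability assumption on $\mathcal{H}$.
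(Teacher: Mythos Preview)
The paper does not actually prove this lemma; it simply states it and cites \cite{schapire2012foundations} as the source. Your proposal gives the standard textbook proof (the peeling/contraction argument found, e.g., in Ledoux--Talagrand or Mohri--Rostamizadeh--Talwalkar), and it is correct as written, including the handling of index-dependent $\phi_i$ and the $\epsilon$-near-maximizer device to avoid attainability assumptions.
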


\begin{lemma}
Let $\bm{\sigma}$ be Rademacher random variables. Then, for any hypothesis set $\mathcal{F}$ of real-valued functions, the following inequality holds:
\begin{equation}
\begin{aligned}
    \underset{\bm{\sigma}}{\mathbb{E}}\left[\underset{f\in\mathcal{F}}{\sup} \left|\frac{1}{n_+}\sum_{i=1}^{n_+}\sigma_i f(\bm{x}_i)\right|\right]\leq 2\hat{\Re}_+(\mathcal{F})
\end{aligned}
\end{equation}
\begin{equation}
\begin{aligned}
    \underset{\bm{\sigma}}{\mathbb{E}}\left[\underset{f\in\mathcal{F}}{\sup}\left|\frac{1}{n_-}\sum_{j=1}^{n_-}\sigma_j f(\bm{x}'_j)\right|\right]\leq 2\hat{\Re}_-(\mathcal{F})
\end{aligned}
\end{equation}
\begin{proof}
Assuming that $0 \in \mathcal{F}$, then for any $\bm{\sigma}$ we have:
\begin{equation}
    \underset{\colorg{f\in\mathcal{F}}}{\sup} \frac{1}{n_+}\sum_{i=1}^{n_+}\sigma_i f(\bm{x}_i) \geq \frac{1}{n_+}\sum_{i=1}^{n_+}\sigma_i \cdot 0=0
\end{equation}
Similarly, for any $\bm{\sigma}$ we have:
\begin{equation}
    \underset{\colsec{f\in-\mathcal{F}}}{\sup} \frac{1}{n_+}\sum_{i=1}^{n_+}\sigma_i f(\bm{x}_i) \geq \frac{1}{n_+}\sum_{i=1}^{n_+}\sigma_i \cdot 0=0
\end{equation}
where $-\mathcal{F}=\{-f_i(\cdot)\}_{i=1}^{|\mathcal{F}|}$ and $f_i(\cdot)\in\mathcal{F}$. Then we have the following inequality:
\begin{equation}
\begin{aligned}
    \underset{\bm{\sigma}}{\mathbb{E}}\left[\underset{\colorg{f\in\mathcal{F}}}{\sup}\left|\frac{1}{n_+}\sum_{i=1}^{n_+}\sigma_i f(\bm{x}_i)\right|\right]
    &=\underset{\bm{\sigma}}{\mathbb{E}}\left[\max\left\{\underset{\colorg{f\in\mathcal{F}}}{\sup}\frac{1}{n_+}\sum_{i=1}^{n_+}\sigma_i f(\bm{x}_i), \underset{\colsec{f\in-\mathcal{F}}}{\sup}\frac{1}{n_+}\sum_{i=1}^{n_+}\sigma_i f(\bm{x}_i)\right\}\right]\\
    &\overset{(*)}{\leq} \hat{\Re}_+(\colorg{\mathcal{F}})+\hat{\Re}_+(\colsec{-\mathcal{F}})\\
    &= 2\hat{\Re}_+(\colorg{\mathcal{F}})
\end{aligned}
\end{equation}
$(*)$ is due to the fact that $\max\{a,b\}\leq a+b$ when $a\geq 0$, $b\geq 0$. The same result holds for negative instances.
\end{proof}
\label{lemma:6}
\end{lemma}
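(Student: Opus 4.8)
The plan is to exploit the elementary identity $|u| = \max\{u, -u\}$ together with the symmetry of the Rademacher distribution, and to reduce the absolute-value supremum to two ordinary (one-sided) Rademacher averages. I would prove only the positive-instance inequality in detail, since the negative-instance case is identical with $n_+, \bm{x}_i$ replaced by $n_-, \bm{x}'_j$.

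First I would assume, as is standard, that the zero function belongs to $\mathcal{F}$. This guarantees that
\begin{equation}
    \underset{f\in\mathcal{F}}{\sup} \ \frac{1}{n_+}\sum_{i=1}^{n_+}\sigma_i f(\bm{x}_i) \ \ge \ \frac{1}{n_+}\sum_{i=1}^{n_+}\sigma_i\cdot 0 \ = \ 0,
\end{equation}
and likewise that the supremum over $-\mathcal{F}=\{-f : f\in\mathcal{F}\}$ is non-negative. Next I would write $\big|\frac{1}{n_+}\sum_i \sigma_i f(\bm{x}_i)\big| = \max\big\{\frac{1}{n_+}\sum_i \sigma_i f(\bm{x}_i),\ -\frac{1}{n_+}\sum_i \sigma_i f(\bm{x}_i)\big\}$ and take the supremum over $f$, observing that pushing the supremum inside the outer $\max$ gives
\begin{equation}
    \underset{f\in\mathcal{F}}{\sup}\left|\frac{1}{n_+}\sum_{i=1}^{n_+}\sigma_i f(\bm{x}_i)\right| = \max\left\{\underset{f\in\mathcal{F}}{\sup}\frac{1}{n_+}\sum_{i=1}^{n_+}\sigma_i f(\bm{x}_i),\ \underset{f\in-\mathcal{F}}{\sup}\frac{1}{n_+}\sum_{i=1}^{n_+}\sigma_i f(\bm{x}_i)\right\},
\end{equation}
where the negated supremum is rewritten as a supremum over $-\mathcal{F}$.

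The key inequality is then $\max\{a,b\}\le a+b$, valid whenever $a,b\ge 0$, which applies precisely because the assumption $0\in\mathcal{F}$ makes both one-sided suprema non-negative. Taking expectation over $\bm{\sigma}$ and using linearity gives
\begin{equation}
    \underset{\bm{\sigma}}{\mathbb{E}}\left[\underset{f\in\mathcal{F}}{\sup}\left|\frac{1}{n_+}\sum_{i=1}^{n_+}\sigma_i f(\bm{x}_i)\right|\right] \le \hat{\Re}_+(\mathcal{F}) + \hat{\Re}_+(-\mathcal{F}).
\end{equation}
It remains to identify $\hat{\Re}_+(-\mathcal{F})$ with $\hat{\Re}_+(\mathcal{F})$. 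I would argue this by the symmetry of the Rademacher variables: since each $\sigma_i$ is symmetric, replacing $f$ by $-f$ is absorbed by the sign flip $\sigma_i \mapsto -\sigma_i$, under which the joint distribution of $\bm{\sigma}$ is invariant; hence $\hat{\Re}_+(-\mathcal{F}) = \hat{\Re}_+(\mathcal{F})$, and the bound $2\hat{\Re}_+(\mathcal{F})$ follows.

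This argument is essentially routine, so there is no serious obstacle; the only points requiring care are the two structural assumptions that are easy to overlook: the non-negativity of the one-sided suprema (which is exactly why $0\in\mathcal{F}$ is needed to apply $\max\{a,b\}\le a+b$), and the invariance of the Rademacher distribution under sign flips (which is what collapses $\hat{\Re}_+(-\mathcal{F})$ onto $\hat{\Re}_+(\mathcal{F})$). I would flag the $0\in\mathcal{F}$ assumption explicitly, since without it the max-by-sum step can fail.
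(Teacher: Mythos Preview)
Your proposal is correct and follows essentially the same route as the paper: assume $0\in\mathcal{F}$, rewrite $\sup_f|\cdot|$ as $\max$ of the one-sided suprema over $\mathcal{F}$ and $-\mathcal{F}$, use $\max\{a,b\}\le a+b$ for non-negative $a,b$, and conclude $\hat{\Re}_+(\mathcal{F})+\hat{\Re}_+(-\mathcal{F})=2\hat{\Re}_+(\mathcal{F})$. If anything, your argument is slightly more explicit in justifying $\hat{\Re}_+(-\mathcal{F})=\hat{\Re}_+(\mathcal{F})$ via the sign-symmetry of the Rademacher law, which the paper simply asserts.
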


\begin{lemma}
Let $\bm{\sigma}$ be Rademacher random variables. Then, for any hypothesis set $\mathcal{F}$ of real-valued functions, the following inequality holds:
\begin{equation}
    \underset{\bm{\sigma}}{\mathbb{E}}\left[\underset{a\in[0,1]}{\sup}\frac{1}{n_+}\sum_{i=1}^{n_+}\sigma_i 
 a^2 \right]= O\left(\frac{1}{\sqrt{n_+}}\right),
\end{equation}
\begin{equation}
    \underset{\bm{\sigma}}{\mathbb{E}}\left[\underset{b\in[0,1]}{\sup}\frac{1}{n_-}\sum_{j=1}^{n_-}\sigma_j 
 b^2 \right]= O\left(\frac{1}{\sqrt{n_-}}\right),
\end{equation}
\begin{proof}
Using the Cauchy inequality, we have:
\begin{equation}
\begin{aligned}
    \underset{\bm{\sigma}}{\mathbb{E}}\left[\underset{a\in[0,1]}{\sup}\frac{1}{n_+}\sum_{i=1}^{n_+}\sigma_i 
 a^2 \right]&\leq \left(\underset{a\in[0,1]}{\sup} |a^2|\right)\cdot \underset{\bm{\sigma}}{\mathbb{E}}\left[\left|\frac{1}{n_+}\sum_{i=1}^{n_+}\sigma_i\right|\right]\\
 &\overset{(r)}{\leq} 1 \cdot \sqrt{\underset{\bm{\sigma}}{\mathbb{E}}\left[\left(\frac{1}{n_+}\sum_{i=1}^{n_+}\sigma_i\right)^2\right]}\\
 &= \sqrt{\frac{1}{n_+^2}\underset{\bm{\sigma}}{\mathbb{E}}\left[\sum_{i=1}^{n_+}\sigma_i^2\right]}\\
 &= \sqrt{\frac{1}{n_+}}=O\left(\frac{1}{\sqrt{n_+}}\right)
\end{aligned}
\end{equation}
$(r)$ is due to the fact $\sqrt{(\cdot)}$ is concave and the Jensen's inequality. The same result holds for negative instances.
\end{proof}
\label{lemma:7}
\end{lemma}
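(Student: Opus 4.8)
The plan is to exploit the fact that the supremum in each display is taken only over the scalar $a \in [0,1]$ (respectively $b \in [0,1]$), which does not depend on the summation index $i$ (respectively $j$); hence $a^2$ factors out of the Rademacher average and the hypothesis class $\mathcal{F}$ plays no role at all. Writing $\bar{\sigma}_+ = \frac{1}{n_+}\sum_{i=1}^{n_+}\sigma_i$, the first quantity is exactly $\mathbb{E}_{\bm{\sigma}}\big[\sup_{a\in[0,1]} a^2\, \bar{\sigma}_+\big]$, so the whole task reduces to controlling the size of the signed Rademacher average $\bar{\sigma}_+$.

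First I would bound the supremum by an absolute value. Since $a^2 \in [0,1]$ for $a \in [0,1]$, we have $\sup_{a\in[0,1]} a^2\,\bar{\sigma}_+ \le (\sup_{a\in[0,1]} a^2)\,|\bar{\sigma}_+| = |\bar{\sigma}_+|$; equivalently, if $\bar{\sigma}_+ \ge 0$ the supremum equals $\bar{\sigma}_+$ (attained at $a=1$) and if $\bar{\sigma}_+ < 0$ it equals $0$ (attained at $a=0$), both of which are at most $|\bar{\sigma}_+|$. Taking expectations, it then remains only to bound $\mathbb{E}_{\bm{\sigma}}[\,|\bar{\sigma}_+|\,]$.

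Next I would pass from this first absolute moment to the second moment. Since $\sqrt{\cdot}$ is concave, Jensen's inequality gives $\mathbb{E}_{\bm{\sigma}}[\,|\bar{\sigma}_+|\,] \le \sqrt{\mathbb{E}_{\bm{\sigma}}[\,\bar{\sigma}_+^2\,]}$, and the second moment is computed directly from the orthogonality of the Rademacher variables: expanding the square and using $\mathbb{E}[\sigma_i\sigma_j] = 0$ for $i \ne j$ together with $\mathbb{E}[\sigma_i^2] = 1$ yields $\mathbb{E}_{\bm{\sigma}}[\,\bar{\sigma}_+^2\,] = \frac{1}{n_+^2}\sum_{i=1}^{n_+}\mathbb{E}[\sigma_i^2] = \frac{1}{n_+}$. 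Chaining the inequalities delivers the bound $\sqrt{1/n_+} = O(1/\sqrt{n_+})$. The negative-instance display is handled identically after replacing $n_+$ by $n_-$ and $\bm{x}_i$ by $\bm{x}'_j$, the argument being entirely symmetric.

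This argument is almost wholly routine and presents no genuine obstacle; its content is really just the classical $L^1$-to-$L^2$ bound on a mean of independent signs. The only point requiring the slightest care is the very first reduction, where one must correctly account for the sign of $\bar{\sigma}_+$ in evaluating $\sup_{a} a^2\,\bar{\sigma}_+$ — a subtlety that is cleanly dispatched by the inequality $\sup_a a^2\,\bar{\sigma}_+ \le |\bar{\sigma}_+|$, after which the Jensen and orthogonality steps go through verbatim.
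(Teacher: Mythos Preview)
Your proof is correct and follows essentially the same route as the paper: factor the scalar $a^2$ out of the Rademacher sum, bound by $|\bar{\sigma}_+|$, apply Jensen's inequality to pass from $L^1$ to $L^2$, and compute the second moment via orthogonality. If anything, your handling of the first step (splitting on the sign of $\bar{\sigma}_+$) is slightly more careful than the paper's one-line ``Cauchy inequality'' justification.
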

\subsection{OPAUC}
\textbf{Reminder of Theorem \ref{theorem:4}.}
\emph{For any $\delta>0$, with probability at least $1-\delta$ over the draw of an i.i.d. sample set $S$ of size $n$, for all $f\in\mathcal{F}$ we have:}
\begin{equation*}
\begin{aligned}
\underset{(a,b)\in[0,1]^2}{\min}\ 
\underset{\cmax }{\max} \underset{s'\in\Omega_{s'}}{\min}
\ \colblue{\underset{\bm{z}\sim \mathcal{D}_\mathcal{Z}}{\mathbb{E}}}[G_{op}(f,a,b,\gamma,\bm{z},s')] \le& \underset{\cmins}{\min}\underset{\cmax }{\max}  \underset{s'\in\Omega_{s'}}{\min}
\ \colbit{\underset{\bm{z}\sim S}{\ehat}}[G_{op}(f,a,b,\gamma,\bm{z},s')] \\ 
&~~+ O(\hat{\Re}_{+}(\mathcal{F}) + \hat{\Re}_{-}(\mathcal{F})) + O( \np^{-1/2} + \beta^{-1}\nn^{-1/2}).
\end{aligned}
\end{equation*}

\begin{proof}
According to the Lem.\ref{lemma:4}, we have:
\begin{equation}
    \begin{aligned}
&\quad \ \underset{f\in\mathcal{F}}{\sup} \left(\underset{(a,b)\in[0,1]^2}
{\min}\ \underset{\gamma\in\Omega_{\gamma}}{\max} \ \underset{s'\in\Omega_{s'}}{\min}
\underset{\bm{z}\sim \mathcal{D}_\mathcal{Z}}{\mathbb{E}}[G_{op}(f,a,b,\gamma,\bm{z},s')]\right.
\\
& \left. \qquad \qquad \qquad \qquad \qquad \qquad \qquad \qquad -\underset{(a,b)\in[0,1]^2}{\min}\ 
\underset{\gamma\in\Omega_{\gamma}}{\max} \  \underset{s'\in\Omega_{s'}}{\min}
\underset{\bm{z}\sim S}{\hat{\mathbb{E}}}[G_{op}(f,a,b,\gamma,\bm{z},s')]\right)\\
&\leq\underset{f\in\mathcal{F},(a,b)\in[0,1]^2}{\sup} 
\left(\underset{\gamma\in\Omega_{\gamma}}{\max} \underset{s'\in\Omega_{s'}}{\min} \underset{\bm{z}\sim 
\mathcal{D}_\mathcal{Z}}{\mathbb{E}}[G_{op}(f,a,b,\gamma,\bm{z},s')]-\underset{\gamma\in\Omega_{\gamma}}{\max} \underset{s'\in\Omega_{s'}}{\min}
\underset{\bm{z}\sim S}{\hat{\mathbb{E}}}[G_{op}(f,a,b,\gamma,\bm{z},s')]\right)\\
&\leq\underset{f\in\mathcal{F},(a,b)\in[0,1]^2,
\gamma\in\Omega_{\gamma}}{\sup}
\left(\underset{s'\in\Omega_{s'}}{\min}\underset{\bm{z}\sim \mathcal{D}_\mathcal{Z}}{\mathbb{E}}[G_{op}(f,a,b,\gamma,\bm{z},s')]-\underset{s'\in\Omega_{s'}}{\min}
\underset{\bm{z}\sim S}{\hat{\mathbb{E}}}[G_{op}(f,a,b,\gamma,\bm{z},s')]\right)\\
&\leq\underset{f\in\mathcal{F},(a,b)\in[0,1]^2,s'\in\Omega_{s'}
\gamma\in\Omega_{\gamma}}{\sup}
\left(\underset{\bm{z}\sim \mathcal{D}_\mathcal{Z}}{\mathbb{E}}[G_{op}(f,a,b,\gamma,\bm{z},s')]-
\underset{\bm{z}\sim S}{\hat{\mathbb{E}}}[G_{op}(f,a,b,\gamma,\bm{z},s')]\right)\\
&\leq\underset{f\in\mathcal{F},a\in[0,1],
\gamma\in\Omega_{\gamma}}{\sup} \left(\underset{\bm{x}\sim\mathcal{D}_\mathcal{P}}
{\mathbb{E}}P(f,a,\gamma,\bm{x}) - \underset{\bm{x}_i\sim \mathcal{P}}
{\hat{\mathbb{E}}}P(f,a,\gamma,\bm{x}_i)\right)\\
&+\underset{f\in\mathcal{F},b\in[0,1],s'\in\Omega_{s'},
\gamma\in\Omega_{\gamma}}{\sup} \left(\underset{\bm{x}'\sim\mathcal{D}_\mathcal{N}}
{\mathbb{E}}N(f,b,\gamma,\bm{x}',s') - \underset{\bm{x}'_j\sim \mathcal{N}}
{\hat{\mathbb{E}}}N(f,b,\gamma,\bm{x}'_j,s')\right).
\end{aligned}
\end{equation}
where $P(f,a,\gamma,\bm{x})=(f(\bm{x})-a)^2-2(1+\gamma)f(\bm{x})$ and $N(f,b,\gamma,\bm{x}',s')=(\beta s' + \left[(f(\bm{x}')-b)^2+2(1+\gamma) f(\bm{x}')-s'\right]_+)/\beta$. According to the Thm 3.3 in \cite{schapire2012foundations}, with probability at least $1-\delta  (\delta>0)$ we have:

\begin{equation}
\begin{aligned}
&\underset{f\in\mathcal{F},a\in[0,1],
\gamma\in\Omega_{\gamma}}{\sup} \left(\underset{\bm{x}\sim\mathcal{D}_\mathcal{P}}
{\mathbb{E}}P(f,a,\gamma,\bm{x}) - \underset{\bm{x}_i\sim \mathcal{P}}
{\hat{\mathbb{E}}}P(f,a,\gamma,\bm{x}_i)\right)\\
&+\underset{f\in\mathcal{F},b\in[0,1],s'\in\Omega_{s'},
\gamma\in\Omega_{\gamma}}{\sup} \left(\underset{\bm{x}'\sim\mathcal{D}_\mathcal{N}}
{\mathbb{E}}N(f,b,\gamma,\bm{x}',s') - \underset{\bm{x}'_j\sim \mathcal{N}}
{\hat{\mathbb{E}}}N(f,b,\gamma,\bm{x}'_j,s')\right)\\
&\leq 2\underbrace{\underset{\bm{\sigma}}{\mathbb{E}}\left[\underset{f\in\mathcal{F},a\in[0,1],
\gamma\in\Omega_{\gamma}}{\sup}\frac{1}{n_+}\sum_{i=1}^{n_+}\sigma_i 
P(f,a,\gamma, \bm{x}_i)\right]}_{(1)}+12\sqrt{\frac{\log\frac{4}{\delta}}{2n_+}}\\
&+2\underbrace{\underset{\bm{\sigma}}{\mathbb{E}}
\left[\underset{f\in\mathcal{F},b\in[0,1],s'\in\Omega_{s'},
\gamma\in\Omega_{\gamma}}{\sup}\frac{1}{n_-}\sum_{j=1}^{n_-}\sigma_j 
N(f,b,\gamma, \bm{x}'_j,s')\right]}_{(2)}
+\frac{15}{\beta} \sqrt{\frac{\log\frac{4}{\delta}}
{2n_-}}.
\end{aligned}
\end{equation}

For term $(1)$, we have:
\begin{equation}
\begin{aligned}
&\underset{\bm{\sigma}}{\mathbb{E}}\left[\underset{f\in\mathcal{F},a\in[0,1],
\gamma\in\Omega_{\gamma}}{\sup}\frac{1}{n_+}\sum_{i=1}^{n_+}\sigma_i 
P(f,a,\gamma, \bm{x}_i)\right]\\
&=\underset{\bm{\sigma}}{\mathbb{E}}\left[\underset{f\in\mathcal{F},a\in[0,1],
\gamma\in\Omega_{\gamma}}{\sup}\frac{1}{n_+}\sum_{i=1}^{n_+}\sigma_i 
\left(f^2(\bm{x}_i)-2(1+\gamma+a)f(\bm{x}_i)+a^2\right)\right]\\
&\overset{(s)}{\leq} \underbrace{\underset{\bm{\sigma}}{\mathbb{E}}\left[\underset{f\in\mathcal{F}}{\sup}\frac{1}{n_+}\sum_{i=1}^{n_+}\sigma_i 
f^2(\bm{x}_i)\right]}_{(a)} + \underbrace{\underset{\bm{\sigma}}{\mathbb{E}}\left[\underset{f\in\mathcal{F}}{\sup}\frac{1}{n_+}\sum_{i=1}^{n_+}\sigma_i 
\left(-2f(\bm{x}_i)\right)\right]}_{(b)} + \underbrace{\underset{\bm{\sigma}}{\mathbb{E}}\left[\underset{a\in[0,1]}{\sup}\frac{1}{n_+}\sum_{i=1}^{n_+}\sigma_i 
\left( a^2 \right)\right]}_{(c)}\\
&+ \underbrace{\underset{\bm{\sigma}}{\mathbb{E}}\left[\underset{f\in\mathcal{F},
\gamma\in\Omega_{\gamma}}{\sup}\frac{1}{n_+}\sum_{i=1}^{n_+}\sigma_i 
\left(-2\gamma f(\bm{x}_i)\right)\right]}_{(d)} + \underbrace{\underset{\bm{\sigma}}{\mathbb{E}}\left[\underset{f\in\mathcal{F},a\in [0, 1]}{\sup}\frac{1}{n_+}\sum_{i=1}^{n_+}\sigma_i 
\left(-2af(\bm{x}_i)\right)\right]}_{(e)}. \\
\end{aligned}
\end{equation}
$(s)$ is due to the fact that $\sup_{a,b} a+b\leq \sup a+ \sup b$. Assuming that $f(\bm{x})\in[0,1]$, $0\in\mathcal{F}$. For term (a), according to the Lem.\ref{lemma:5} and the fact that $x^2$ is $2$-Lipschitz continuous within $[0, 1]$, we get:
\begin{equation}
    \underset{\bm{\sigma}}{\mathbb{E}}\left[\underset{f\in\mathcal{F}}{\sup}\frac{1}{n_+}\sum_{i=1}^{n_+}\sigma_i 
     f^2(\bm{x}_i)\right]\leq 2\underset{\bm{\sigma}}{\mathbb{E}}\left[\underset{f\in\mathcal{F}}{\sup}\frac{1}{n_+}\sum_{i=1}^{n_+}\sigma_i 
    f(\bm{x}_i)\right]=2\hat{\Re}_+(\mathcal{F}).
\end{equation}
Using the fact that $\sigma_i$ and $-\sigma_i$ are distributed in the same way, we can write the term $(b)$ as:
\begin{equation}
    \underset{\bm{\sigma}}{\mathbb{E}}\left[\underset{f\in\mathcal{F}}{\sup}\frac{1}{n_+}\sum_{i=1}^{n_+}\sigma_i 
\left(-2f(\bm{x}_i)\right)\right]=2\underset{\bm{\sigma}}{\mathbb{E}}\left[\underset{f\in\mathcal{F}}{\sup}\frac{1}{n_+}\sum_{i=1}^{n_+}\sigma_i 
    f(\bm{x}_i)\right]=2\hat{\Re}_+(\mathcal{F}).
\end{equation}
For term $(c)$, according to the Lem.\ref{lemma:7}, we have:
\begin{equation}
    \underset{\bm{\sigma}}{\mathbb{E}}\left[\underset{a\in[0,1]}{\sup}\frac{1}{n_+}\sum_{i=1}^{n_+}\sigma_i 
\left( a^2 \right)\right]= O\left(\frac{1}{\sqrt{n_+}}\right).
\end{equation}
For term $(d)$, we have:
\begin{equation}
\begin{aligned}
    \underset{\bm{\sigma}}{\mathbb{E}}\left[\underset{f\in\mathcal{F},
    \gamma\in\Omega_{\gamma}}{\sup}\frac{1}{n_+}\sum_{i=1}^{n_+}\sigma_i 
    \left(-2\gamma f(\bm{x}_i)\right)\right]&\leq \underset{\bm{\sigma}}{\mathbb{E}}\left[\underset{f\in\mathcal{F},
    \gamma\in\Omega_{\gamma}}{\sup}\left|\frac{1}{n_+}\sum_{i=1}^{n_+}\sigma_i 
    \left(-2\gamma f(\bm{x}_i)\right)\right|\right]\\
    &\leq \underset{\bm{\sigma}}{\mathbb{E}}\left[\underset{f\in\mathcal{F},\gamma\in\Omega_{\gamma}}{\sup}|-2\gamma|\cdot \left|\frac{1}{n_+}\sum_{i=1}^{n_+}\sigma_i f(\bm{x}_i)\right|\right]\\
    &\leq 2 \underset{\bm{\sigma}}{\mathbb{E}}\left[\underset{f\in\mathcal{F}}{\sup} \left|\frac{1}{n_+}\sum_{i=1}^{n_+}\sigma_i f(\bm{x}_i)\right|\right]\\
    &\overset{(*)}{\leq} 4\hat{\Re}_+(\mathcal{F}),
\end{aligned}
\end{equation}
where $(*)$ follows from the Lem.\ref{lemma:6}. Similarly, for term $(e)$, we have:
\begin{equation}
\begin{aligned}
    \underset{\bm{\sigma}}{\mathbb{E}}\left[\underset{f\in\mathcal{F},
    a\in[0,1]}{\sup}\frac{1}{n_+}\sum_{i=1}^{n_+}\sigma_i 
    \left(-2a f(\bm{x}_i)\right)\right]\leq 4\hat{\Re}_+(\mathcal{F})
\end{aligned}
\end{equation}
Combining terms (a), (b), (c), (d), (e), then we get:
\begin{equation}
    \underset{\bm{\sigma}}{\mathbb{E}}\left[\underset{f\in\mathcal{F},a\in[0,1],
\gamma\in\Omega_{\gamma}}{\sup}\frac{1}{n_+}\sum_{i=1}^{n_+}\sigma_i 
P(f,a,\gamma, \bm{x}_i)\right]\leq 12\hat{\Re}_+(\mathcal{F})+O\left(\frac{1}{\sqrt{n_+}}\right)
\end{equation}
For term $(2)$, we have:
\begin{equation}
\begin{aligned}
&\underset{\bm{\sigma}}{\mathbb{E}}\left[\underset{f\in\mathcal{F},b\in[0,1],s'\in\Omega_{s'},
\gamma\in\Omega_{\gamma}}{\sup}\frac{1}{n_-}\sum_{j=1}^{n_-}\sigma_j 
N(f,a,\gamma, \bm{x}'_j, s')\right]\\
&=\underset{\bm{\sigma}}{\mathbb{E}}\left[\underset{f\in\mathcal{F},b\in[0,1],s'\in\Omega_{s'},
\gamma\in\Omega_{\gamma}}{\sup}\frac{1}{n_-}\sum_{j=1}^{n_-}\sigma_j \left(
\left[f^2(\bm{x}'_j)+2(1+\gamma-b)f(\bm{x}'_j)+b^2-s'\right]_+ +\beta s'\right)\right]\\
&\overset{(o')}{\leq} \underset{\bm{\sigma}}{\mathbb{E}}\left[\underset{f\in\mathcal{F},b\in[0,1],s'\in\Omega_{s'},
\gamma\in\Omega_{\gamma}}{\sup}\frac{1}{n_-}\sum_{j=1}^{n_-}\sigma_j 
\left(\left[f^2(\bm{x}'_j)+2(1+\gamma-b)f(\bm{x}'_j)+b^2-s'\right]_+\right)\right]+O\left(\frac{1}{\sqrt{n_-}}\right)\\
&\overset{(l')}{\leq} \underset{\bm{\sigma}}{\mathbb{E}}\left[\underset{f\in\mathcal{F},b\in[0,1],s'\in\Omega_{s'},
\gamma\in\Omega_{\gamma}}{\sup}\frac{1}{n_-}\sum_{j=1}^{n_-}\sigma_j 
\left(f^2(\bm{x}'_j)+2(1+\gamma-b)f(\bm{x}'_j)+b^2-s'\right)\right]+O\left(\frac{1}{\sqrt{n_-}}\right)\\
&\overset{(s)}{\leq} \underbrace{\underset{\bm{\sigma}}{\mathbb{E}}\left[\underset{f\in\mathcal{F}}{\sup}\frac{1}{n_-}\sum_{j=1}^{n_-}\sigma_j 
 f^2(\bm{x}'_j)\right]}_{(a')} + \underbrace{\underset{\bm{\sigma}}{\mathbb{E}}\left[\underset{f\in\mathcal{F}}{\sup}\frac{1}{n_-}\sum_{j=1}^{n_-}\sigma_j 
2f(\bm{x}'_j)\right]}_{(b')}+\underbrace{\underset{\bm{\sigma}}{\mathbb{E}}\left[\underset{s'\in\Omega_{s'},b\in[0,1]}{\sup}\frac{1}{n_-}\sum_{j=1}^{n_-}\sigma_j (b^2-s')\right]}_{(c')} \\
&+\underbrace{\underset{\bm{\sigma}}{\mathbb{E}}\left[\underset{f\in\mathcal{F},\gamma\in\Omega_{\gamma}}{\sup}\frac{1}{n_-}\sum_{j=1}^{n_-}\sigma_j
2\gamma f(\bm{x}'_j)\right]}_{(d')}+\underbrace{\underset{\bm{\sigma}}{\mathbb{E}}\left[\underset{f\in\mathcal{F},b\in[0,1]}{\sup}\frac{1}{n_-}\sum_{j=1}^{n_-}\sigma_j
\left(-2b f(\bm{x}'_j)\right)\right]}_{(e')} +O\left(\frac{1}{\sqrt{n_-}}\right).
\end{aligned}
\end{equation}
$(o')$ follows from the Lem.\ref{lemma:7} and the fact that $\sup_{a,b}\leq \sup a +\sup b$.
$(l')$ follows from the Lem.\ref{lemma:5} and the fact that $[\cdot]_+$ is $1$-Lipschitz continuous. For terms $(a')$, $(b')$, $(c')$, $(d')$, $(e')$, we have the similar results as terms $(a)$, $(b)$, $(c)$, $(d)$, $(e)$. So we can get:
\begin{equation}
    \underset{\bm{\sigma}}{\mathbb{E}}\left[\underset{f\in\mathcal{F},b\in[0,1],s'\in\Omega_{s'},
\gamma\in\Omega_{\gamma}}{\sup}\frac{1}{n_-}\sum_{j=1}^{n_-}\sigma_j 
N(f,b,\gamma, \bm{x}'_j,s')\right]\leq 12\hat{\Re}_-(\mathcal{F})+O\left(\frac{1}{\sqrt{n_-}}\right)
\end{equation}
and 

\begin{equation}
    \begin{aligned}
&\underset{f\in\mathcal{F},a\in[0,1],
\gamma\in\Omega_{\gamma}}{\sup} \left(\underset{\bm{x}\sim\mathcal{D}_\mathcal{P}}
{\mathbb{E}}P(f,a,\gamma,\bm{x}) - \underset{\bm{x}_i\sim \mathcal{P}}
{\hat{\mathbb{E}}}P(f,a,\gamma,\bm{x}_i)\right)\\
&+\underset{f\in\mathcal{F},b\in[0,1],s'\in\Omega_{s'}
\gamma\in\Omega_{\gamma}}{\sup} \left(\underset{\bm{x}'\sim\mathcal{D}_\mathcal{N}}
{\mathbb{E}}N(f,b,\gamma,\bm{x}',s') - \underset{\bm{x}'_j\sim \mathcal{N}}
{\hat{\mathbb{E}}}N(f,b,\gamma,\bm{x}'_j,s')\right)\\
&\leq 2\left(12\hat{\Re}_+(\mathcal{F})+O\left(\frac{1}{\sqrt{n_+}}\right)+12\hat{\Re}_-(\mathcal{F})+O\left(\frac{1}{\sqrt{n_-}}\right)\right)+12\sqrt{\frac{\log\frac{4}{\delta}}{2n_+}}+\frac{15}{\beta}\sqrt{\frac{\log\frac{4}{\delta}}
{2n_-}}\\
&= O(\hat{\Re}_{+}(\mathcal{F}) + \hat{\Re}_{-}(\mathcal{F})) + O( \np^{-1/2} + \beta^{-1}\nn^{-1/2})
\end{aligned}
\end{equation}

For any $\delta > 0$, with probability at least $1-\delta$ over the draw of an i.i.d. sample $S$ of positive instances size $n_+$ (negative $n_-$ resp.), each of the following holds for all $f\in\mathcal{F}$:

\begin{equation*}
    \begin{aligned}
&\underset{\cmins}{\min}\ 
\underset{\cmax }{\max}  \underset{s'\in\Omega_{s'}}{\min}
\ \colblue{\underset{\bm{z}\sim \mathcal{D}_\mathcal{Z}}{\mathbb{E}}}[G_{op}(f,a,b,\gamma,\bm{z},s')] \\
&\leq\underset{\cmins}{\min}\underset{\cmax }{\max}  \underset{s'\in\Omega_{s'}}{\min}
\ \colbit{\underset{\bm{z}\sim S}{\ehat}}[G_{op}(f,a,b,\gamma,\bm{z},s')] 
+ O(\hat{\Re}_{+}(\mathcal{F}) + \beta^{-1}\hat{\Re}_{-}(\mathcal{F})) 
\\&+ O( \np^{-1/2} + \beta^{-1}\nn^{-1/2}).
\end{aligned}
\end{equation*}

\end{proof}

\subsection{TPAUC}
\begin{theorem}
For any $\delta>0$, with probability at least $1-\delta$ over the draw of an i.i.d. sample set $S$ of size $n$, for all $f\in\mathcal{F}$ we have:
\begin{equation*}
    \begin{aligned}
&\underset{\cmins}{\min}\ 
\underset{\cmax }{\max}  \underset{s\in\Omega_{s},s'\in\Omega_{s'}}{\min}
\ \colblue{\underset{\bm{z}\sim \mathcal{D}_\mathcal{Z}}{\mathbb{E}}}[G_{tp}(f,a,b,\gamma,\bm{z},s,s')] \le \\
& \underset{\cmins}{\min}\underset{\cmax }{\max}  \underset{s\in\Omega_{s},s'\in\Omega_{s'}}{\min}
\ \colbit{\underset{\bm{z}\sim S}{\ehat}}[G_{tp}(f,a,b,\gamma,\bm{z},s,s')] \\ 
&~~+ O(\hat{\Re}_{+}(\mathcal{F}) + \hat{\Re}_{-}(\mathcal{F})) + O( \alpha^{-1}\np^{-1/2} + \beta^{-1} \nn^{-1/2}).
\end{aligned}
\end{equation*}
\begin{proof}
According to Lem.\ref{lemma:4}, we have:
\begin{equation}
    \begin{aligned}
&\quad \ \underset{f\in\mathcal{F}}{\sup} \left(\underset{(a,b)\in[0,1]^2}
{\min}\ \underset{\gamma\in\Omega_{\gamma}}{\max} \ \underset{s\in\Omega_{s},s'\in\Omega_{s'}}{\min}
\underset{\bm{z}\sim \mathcal{D}_\mathcal{Z}}{\mathbb{E}}[G_{tp}(f,a,b,\gamma,\bm{z},s,s')]\right.
\\
&\spaces \left.-\underset{(a,b)\in[0,1]^2}{\min}\ 
\underset{\gamma\in\Omega_{\gamma}}{\max} \ \underset{s\in\Omega_{s},s'\in\Omega_{s'}}{\min}
\underset{\bm{z}\sim S}{\hat{\mathbb{E}}}[G_{tp}(f,a,b,\gamma,\bm{z},s,s')]\right)\\
&\leq\underset{f\in\mathcal{F},(a,b)\in[0,1]^2}{\sup} 
\left(\underset{\gamma\in\Omega_{\gamma}}{\max}\underset{s\in\Omega_{s},s'\in\Omega_{s'}}{\min}\underset{\bm{z}\sim \mathcal{D}_\mathcal{Z}}{\mathbb{E}}[G_{tp}(f,a,b,\gamma,\bm{z},s,s')]\right.\\
&\spaces \qquad \left.-\underset{\gamma\in\Omega_{\gamma}}{\max}\underset{s\in\Omega_{s},s'\in\Omega_{s'}}{\min}\underset{\bm{z}\sim S}{\hat{\mathbb{E}}}[G_{tp}(f,a,b,\gamma,\bm{z},s,s')]\right)\\
&\leq\underset{f\in\mathcal{F},(a,b)\in[0,1]^2,
\gamma\in\Omega_{\gamma}}{\sup} 
\left(\underset{s\in\Omega_{s},s'\in\Omega_{s'}}{\min}\underset{\bm{z}\sim \mathcal{D}_\mathcal{Z}}{\mathbb{E}}[G_{tp}(f,a,b,\gamma,\bm{z},s,s')]\right.\\
&\spaces \qquad \qquad  \left.-\underset{s\in\Omega_{s},s'\in\Omega_{s'}}{\min} \underset{\bm{z}\sim S}{\hat{\mathbb{E}}}[G_{tp}(f,a,b,\gamma,\bm{z},s,s')]\right)\\
&\leq\underset{f\in\mathcal{F},(a,b)\in[0,1]^2,s\in\Omega_{s},s'\in\Omega_{s'}
\gamma\in\Omega_{\gamma}}{\sup} 
\left(\underset{\bm{z}\sim \mathcal{D}_\mathcal{Z}}{\mathbb{E}}[G_{tp}(f,a,b,\gamma,\bm{z},s,s')]\right.\\
& \spaces \qquad \qquad \left.-\underset{\bm{z}\sim S}{\hat{\mathbb{E}}}[G_{tp}(f,a,b,\gamma,\bm{z},s,s')]\right)\\
&\leq\underset{f\in\mathcal{F},a\in[0,1], s\in\Omega_{s}
\gamma\in\Omega_{\gamma}}{\sup} \left(\underset{\bm{x}\sim\mathcal{D}_\mathcal{P}}
{\mathbb{E}}P(f,a,\gamma,\bm{x},s) - \underset{\bm{x}_i\sim \mathcal{P}}
{\hat{\mathbb{E}}}P(f,a,\gamma,\bm{x}_i,s)\right)\\
&+\underset{f\in\mathcal{F},b\in[0,1],s'\in\Omega_{s'},
\gamma\in\Omega_{\gamma}}{\sup}  \left(\underset{\bm{x}'\sim\mathcal{D}_\mathcal{N}}
{\mathbb{E}}N(f,b,\gamma,\bm{x}',s') - \underset{\bm{x}'_j\sim \mathcal{N}}
{\hat{\mathbb{E}}}N(f,b,\gamma,\bm{x}'_j,s')\right).
\end{aligned}
\end{equation}
where $P(f,a,\gamma,\bm{x},s)=(\alpha s + \left[(f(\bm{x})-a)^2- 
2(1+\gamma) f(\bm{x})-s\right]_+)/\alpha$ and $N(f,b,\gamma,\bm{x}',s')=(\beta s' + \left[(f(\bm{x}')-b)^2+2(1+\gamma) f(\bm{x}')-s'\right]_+)/\beta$. According to the Thm 3.3 in \cite{schapire2012foundations}, with probability at least $1-\delta (\delta>0)$ we have:

\begin{equation}
\begin{aligned}
&\underset{f\in\mathcal{F},a\in[0,1],s\in\Omega_{s}
\gamma\in\Omega_{\gamma}}{\sup} \left(\underset{\bm{x}\sim\mathcal{D}_\mathcal{P}}
{\mathbb{E}}P(f,a,\gamma,\bm{x},s) - \underset{\bm{x}_i\sim \mathcal{P}}
{\hat{\mathbb{E}}}P(f,a,\gamma,\bm{x}_i,s)\right)\\
&+\underset{f\in\mathcal{F},b\in[0,1],s'\in\Omega_{s'},
\gamma\in\Omega_{\gamma}}{\sup} \left(\underset{\bm{x}'\sim\mathcal{D}_\mathcal{N}}
{\mathbb{E}}N(f,b,\gamma,\bm{x}',s') - \underset{\bm{x}'_j\sim \mathcal{N}}
{\hat{\mathbb{E}}}N(f,b,\gamma,\bm{x}'_j,s')\right)\\
&\leq 2\underbrace{\underset{\bm{\sigma}}{\mathbb{E}}\left[\underset{f\in\mathcal{F},a\in[0,1],
s\in\Omega_{s},\gamma\in\Omega_{\gamma}}{\sup}
\frac{1}{n_+}\sum_{i=1}^{n_+}\sigma_i 
P(f,a,\gamma, \bm{x}_i,s)\right]}_{(3)}+\frac{12}{\alpha}\sqrt{\frac{\log\frac{4}{\delta}}{2n_+}}\\
&+2\underbrace{\underset{\bm{\sigma}}{\mathbb{E}}
\left[\underset{f\in\mathcal{F},b\in[0,1],s'\in\Omega_{s'},
\gamma\in\Omega_{\gamma}}{\sup}\frac{1}{n_-}\sum_{j=1}^{n_-}\sigma_j
N(f,b,\gamma, \bm{x}'_j,s')\right]}_{(4)}+
\frac{15}{\beta}\sqrt{\frac{\log\frac{4}{\delta}}{2n_-}}.
\end{aligned}
\end{equation}

For term $(3)$, we have:
\begin{equation}
\begin{aligned}
&\underset{\bm{\sigma}}{\mathbb{E}}\left[\underset{f\in\mathcal{F},a\in[0,1],s\in\Omega_{s},
\gamma\in\Omega_{\gamma}}{\sup}\frac{1}{n_+}\sum_{i=1}^{n_+}\sigma_i 
P(f,a,\gamma, \bm{x}_i, s)\right]\\
&=\underset{\bm{\sigma}}{\mathbb{E}}\left[\underset{f\in\mathcal{F},a\in[0,1],s\in\Omega_{s},
\gamma\in\Omega_{\gamma}}{\sup}\frac{1}{n_+}\sum_{i=1}^{n_+}\sigma_i 
\left(\left[f^2(\bm{x}_i)-2(1+\gamma+a)f(\bm{x}_i)+a^2-s\right]_++\alpha s\right)\right]\\
&\overset{(o^*)}{\leq} \underset{\bm{\sigma}}{\mathbb{E}}\left[\underset{f\in\mathcal{F},a\in[0,1],s\in\Omega_{s},
\gamma\in\Omega_{\gamma}}{\sup}\frac{1}{n_+}\sum_{i=1}^{n_+}\sigma_i 
\left(\left[f^2(\bm{x}_i)-2(1+\gamma+a)f(\bm{x}_i)+a^2- s\right]_+\right)\right]+O\left(\frac{1}{\sqrt{n_+}}\right)\\
&\overset{(l^*)}{\leq} \underset{\bm{\sigma}}{\mathbb{E}}\left[\underset{f\in\mathcal{F},a\in[0,1],s\in\Omega_{s},
\gamma\in\Omega_{\gamma}}{\sup}\frac{1}{n_+}\sum_{i=1}^{n_+}\sigma_i 
\left(f^2(\bm{x}_i)-2(1+\gamma+a)f(\bm{x}_i)+a^2- s\right)\right]+O\left(\frac{1}{\sqrt{n_+}}\right)\\
&\overset{(s)}{\leq} \underbrace{\underset{\bm{\sigma}}{\mathbb{E}}\left[\underset{f\in\mathcal{F}}{\sup}\frac{1}{n_+}\sum_{i=1}^{n_+}\sigma_i 
 f^2(\bm{x}_i)\right]}_{(a^*)} + \underbrace{\underset{\bm{\sigma}}{\mathbb{E}}\left[\underset{f\in\mathcal{F}}{\sup}\frac{1}{n_+}\sum_{i=1}^{n_+}\sigma_i
(-2f(\bm{x}_i))\right]}_{(b^*)}+\underbrace{\underset{\bm{\sigma}}{\mathbb{E}}\left[\underset{a\in[0,1],s\in\Omega_{s}}{\sup}\frac{1}{n_+}\sum_{i=1}^{n_+}\sigma_i (a^2- s)\right]}_{(c^*)} \\
&+\underbrace{\underset{\bm{\sigma}}{\mathbb{E}}\left[\underset{f\in\mathcal{F},\gamma\in\Omega_{\gamma}}{\sup}\frac{1}{n_+}\sum_{i=1}^{n_+}\sigma_i
(-2\gamma f(\bm{x}_i))\right]}_{(d^*)}+\underbrace{\underset{\bm{\sigma}}{\mathbb{E}}\left[\underset{f\in\mathcal{F},a\in[0,1]}{\sup}\frac{1}{n_+}\sum_{i=1}^{n_+}\sigma_i
\left(-2a f(\bm{x}_i)\right)\right]}_{(e^*)} +O\left(\frac{1}{\sqrt{n_+}}\right).
\end{aligned}
\end{equation}
$(o^*)$ is similar to $(o')$. $(l^*)$ follows from the Lem.\ref{lemma:5} and the fact that $[\cdot]_+$ is $1$-Lipschitz continuous. For terms $(a^*)$, $(b^*)$, $(c^*)$, $(d^*)$, $(e^*)$, we have the similar results as terms $(a')$, $(b')$, $(c')$, $(d')$, $(e')$. So we can get:
\begin{equation}
    \underset{\bm{\sigma}}{\mathbb{E}}\left[\underset{f\in\mathcal{F},a\in[0,1],s\in\Omega_{s},
\gamma\in\Omega_{\gamma}}{\sup}\frac{1}{n_+}\sum_{i=1}^{n_+}\sigma_i
P(f,a,\gamma, \bm{x}_i,s)\right]\leq 12\hat{\Re}_+(\mathcal{F})+O\left(\frac{1}{\sqrt{n_+}}\right)
\end{equation}
and

\begin{equation}
\begin{aligned}
&\underset{f\in\mathcal{F},a\in[0,1],s\in\Omega_{s}
\gamma\in\Omega_{\gamma}}{\sup} \left(\underset{\bm{x}\sim\mathcal{D}_\mathcal{P}}
{\mathbb{E}}P(f,a,\gamma,\bm{x},s) - \underset{\bm{x}_i\sim \mathcal{P}}
{\hat{\mathbb{E}}}P(f,a,\gamma,\bm{x}_i,s)\right)\\
&+\underset{f\in\mathcal{F},b\in[0,1],s'\in\Omega_{s'},
\gamma\in\Omega_{\gamma}}{\sup} \left(\underset{\bm{x}'\sim\mathcal{D}_\mathcal{N}}
{\mathbb{E}}N(f,b,\gamma,\bm{x}',s') - \underset{\bm{x}'_j\sim \mathcal{N}}
{\hat{\mathbb{E}}}N(f,b,\gamma,\bm{x}'_j,s')\right)\\
&\leq 2\left(12\hat{\Re}_+(\mathcal{F})+O\left(\frac{1}{\sqrt{n_+}}\right)+12\hat{\Re}_-(\mathcal{F})+O\left(\frac{1}{\sqrt{n_-}}\right)\right)+\frac{12}{\alpha}\sqrt{\frac{\log\frac{4}{\delta}}{2n_+}}+
\frac{15}{\beta}\sqrt{\frac{\log\frac{4}{\delta}}{2n_-}}\\
&= O(\hat{\Re}_{+}(\mathcal{F}) + \hat{\Re}_{-}(\mathcal{F})) + O( \alpha^{-1}\np^{-1/2} + \beta^{-1}\nn^{-1/2}).
\end{aligned}
\end{equation}

For term $(4)$, the same result holds as term $(2)$. For any $\delta > 0$, with probability at least $1-\delta$ over the draw of an i.i.d. sample $S$ of positive instances size $n_+$ (negative $n_-$ resp.), each of the following holds for all $f\in\mathcal{F}$:

\begin{equation*}
    \begin{aligned}
&\underset{\cmins}{\min}\ 
\underset{\cmax }{\max}  \underset{s\in\Omega_{s},s'\in\Omega_{s'}}{\min}
\ \colblue{\underset{\bm{z}\sim \mathcal{D}_\mathcal{Z}}{\mathbb{E}}}[G_{tp}(f,a,b,\gamma,\bm{z},s,s')] \\
&\leq\underset{\cmins}{\min}\underset{\cmax }{\max}  \underset{s\in\Omega_{s},s'\in\Omega_{s'}}{\min}
\ \colbit{\underset{\bm{z}\sim S}{\ehat}}[G_{tp}(f,a,b,\gamma,\bm{z},s,s')] 
+ O(\hat{\Re}_{+}(\mathcal{F}) + \hat{\Re}_{-}(\mathcal{F})) \\
&+ O( \alpha^{-1}\np^{-1/2} + \beta^{-1}\nn^{-1/2}).
\end{aligned}
\end{equation*}

\end{proof}
\end{theorem}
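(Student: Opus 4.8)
The plan is to mirror the OPAUC proof (Theorem~\ref{theorem:4}), exploiting the fact that the instance-wise reformulation of $\tp$ decouples into a positive-sample term and a negative-sample term. First I would use Lemma~\ref{lemma:4} three times to strip off the outer $\min_{(a,b)}$, the inner $\max_{\gamma}$, and the innermost $\min_{s,s'}$: since the difference of two minima (resp.\ maxima) is at most the supremum over a common argument of the pointwise difference, one gets
\[
\sup_{f\in\mathcal{F}}\Big(\min_{a,b}\max_{\gamma}\min_{s,s'}\underset{\bm{z}\sim\mathcal{D}_{\mathcal{Z}}}{\mathbb{E}}[G_{tp}]-\min_{a,b}\max_{\gamma}\min_{s,s'}\underset{\bm{z}\sim S}{\ehat}[G_{tp}]\Big)\le\sup_{f,a,b,\gamma,s,s'}\Big(\underset{\bm{z}\sim\mathcal{D}_{\mathcal{Z}}}{\mathbb{E}}[G_{tp}]-\underset{\bm{z}\sim S}{\ehat}[G_{tp}]\Big),
\]
where all parameters now range over their (bounded) feasible sets, which I would enlarge to $\gamma\in[-1,1]$, $(a,b)\in[0,1]^2$, $s\in\Omega_{s}$, $s'\in\Omega_{s'}$. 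The data-independent summand $-\gamma^2$ cancels in the difference, and the weights $y/(\alpha p)$, $(1-y)/[\beta(1-p)]$ turn the population (and empirical) expectation into an average over positive and negative samples \emph{separately}. Hence the right-hand side is bounded by the positive deviation $\sup_{f,a,\gamma,s}\big(\mathbb{E}_{\bm{x}\sim\mathcal{D}_{\mathcal{P}}}P(f,a,\gamma,\bm{x},s)-\hat{\mathbb{E}}_{\bm{x}_i\sim\mathcal{P}}P(f,a,\gamma,\bm{x}_i,s)\big)$ plus the analogous negative deviation in $N$, with $P=(\alpha s+[(f(\bm{x})-a)^2-2(1+\gamma)f(\bm{x})-s]_+)/\alpha$ and $N=(\beta s'+[(f(\bm{x}')-b)^2+2(1+\gamma)f(\bm{x}')-s']_+)/\beta$.

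Next I would apply the standard symmetrization plus Rademacher uniform-convergence bound (Theorem~3.3 of \cite{schapire2012foundations}) to each of the two deviation terms. Since $f(\bm{x}),a,b\in[0,1]$ and $\gamma,s,s'$ lie in fixed bounded intervals, the arguments of the two $[\cdot]_+$ operators lie in a fixed bounded set, so $P$ has range $O(1/\alpha)$ and $N$ has range $O(1/\beta)$; combined with a union bound over the two pieces this produces the concentration contribution $O(\alpha^{-1}\np^{-1/2}+\beta^{-1}\nn^{-1/2})$ in the statement.

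It then remains to bound the two empirical Rademacher complexities. For the positive class I would: (i) pull out the $1/\alpha$ prefactor and discard the additive $\alpha s$ at cost $O(1/\sqrt{n_+})$ via Lemma~\ref{lemma:7}; (ii) remove $[\cdot]_+$ using Talagrand's contraction (Lemma~\ref{lemma:5}), as $[\cdot]_+$ is $1$-Lipschitz; (iii) expand $(f(\bm{x})-a)^2-2(1+\gamma)f(\bm{x})-s=f^2(\bm{x})-2(1+\gamma+a)f(\bm{x})+a^2-s$ and split the supremum additively; (iv) control the pieces: $\mathbb{E}_{\bm{\sigma}}[\sup_f\tfrac1{n_+}\sum\sigma_i f^2(\bm{x}_i)]\le 2\hat{\Re}_{+}(\mathcal{F})$ since $x\mapsto x^2$ is $2$-Lipschitz on $[0,1]$, each linear-in-$f$ piece is $O(\hat{\Re}_{+}(\mathcal{F}))$ via the sign symmetry of the Rademacher variables together with Lemma~\ref{lemma:6} (with $|\gamma|\le1$ and $a\le1$ pulled out of the supremum), and the $a^2-s$ piece is $O(1/\sqrt{n_+})$ by Lemma~\ref{lemma:7}. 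This gives $O(\hat{\Re}_{+}(\mathcal{F}))+O(1/\sqrt{n_+})$ for the positive class; the negative class is handled identically — it is exactly the negative term of the OPAUC proof up to the $\beta$ normalization — yielding $O(\hat{\Re}_{-}(\mathcal{F}))+O(1/\sqrt{n_-})$. Summing the two complexity contributions with the concentration terms gives the claimed bound.

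The main obstacle is not conceptual: the decoupling via Lemma~\ref{lemma:4} together with the instance-wise structure is precisely what lets the classical Rademacher machinery apply (no pairwise interdependence). Rather, the delicate part is the bookkeeping of constants — one must check that the feasible ranges $\Omega_{\gamma}=[\max\{b-1,-a\},1]$, $\Omega_{s}=[-4,1]$, $\Omega_{s'}=[0,5]$ keep the argument of $x\mapsto x^2$ inside $[0,1]$ (so the $2$-Lipschitz contraction is legitimate) and the arguments of the $[\cdot]_+$ operators inside a fixed bounded set (so that both the McDiarmid range bound and the $1$-Lipschitz contraction are valid), and that the $\alpha^{-1}$ and $\beta^{-1}$ prefactors are propagated correctly through the contraction steps. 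Relative to OPAUC, the only genuinely new work is that the positive term now also carries the Average-Top-$k$ $[\cdot]_+$ structure, so the positive-side argument must be upgraded to the form that was previously needed only on the negative side; once that symmetry is observed, the remainder follows as in the OPAUC case.
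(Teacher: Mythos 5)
Your proposal follows essentially the same route as the paper's proof: Lemma~\ref{lemma:4} to reduce the nested min--max--min gap to a supremum of pointwise deviations, the split into the positive term $P$ and negative term $N$, Theorem~3.3 of \cite{schapire2012foundations} for each class, and then Talagrand contraction plus Lemmas~\ref{lemma:6}--\ref{lemma:7} to bound the resulting Rademacher complexities term by term, with the negative side reused from the OPAUC argument. The only difference is cosmetic bookkeeping of the $\alpha^{-1},\beta^{-1}$ prefactors, which the paper treats with the same looseness, so your proof is correct and matches the paper's approach.
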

\end{appendices}

\end{document}